\documentclass{article}

\PassOptionsToPackage{round}{natbib} 
\usepackage[preprint]{neurips_2026}

\usepackage{amsmath}      
\usepackage{amssymb}      
\usepackage{amsthm}       
\usepackage{mathtools}    

\usepackage{booktabs}     
\usepackage{array}        
\usepackage{multirow}     
\usepackage{tabularx}     
\usepackage{graphicx}     
\usepackage{float}        
\usepackage{rotating}     
\usepackage[normalem]{ulem} 
\usepackage{caption}      
\usepackage{subcaption}   
\usepackage{xcolor}       
\usepackage{tikz}         
\usetikzlibrary{arrows.meta, positioning, decorations.pathreplacing, fit, backgrounds}

\definecolor{wrongbg}{RGB}{253,235,235}
\definecolor{wrongfg}{RGB}{176,0,32}
\definecolor{fixbg}{RGB}{232,245,233}
\definecolor{fixfg}{RGB}{27,120,55}
\definecolor{royalviolet}{RGB}{85,37,150}
\newcommand{\wrongbox}[2]{%
  \begin{minipage}[t]{\linewidth}
    \colorbox{wrongbg}{\parbox{\dimexpr\linewidth-2\fboxsep}{%
      \textbf{\textcolor{wrongfg}{#1}}\\[2pt]#2}}
  \end{minipage}}
\newcommand{\fixbox}[2]{%
  \begin{minipage}[t]{\linewidth}
    \colorbox{fixbg}{\parbox{\dimexpr\linewidth-2\fboxsep}{%
      \textbf{\textcolor{fixfg}{#1}}\\[2pt]#2}}
  \end{minipage}}
\definecolor{covviolet}{RGB}{109,76,169}
\definecolor{covblue}{RGB}{21,101,192}
\definecolor{teacheramber}{RGB}{191,119,23}
\definecolor{contrastamber}{RGB}{192,144,80}

\usepackage[utf8]{inputenc}
\usepackage[T1]{fontenc}
\usepackage{microtype}

\usepackage{enumitem}

\usepackage{algorithm}
\usepackage{algpseudocode}

\usepackage[compact]{titlesec}
\titlespacing*{\section}{0pt}{2.0ex plus .3ex minus .2ex}{1.2ex plus .2ex}
\titlespacing*{\subsection}{0pt}{1.6ex plus .3ex minus .2ex}{0.9ex plus .2ex}
\titlespacing*{\subsubsection}{0pt}{1.2ex plus .2ex minus .2ex}{0.7ex plus .2ex}

\newcommand\blfootnote[1]{%
  \begingroup\renewcommand\thefootnote{}\footnote{#1}\addtocounter{footnote}{-1}\endgroup}

\usepackage{hyperref}
\definecolor{linkblue}{RGB}{21,84,158}
\hypersetup{
  colorlinks=true,
  citecolor=linkblue,
  linkcolor=linkblue,
  urlcolor=linkblue,
  breaklinks=true,
}
\usepackage{cleveref}     

\usepackage{tikz}
\usetikzlibrary{shapes,backgrounds,patterns}

\DeclareRobustCommand{\wrongmarker}{%
  \tikz[baseline=-0.55ex]\node[circle,fill=wrongfg,draw=white,
    line width=.3pt,inner sep=1.45pt] {};}
\DeclareRobustCommand{\projectionmarker}{%
  \tikz[baseline=-0.65ex]\node[diamond,aspect=1.05,fill=fixfg,draw=white,
    line width=.3pt,inner sep=1.65pt] {};}
\DeclareRobustCommand{\successmarker}{%
  \tikz[baseline=-0.65ex]\node[circle,fill=fixfg,draw=white,text=white,
    line width=.3pt,inner sep=.75pt,font=\fontsize{5}{5}\selectfont]
    {$\checkmark$};}
\DeclareRobustCommand{\teachermarker}{%
  \tikz[baseline=-0.7ex]\node[star,star points=5,star point ratio=2.15,
    fill=teacheramber,draw=white,line width=.3pt,inner sep=1.8pt] {};}

\newcommand{\proj}{\mathrm{proj}}

\DeclareMathOperator*{\argmax}{arg\,max}

\newif\ifshowtodos \showtodostrue
\newcommand{\todo}[2][]{\ifshowtodos{\color{blue}\textbf{[TODO%
  \if\relax\detokenize{#1}\relax\else\ (#1)\fi:]}~#2}\fi}

\theoremstyle{plain}
\newtheorem{theorem}{Theorem}
\newtheorem{lemma}{Lemma}
\newtheorem{proposition}{Proposition}
\newtheorem{corollary}{Corollary}
\theoremstyle{definition}
\newtheorem{definition}{Definition}
\newtheorem{assumption}{Assumption}
\theoremstyle{remark}
\newtheorem{remark}{Remark}

\crefname{assumption}{Assumption}{Assumptions}
\Crefname{assumption}{Assumption}{Assumptions}

\crefname{section}{Sec.}{Secs.}
\Crefname{section}{Sec.}{Secs.}
\crefname{subsection}{Sec.}{Secs.}
\Crefname{subsection}{Sec.}{Secs.}
\crefname{subsubsection}{Sec.}{Secs.}
\Crefname{subsubsection}{Sec.}{Secs.}
\crefname{table}{Tab.}{Tabs.}
\Crefname{table}{Tab.}{Tabs.}
\crefname{figure}{Fig.}{Figs.}
\Crefname{figure}{Fig.}{Figs.}
\crefname{appendix}{Appendix}{Appendices}
\Crefname{appendix}{Appendix}{Appendices}

\title{Projected Distillation: On Learning to Solve Hard Problems from Close, Correct Targets}

\date{}

\author{%
\begin{tabular}{cccc}
    \textbf{Shripad V. Deshmukh}\textsuperscript{2}\thanks{Work done during an internship at AWS AI Labs.}
    & \textbf{Christian Bock}\textsuperscript{1}
    & \textbf{Kangrui Ruan}\textsuperscript{1}
    & \textbf{Abhijeet Awasthi}\textsuperscript{1} \\[0.8em]
  \end{tabular}\\
\begin{tabular}{ccc}
      \textbf{Vijay Lingam}\textsuperscript{1}
    & \textbf{Rein Houthooft}\textsuperscript{1}
    & \textbf{Manish Prajapat}\textsuperscript{1}\thanks{Corresponding author: \texttt{manispj@amazon.com}}
\end{tabular}\\[1em]
\textsuperscript{1}AWS AI Labs \qquad \textsuperscript{2}University of Massachusetts
}

\begin{document}

\maketitle
\blfootnote{Accepted at two NeurIPS 2026 workshops:
\href{https://neurips.cc/virtual/2026/workshop/137567}{\emph{Towards Test-Time Continual Learning
Agents}} (TTCL) and \href{https://neurips.cc/virtual/2026/workshop/137615}{\emph{Foundations of LLM
Post-Training in Changing Environments}} (FLLMPT).}

\begin{abstract}
\looseness -1 On-policy learning is central to post-training: training on a model's own generations limits distribution shift and helps preserve existing capabilities. 
On hard problems, however, on-policy RL provides no reward-based learning signal when all attempts fail. On-policy distillation provides next-token supervision from a teacher, but on hard problems, if a rollout goes wrong early, subsequent supervision is conditioned on prefixes that already contain the mistake.
Correct behavior can instead come from a gold or teacher solution, but such off-policy targets can differ substantially from the model's generations and, as we show, may erode existing capabilities.
We therefore treat on-policyness as a continuum and propose projected distillation, which minimally edits the student’s failed attempts into correct targets. 
The method uses contrastive updates to favor corrections over failed attempts and refreshes these corrections as the student evolves, enabling near-on-policy learning even when all student attempts fail.
Theoretically, we prove that the student's learning regret grows with the edit size: it scales linearly with the horizon under minimal correction, as in DAgger, and quadratically under full-answer replacement, as in behavior cloning. 
Empirically, across two students (\texttt{Qwen3-8B} and \texttt{OLMo-2-7B}) and five domains spanning math, code, science, and tool use, projected distillation gives the highest area under the specialization--retention frontier (best on $8$ of $9$ cells against GRPO and OPD, and $7$ of $9$ against SPOT and SODA). 
Our ablations highlight the importance of online corrections, the contrastive loss, and on-policyness: moving targets farther off-policy steadily worsens the specialization–retention trade-off, with retention falling from $0.916$ to $0.744$ when minimal corrections are replaced by student-agnostic gold solutions.
\end{abstract}

\section{Introduction}
\label{sec:intro}
Post-training adapts general-purpose language models to reason, follow instructions, and use tools. 
The dominant approaches are reinforcement learning from verifiable rewards (RLVR; \citealp{shao2024deepseekmath, schulman2017ppo}), distillation from a stronger teacher \citep{agarwal2023gkd}, and supervised fine-tuning \citep{wei2022flan}. 
Across these approaches, the training signal is most effective when it remains on-policy, that is, close to the model's own output distribution.  
Updates near this distribution can strengthen the target behavior while preserving capabilities acquired during pretraining
\citep{agarwal2023gkd, yang2024sdft}. 
Post-training is particularly important for models with only a few billion parameters: although they are inexpensive to run and easy to deploy, they often lack the proficiency required for specialized tasks. 
We study how to teach such models to solve hard problems that they currently answer only rarely, if at all, without degrading their existing capabilities.

A fundamental challenge of on-policy RL is that on very hard problems the on-policy signal disappears. RLVR requires
the model to succeed occasionally, so when every sampled attempt fails the reward is zero
everywhere and nothing is reinforced. On-policy distillation (OPD; \citealp{agarwal2023gkd})
keeps the update on the model's own rollouts and corrects them token by token to be closer to a
response of a stronger teacher.
However, the student rollout of a difficult task contains many mistakes: feedback on the later tokens is
computed on top of an already-wrong prefix, so it largely patches earlier errors instead of
teaching the step at hand~\citep{zhang2026full}. Additionally, the
later-token feedback in OPD is inefficient: as the student learns, the prefix on which later-token feedback was given and training budget was spent becomes less probable.

Alternatively, the training signal can come from a complete correct answer supplied from outside,
either ground-truth data or a stronger teacher's solution. Unlike OPD, which corrects the student's
own rollout token by token, this target replaces the rollout entirely: the whole training budget
goes on producing a correct answer rather than on patching the one the student wrote. Such a target
is drawn from a distribution other than the student's own, so it is off-policy. The further it lies
from what the student would produce, the more the student specializes on the task while eroding its
abilities elsewhere. This suggests keeping the target a complete
correct answer but choosing the one closest to what the student already writes, so the supervision
stays near-on-policy.

\begin{figure}[t]
  \centering
  \begin{minipage}[c]{0.4\linewidth}
    \centering
    \includegraphics[width=\linewidth]{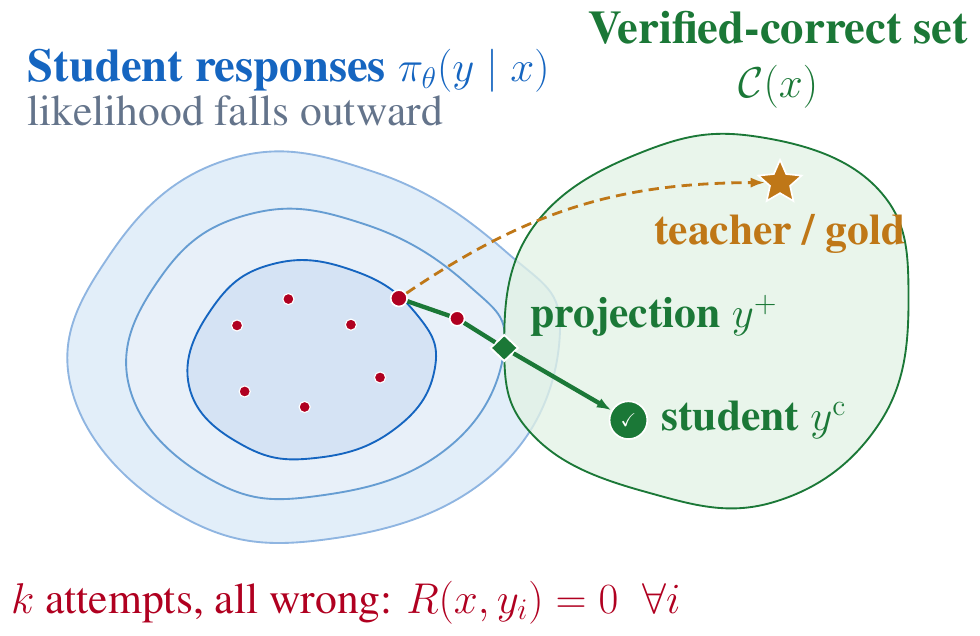}
  \end{minipage}\hfill
  \begin{minipage}[c]{0.58\linewidth}
    \caption{\textbf{Learning from the nearest correct answer.} Correct answers form a set
    $\mathcal C(x)$, but the student assigns them very different likelihoods. On a hard problem,
    its $k$ on-policy attempts (\wrongmarker) are all wrong and earn no reward. A teacher or gold
    answer (\teachermarker) supplies correctness, but may lie far from what the student writes.
    Projected distillation instead minimally edits a failed attempt into a newly computed projection
    (\projectionmarker) and refreshes that target as the student changes. The green path illustrates
    repeated online rounds: successive wrong rollouts may move closer until the student produces a
    correct answer itself (\successmarker). The resulting small edits keep supervision near-policy;
    writing $m$ for edit size and $H$ for the generation horizon, our analysis bounds their drift
    by a term proportional to $mH$.}
    \label{fig:motivation}
  \end{minipage}
\end{figure}

Several recent works pursue similar ideas. SPOT~\citep{lin2026spot},
SODA~\citep{chen2026soda}, and SuperCorrect~\citep{yang2025supercorrect} fix the student's
mistakes with a strong black-box teacher and learn from the corrections. What they leave
unexamined is the on-policyness of the correction itself, how it changes with the size of the
edit, and whether refreshing the corrections online against the evolving student matters. We focus on these aspects. We call the recipe of minimal, online corrections \emph{projected
distillation} (\Cref{fig:motivation}) and study it along five axes: how well it learns from close,
correct solutions relative to other post-training approaches; why the edit must be minimal to keep
the correction near-on-policy; why the corrections must be refreshed against the still-training
student; which objective learns best from a correction; and how the projector itself can be
improved. A regret analysis in the imitation-learning setting shows the importance of the edit size.

\noindent\textbf{Our contributions.}
\begin{enumerate}[leftmargin=1.6em,itemsep=3pt,topsep=4pt,parsep=0pt]
  \item We introduce \emph{projected distillation}, a post-training recipe that learns from minimal,
  online corrections of the student's mistakes. It needs only the correction text, so any capable
  black-box model can serve as the projector. Across two students and five benchmarks it attains the
  best specialization--retention trade-off, and retains prior abilities where standard post-training
  forgets on continual-learning metrics (\Cref{sec:rq1-main}).

  \item Holding correctness fixed, we show that the correction must be both minimal and online.
  Pushing it off-policy monotonically worsens the trade-off (\Cref{sec:rq2-onpolicyness}), and
  freezing the pool instead of refreshing it makes it collapse as the student drifts
  (\Cref{sec:rq3-refresh}). We further give an efficient training objective, a diff-localized
  contrast with self-training on correct rollouts (\Cref{sec:rq4-loss}), and a refinement procedure
  that improves the projections from a mid-capacity ($32$B) teacher without updating its parameters
  (\Cref{sec:rq5-teacher}).

  \item We theoretically analyze projected distillation through imitation learning. The edit size
  interpolates between DAgger's linear-in-horizon regret and behavior cloning's quadratic regret
  (\Cref{sec:theory}).
\end{enumerate}

\section{Related Work}
\label{sec:related}
\noindent\textbf{Methods for improving a model on hard problems.}
Approaches differ in where the correct target comes from and how far that target lies from the
model's own distribution; together these largely determine the specialization--retention trade-off.
Reinforcement learning from verifiable rewards (RLVR; GRPO~\citep{shao2024deepseekmath},
PPO~\citep{schulman2017ppo}) draws the target from the model's own rollouts, but on the
$\mathrm{pass}@k{=}0$ set every rollout is wrong and no signal remains. On-policy distillation
(GKD~\citep{agarwal2023gkd}) and self-distillation~\citep{yang2024sdft} keep the update on those
rollouts while matching a teacher's token distribution, which requires teacher logits. Supervised
fine-tuning on an external gold or teacher solution supplies correctness directly but is maximally
off-policy, so fitting it forgets. Between these, there is scope to manufacture a correct target based on the student's attempt. This opens up the possibility of treating the correct solutions along a continuum from on-policy to off-policy with respect to the model's current generation style. Our work studies how far the edits may go, whether they need to be recomputed as
the model changes, and what objective to use for effective learning.

\noindent\textbf{Closest work.}
The literature presents approaches similar to ours: LeMa~\citep{an2023lema}
mines corrections once, from an assortment of models rather than the student, with free-form
rewrites and an objective that uses response likelihood maximization. SPOT~\citep{lin2026spot} gathers mistakes of the base model, rectifies the mistakes using minimal edits from an oracle, creates a preference dataset and learns with a reward-based binary cross-entropy objective.
SODA~\citep{chen2026soda} creates mistake--correction pairs without emphasizing minimal edits, performs an SFT warm-up, and then, with the SFT'ed policy fixed, learns from static preference pairs. SuperCorrect~\citep{yang2025supercorrect} distills thought templates
from a teacher and applies collaborative DPO over the teacher's correction traces. We also work in the hard-task regime where the model fails. However, our paper focuses mainly on the regime of online learning from close, correct targets to teach the student how to effectively solve hard tasks. We perform a controlled study of the edit sizes used to provide corrections (\Cref{sec:rq2-onpolicyness}), static vs.\ online-refreshed pools of corrections (\Cref{sec:rq3-refresh}) and effective learning objectives for black-box learning (\Cref{sec:rq4-loss}). In addition, we theoretically analyze how the edit-size influences the learning regret, which distinguishes our work from the previous ones. Concurrent work~\citep{karan2026sampling} explores self-projection with an MCMC sampler, in which the student itself rewrites its response with the reference solution in view; \Cref{app:selfproj} compares it with black-box projection on its math benchmark.

\section{Projected Distillation}
\label{sec:method}
We study post-training on problems the model rarely solves, where its own samples earn no reward and
correctness has to come from outside. 

\noindent\textbf{Setup.}\label{sec:prelim}
Given a \emph{problem} $x$, a solution is a token sequence $y \in \mathcal{V}^{\le H}$ drawn from a vocabulary $\mathcal{V}$ with a generation horizon $H$.
The \emph{student} produces $y$ autoregressively via $\pi_{\theta_S}(y \mid x) = \prod_{t} \pi_{\theta_S}(y_t \mid x, y_{<t})$, where $y_t$ denotes the $t$-th token, $y_{<t}$ its prefix, and $\pi_{\mathrm{ref}}$ the frozen initialization of the student.
Correctness is judged by a \emph{verifier} $R(x,y) \in \{0,1\}$ (e.g., exact-match checking or unit tests), inducing the set of accepted solutions $\mathcal{C}(x) = \{y : R(x,y) = 1\}$.
Our focus is on problems that lie beyond the student's current ability, i.e., $\mathrm{pass}@k{=}0$ set, for which all $k$ sampled attempts fail.
Further, in \Cref{sec:theory}, we formalize the generation process using a dynamic-token sequence MDP to provide theoretical insights.

\noindent\textbf{Projection.}\label{sec:method-projection} 
The accepted set $\mathcal{C}(x)$ typically contains many solutions, but these solutions can differ substantially in likelihood under the student.
Training toward a solution the student already considers likely requires less movement of its output distribution and therefore better preserves unrelated capabilities~\citep{kirkpatrick2017ewc, kumar2022finetuning}.
We therefore choose, among the verifier-accepted solutions in $\mathcal C(x)$, the one with highest likelihood under the student.

\begin{definition}[Projection] 
\label{def:projection} 
For a student $\pi_{\theta_S}$ and a problem $x$, the \emph{projection} is the correct answer the
student is most likely to produce, $\proj(x) \in \argmax_{y'\in\mathcal{C}(x)}
\log\pi_{\theta_S}(y'\mid x)$.
\end{definition}

Practically, searching the true projection within $\mathcal{C}(x)$ is intractable. We propose to
find a close approximation with the help of the student's attempt itself. The intuition behind this
is that a wrong attempt is a sample from the student and is therefore already likely under it; a
correction that preserves most of its tokens preserves most of that likelihood, while the verifier
places the result in $\mathcal{C}(x)$. Minimal editing is thus a tractable surrogate for
\Cref{def:projection}, with the verifier supplying correctness and the retained tokens supplying
proximity. We test the surrogate directly in \Cref{sec:rq2-onpolicyness}, varying the edit size and
recording the likelihood shift
$\Lambda=\log\pi_{\theta_S}(y^+\mid x)-\log\pi_{\theta_S}(y^-\mid x)$ that each size demands.
A \emph{projector} receives the problem and the attempt in context and is prompted to make the
smallest change that corrects the answer (examples in \Cref{app:proj-examples}; prompts in
\Cref{app:prompts}).

\noindent\textbf{Learning from projection.} \label{sec:method-blackbox}
In each round we prompt the student on the hard problems and collect a batch of responses. Most of
them are wrong. We write $y^-$ for a wrong response and $y^+$ for its projection, and
$y^{\mathrm c}$ for the responses the student gets correct on its own. We have access only to the text the projector returns, not to its logits, so we cannot
match its token distribution. We learn from the pair $(y^-,y^+)$ by preference instead. The
projection edits only a small part of the attempt, so $y^-$ and $y^+$ agree almost everywhere, and a
contrast over their full length would spend most of its gradient on text that was already correct.
We therefore restrict the contrast to the tokens the edit changed. A token-diff of $y^-$ against
$y^+$ marks a \emph{mistake span} $E^-$ and a \emph{fix span} $E^+$; for instance, for
$y^-=\mathtt{ABCD}$ and $y^+=\mathtt{AXCZ}$, $E^-=\{\mathtt{B},\mathtt{D}\}$ and
$E^+=\{\mathtt{X},\mathtt{Z}\}$.
We apply a DPO
contrast~\citep{rafailov2023dpo} to those spans alone, using the frozen initialization
$\pi_{\mathrm{ref}}$ as reference and $\beta$ for its strength:
\begin{equation}
  \label{eq:proj-dpo}
  \mathcal{L}_{\mathrm{Diff\text{-}Local\text{-}DPO}}
  = -\,\mathbb{E}_{x}\,\log\sigma\!\Big(
      \beta\big[\textstyle\sum_{t\in E^+}\log\tfrac{\pi_{\theta_S}(y^+_t\mid x,\,y^+_{<t})}{\pi_{\mathrm{ref}}(y^+_t\mid x,\,y^+_{<t})}
      -\sum_{t\in E^-}\log\tfrac{\pi_{\theta_S}(y^-_t\mid x,\,y^-_{<t})}{\pi_{\mathrm{ref}}(y^-_t\mid x,\,y^-_{<t})}\big]\Big),
\end{equation}
Here $\sigma$ is the logistic function. The contrast constrains only how likely $y^+$ is relative to
$y^-$, so we add a term that supervises the student's own correct responses directly:
\begin{equation}
  \label{eq:proj-objective}
\!\!\!\mathcal{L}_{\mathrm{Projected\text{-}Distillation}} = \mathcal{L}_{\mathrm{Diff\text{-}Local\text{-}DPO}} + \mathcal{L}_{\mathrm{SFT}},
  ~\text{where}~~
  \mathcal{L}_{\mathrm{SFT}} = -\,\mathbb{E}_{x}\,\textstyle\sum_t \log\pi_{\theta_S}(y^{\mathrm c}_t\mid x, y^{\mathrm c}_{<t}).\!
\end{equation}

\begin{figure}[t]
  \centering
  \includegraphics[width=.5\linewidth]{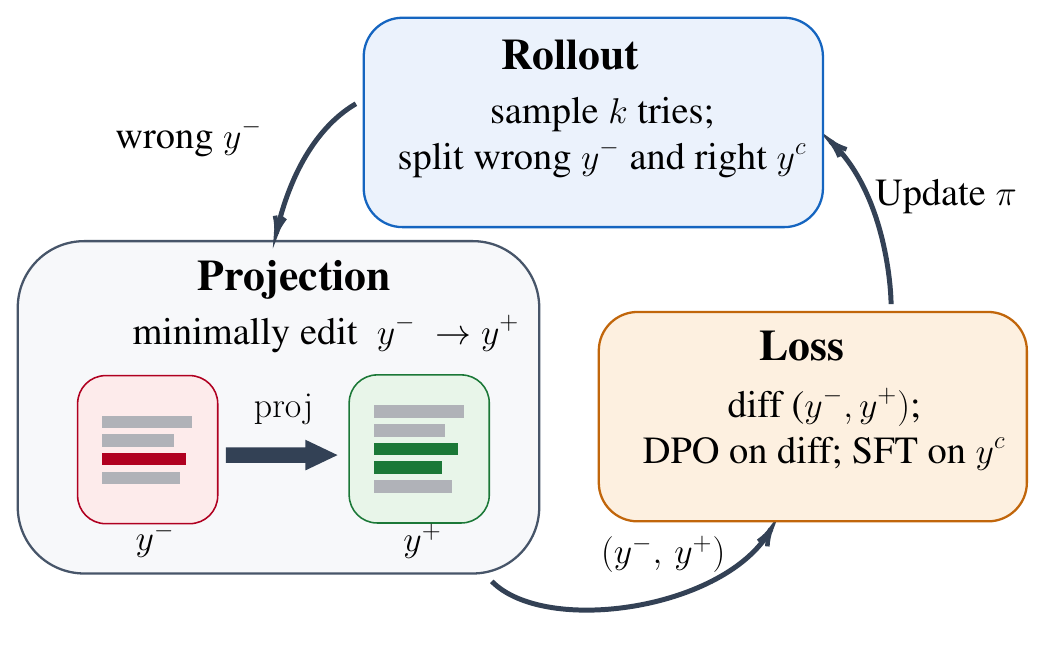}
  \caption{\textbf{One round of projected distillation.} The student answers each problem $k$
  times, and its responses split into the wrong ones $y^-$ and the correct ones $y^{\mathrm c}$. A
  projector rewrites a wrong response into a correct one $y^+$ with the smallest edit that works.
  The two are aligned token by token, and the preference term acts only on the spans that differ
  (\textcolor{wrongfg}{removed}, \textcolor{fixfg}{added}), while $y^{\mathrm c}$ is supervised
  directly. The updated student begins the next round, so its corrections are recomputed as it
  changes.}
  \label{fig:method-flow}
\end{figure}

Therefore, the wrong responses supply the contrast and the correct
ones the supervised term. The next round re-prompts the student on the same problems and projects
whatever it gets wrong then, so the targets follow the current policy. \Cref{fig:method-flow} summarizes one round of our approach. \Cref{sec:rq3-refresh} compares
rebuilding the targets each round against reusing a static correction pool, and \Cref{sec:rq4-loss} compares our loss~(\ref{eq:proj-objective}) against five alternatives on the same data.

\section{Experiments}
\label{sec:experiments}
First, we compare projected distillation with standard post-training approaches in the
hard-task regime, and ask whether what a student learns on one benchmark's hard set carries over to
the hard problems of other benchmarks (\Cref{sec:rq1-main}). The remaining sections examine the recipe itself, each
component of which could have been set differently: the correction may change more or fewer of the
attempt's tokens (\Cref{sec:rq2-onpolicyness}), it may be computed once or recomputed as the student
learns (\Cref{sec:rq3-refresh}), and several objectives may be applied to it (\Cref{sec:rq4-loss}).
We vary each in turn while holding the remainder fixed, then ask whether the projections can
be improved further by test-time scaling (\Cref{sec:rq5-teacher}), and finally whether the same model
can serve as its own projector when given privileged information (\Cref{sec:rq6-selfproj}).

\noindent\textbf{Experimental Setup:}
We study two instruction-tuned students from different families, \textbf{Qwen3-8B}
\citep{qwen2025qwen3} and \textbf{OLMo-2-7B}~\citep{olmo2_2024}, on five benchmarks
spanning math, code, tools and science: GSM8K~\citep{cobbe2021gsm8k},
MATH~\citep{hendrycks2021math}, LiveCodeBench~v6~\citep{jain2024livecodebench},
SciKnowEval~\citep{feng2024sciknoweval}, and ToolAlpaca~\citep{tang2023toolalpaca}.
For each (student, benchmark) pair we train \emph{only} on its hard set: the problems on which the
base student fails all $k{=}16$ sampled attempts at temperature $0.8$. The hard set is
mined from the benchmark's training split (\Cref{app:hardset}), and retention is measured in-domain on a disjoint held-out slice of
the same benchmark.
All methods use matched budgets for optimization updates and per-step generation. The remaining configuration is in \Cref{app:exp-details}, with prompt
templates in \Cref{app:prompts} and further training details in \Cref{app:train-extra}. Unless noted,
results are averaged over $3$ training seeds, and \Cref{app:eval-resolution} separates training-seed
from evaluation-seed spread.
\Cref{app:math-full} additionally evaluates the trained checkpoints on suites they never saw, and
\Cref{app:transfer} on the hard problems of benchmarks they never saw.

\looseness-1 \noindent\textbf{Baselines.}
We compare against four baselines, each probing a different axis of the post-training design space.
\textbf{GRPO} \citep{shao2024deepseekmath} performs on-policy RL with verifiable rewards, where
positive rewards are scarce.
\textbf{OPD} \citep{agarwal2023gkd} performs white-box distillation from the token-level
logits of a strong teacher conditioned on the student's own prefixes, using a same-family 32B
teacher (Qwen3-32B or OLMo-2-32B) for each student; our method uses these same 32B teachers as its
projection models, and both methods are given the gold answer as privileged information.
\textbf{Surgical Post-Training} (SPOT; \citealp{lin2026spot}) minimally corrects the student's
initial attempts and trains on the resulting \emph{frozen} preference data, while
\textbf{Semi-On-policy Distillation} (SODA; \citealp{chen2026soda}) fine-tunes the student on
teacher corrections and then contrasts a teacher response against a frozen copy of that fine-tuned
student.
Against SPOT and SODA every method draws its corrections from one shared black-box model, which
isolates the learning algorithm from the projector.

\noindent\textbf{Metrics.}
We report \textbf{specialization}, pass@1 on the hard set, and \textbf{retention}, pass@1 on the
held-out slice. A method can raise either at the other's expense, so neither effectively ranks methods on its
own. Our headline metric is the \textbf{area under the specialization--retention Pareto frontier}
(AUPF), the retention a method sustains as it specializes further, anchored at a shared untrained
base. AUPF is low both for a method that never specializes and for one that specializes by giving
up retention. We report it alongside three continual-learning summaries of the retention
trajectory, backward transfer \citep{lopezpaz2017gem}, forgetting \citep{chaudhry2018rwalk} and
training-averaged accuracy \citep{kemker2018measuring}, and retention-safe specialization, the
highest specialization reached while retention stays within $\varepsilon$ of base. Full definitions
are in \Cref{app:rq1-full}.

\subsection{RQ1: How does projected distillation compare with baselines?}
\label{sec:rq1-main}
We compare against the four baselines, drawing our corrections first from a same-family $32$B
teacher and then from a shared black-box projector, and report AUPF in \Cref{tab:rq1-aupf}, the
continual-learning metrics in \Cref{tab:rq1-cl}, and the pooled frontiers in
\Cref{fig:pareto-all}.

\begin{figure}[t]
  \centering
  \includegraphics[width=.8\linewidth]{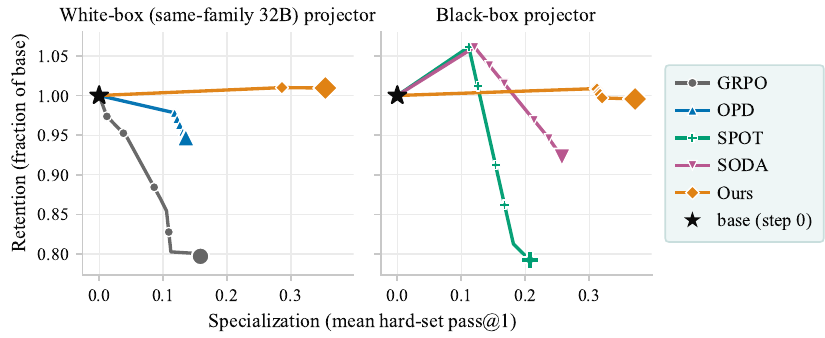}
  \caption{\textbf{Aggregate specialization--retention Pareto frontier}, one curve per method
  pooled over both students, all benchmarks and seeds, split by projector class. \textbf{Left:} same-family ($32$B) projectors. \textbf{Right:} black-box
  projector. GRPO uses no projector. $y$-axis is retention measured as held-out accuracy as a fraction of the untrained base. $x$-axis is specialization measured as average hard-set accuracy. Each curve is the monotone Pareto frontier of the pooled
  trajectory anchored at $\star=(0,1.0)$;
  \emph{up-and-to-the-right is better}. 
  \textbf{Every baseline ends below its untrained base}: GRPO and OPD decline from the outset,
  while SPOT and SODA exceed base early and then fall below it. Projected distillation holds
  retention at base across the range and attains the highest specialization.}
  \label{fig:pareto-all}
\end{figure}

\begin{table}[t]
  \centering
  \caption{\textbf{RQ1: area under the specialization--retention frontier (AUPF).} Higher is
  better (\Cref{app:rq1-full}); mean\,$\pm$\,s.e.\ over $3$ seeds. GRPO uses no projector and is
  shown for reference. The OLMo-2-7B LiveCodeBench cell is omitted, since no method moves
  specialization appreciably above zero there. \textbf{Bold} $=$ best per row within each projector
  block. Ours attains the highest AUPF on $8$ of $9$ cells with the same-family teacher and $7$ of
  $9$ with the black-box projector.}
  \label{tab:rq1-aupf}
  \footnotesize\setlength{\tabcolsep}{4.2pt}
  \begin{tabular}{@{}ll c cc ccc@{}}
    \toprule
    & & RLVR & \multicolumn{2}{c}{same-family $32$B teacher} & \multicolumn{3}{c}{black-box projector} \\
    \cmidrule(lr){3-3}\cmidrule(lr){4-5}\cmidrule(lr){6-8}
    & & GRPO & OPD & Ours & SPOT & SODA & Ours \\
    \midrule
    \multirow{5}{*}{\rotatebox[origin=c]{90}{\tiny{Qwen3-8B}}}
     & GSM8K   & 0.061{\tiny$\pm$.011} & 0.145{\tiny$\pm$.006} & \textbf{0.205{\tiny$\pm$.012}} & 0.209{\tiny$\pm$.030} & 0.231{\tiny$\pm$.016} & \textbf{0.262{\tiny$\pm$.020}} \\
     & MATH    & 0.000{\tiny$\pm$.000} & 0.003{\tiny$\pm$.001} & \textbf{0.008{\tiny$\pm$.002}} & 0.145{\tiny$\pm$.010} & \textbf{0.159{\tiny$\pm$.009}} & 0.035{\tiny$\pm$.010} \\
     & LCB     & 0.000{\tiny$\pm$.000} & 0.068{\tiny$\pm$.018} & \textbf{0.138{\tiny$\pm$.003}} & 0.095{\tiny$\pm$.005} & 0.091{\tiny$\pm$.001} & \textbf{0.123{\tiny$\pm$.006}} \\
     & SciKnow & 0.045{\tiny$\pm$.008} & 0.027{\tiny$\pm$.002} & \textbf{0.179{\tiny$\pm$.005}} & 0.124{\tiny$\pm$.008} & \textbf{0.155{\tiny$\pm$.013}} & 0.124{\tiny$\pm$.005} \\
     & Tool    & 0.248{\tiny$\pm$.152} & 0.128{\tiny$\pm$.004} & \textbf{0.676{\tiny$\pm$.013}} & 0.594{\tiny$\pm$.010} & 0.635{\tiny$\pm$.014} & \textbf{0.667{\tiny$\pm$.015}} \\
    \midrule
    \multirow{4}{*}{\rotatebox[origin=c]{90}{\tiny{OLMo-2-7B}}}
     & GSM8K   & 0.195{\tiny$\pm$.084} & \textbf{0.417{\tiny$\pm$.011}} & 0.217{\tiny$\pm$.014} & 0.186{\tiny$\pm$.007} & 0.210{\tiny$\pm$.004} & \textbf{0.336{\tiny$\pm$.067}} \\
     & MATH    & 0.009{\tiny$\pm$.003} & 0.014{\tiny$\pm$.002} & \textbf{0.017{\tiny$\pm$.002}} & 0.013{\tiny$\pm$.001} & 0.015{\tiny$\pm$.002} & \textbf{0.016{\tiny$\pm$.001}} \\
     & SciKnow & 0.198{\tiny$\pm$.022} & 0.060{\tiny$\pm$.013} & \textbf{0.236{\tiny$\pm$.010}} & 0.116{\tiny$\pm$.011} & 0.101{\tiny$\pm$.001} & \textbf{0.234{\tiny$\pm$.015}} \\
     & Tool    & 0.171{\tiny$\pm$.111} & 0.319{\tiny$\pm$.038} & \textbf{0.626{\tiny$\pm$.018}} & 0.222{\tiny$\pm$.031} & 0.488{\tiny$\pm$.029} & \textbf{0.682{\tiny$\pm$.038}} \\
    \bottomrule
  \end{tabular}
\end{table}

Projected distillation attains the highest AUPF on eight of nine cells against GRPO and OPD and on
seven of nine against SPOT and SODA (\Cref{tab:rq1-aupf}), under both projector settings. The
margins are largest on SciKnowEval and ToolAlpaca, reaching $0.179$ against OPD's $0.027$ on Qwen
SciKnowEval and $0.676$ against $0.128$ on Qwen ToolAlpaca. The baselines are not short of
specialization; SPOT and SODA both reach real hard-set accuracy (\Cref{tab:rq1-cl}). However, they retain poorly. In \Cref{fig:pareto-all} our frontier runs flat along the untrained base
across the whole specialization range while every baseline's falls away from it. Pooled across
projectors, our mean AUPF is $0.256$ with the mid-capacity same-family $32$B teacher and $0.275$
with the black-box projector, so the mid-capacity projector performs slightly worse. In
\Cref{sec:rq5-teacher}, we investigate how to improve learning with such a teacher. 

The two baseline families fall short for different reasons. GRPO has little to learn from, since
the hard set is mined at $\mathrm{pass}@k{=}0$; continued sampling still recovers occasional
correct rollouts, enough for the highest specialization of any method on OLMo SciKnowEval, but its
retention is the lowest there as well (\Cref{tab:rq1-main}), because reinforcing a rare correct
rollout sharpens the student onto that solution at the expense of what it produces elsewhere~\citep{yue2025rlvr}. SPOT and SODA hold correct targets throughout,
so they are not limited by correctness. They are rather limited by how far
the correction moves the attempt and how often it is rebuilt, as shown further in sections~\ref{sec:rq2-onpolicyness} and~\ref{sec:rq3-refresh}. Three AUPF result cells are not in our favour. On Qwen MATH, SPOT and SODA fit the hard-set answers more
closely, SODA reaching $0.136$ specialization against our $0.031$, and give up some retention for
it ($0.664$ against our $0.760$; \Cref{tab:rq1-main}). On Qwen SciKnowEval SODA leads on both
axes, in the one cell whose retention carries a standard error of $0.131$ across seeds, several
times that of any other method there. On OLMo GSM8K, OPD reaches $0.496$ specialization against
our $0.236$ at comparable retention, so token-level feedback apparently keeps its effectiveness
there.

Training on a hard set risks catastrophic forgetting due to limited model capacity, so \Cref{tab:rq1-cl} reports four
continual-learning measures, each summarizing a whole run.
Training-averaged specialization and retention give the mean of each axis over the run: ours is
highest on specialization ($\bar x = 0.264$ and $0.276$ against SODA's $0.224$) and holds retention
at base ($\bar y_{\mathrm{norm}} = 1.000$ and $0.995$ against $0.776$), so what it gains is not paid
for out of retention during training. Backward transfer is the net change in retention
from the untrained base to the final checkpoint, and ours are the only values that are not significantly negative
($0.009$ and $-0.004$). Forgetting is the drop from a run's highest retention to its
final retention, so it registers ability lost during training even when the endpoint looks
acceptable: ours are $0.084$ and $0.107$ against $0.509$ for SPOT and $0.355$ for SODA, which
specialize early and lose retention through the rest of training (\Cref{fig:learning-curves}).
Retention-safe specialization, the most a run reaches at any checkpoint whose retention is within
$\varepsilon$ of base, is highest for both our variants at either tolerance. Retention here is measured on a held-out slice of the trained benchmark;
\Cref{app:math-full} evaluates the math-trained checkpoints on suites they never saw.

\begin{table}[t]
  \centering
  \setlength{\tabcolsep}{12pt}
  \small
  \caption{\textbf{RQ1: learning one benchmark's hard set, tested on other benchmarks'.} Qwen3-8B
  trained on the MATH hard set with each black-box method and GRPO. Sampled pass@1 over $16$
  generations per problem at $T{=}0.8$; pass@16 is the share of problems solved at least once in
  those $16$. Every method is run for one training seed, evaluated every $50$ updates, and entered
  at its own best value for each column, so the checkpoint is selected on the same problems the
  number is reported on. The rule is applied identically to the baselines and to ours.}
  \label{tab:transfer-main}
  \begin{tabular}{lcccc}
  \toprule
  & \multicolumn{2}{c}{MATH held-out, hard ($127$)} & \multicolumn{2}{c}{Other benchmarks, hard ($2{,}885$)} \\
  \cmidrule(lr){2-3}\cmidrule(lr){4-5}
  Method & pass@1 & pass@16 & pass@1 & pass@16 \\
  \midrule
  Base & 0.013 & 0.150 & 0.011 & 0.114 \\
  GRPO & 0.020 & 0.205 & 0.011 & 0.108 \\
  SPOT & 0.020 & 0.197 & 0.012 & 0.114 \\
  SODA & 0.031 & 0.291 & 0.016 & 0.127 \\
  \textbf{Ours} & \textbf{0.056} & \textbf{0.354} & \textbf{0.020} & \textbf{0.154} \\
  \bottomrule
  \end{tabular}
\end{table}

\noindent\textbf{Transfer to other benchmarks' hard problems.}
We test whether performance improvement on one benchmark's hard problems extends to hard problems from other benchmarks. For this experiment, we train Qwen3-8B with each black-box method and with GRPO on $501$ MATH problems that it
fails on all $16$ attempts. We evaluate it on $127$ held-out hard MATH
problems not seen during training, and on $2{,}885$ hard problems from OlympiadBench~\citep{he2024olympiadbench},
Omni-MATH~\citep{gao2024omnimath}, AIME 1983--2023 and DAPO-Math~\citep{yu2025dapo}. More details of the setup are provided in \Cref{app:transfer}.

We report the results in \Cref{tab:transfer-main}. Projected distillation shows stronger performance than the baselines in hard-to-hard generalization. On the held-out MATH problems it raises pass@1 from $0.013$ to $0.056$ and
pass@16 from $0.150$ to $0.354$, against $0.031$ and $0.291$ for the best baseline. On the four
other benchmarks it raises pass@1 from $0.011$ to $0.020$ ($+0.009\pm0.001$) and pass@16 from
$0.114$ to $0.154$, where the best baseline gains $0.005$ in pass@1 and SPOT and GRPO gain nothing.

What the student learns on one benchmark's hard problems therefore carries to hard problems it never
trained on.
In \Cref{app:transfer}, we provide a breakdown of the gain across individual benchmarks and an analysis of a projected distillation model trained on Omni-MATH.

\subsection{RQ2: How does edit size influence learning in projected distillation?}
\label{sec:rq2-onpolicyness} 
A correction may be correct and still lie far from what the student would have produced, and this
question concerns the cost of that distance. We construct four arms that hold correctness fixed and
vary only how much the correction rewrites the attempt: a minimal edit, two progressively larger
rewrites obtained by re-prompting the projector, and \emph{gold}, the dataset's GSM8K reference
solution, which discards the attempt altogether. Each arm trains a separate Qwen3-8B student from
the same base under identical hyperparameters, so only the target $y^+$ varies; the rejected side
remains the student's own wrong rollout $y^-$. The projector rather than the prompt settles how much
text is replaced, so we identify the arms by the edit size they realize and measure the distance a
correction imposes by the likelihood shift it demands,
$\Lambda=\log\pi_{\theta}(y^+\mid x)-\log\pi_{\theta}(y^-\mid x)$, reported as $|\Lambda|$.

\begin{table}[t]
  \centering
  \footnotesize
  \setlength{\tabcolsep}{3.6pt}
  \caption{\textbf{RQ1: continual-learning metrics}, pooled over students, benchmarks and
  seeds (mean\,$\pm$\,s.e.), with arrows marking the better direction. $\bar x$ and $\bar y_{\mathrm{norm}}$ are specialization and retention averaged over
  training, BWT is backward transfer against the base, $F$ the drop from a run's retention peak,
  and $\mathrm{SafeSpec}_\varepsilon$ the most specialization reached at any checkpoint whose
  retention is within $\varepsilon$ of base (\Cref{app:rq1-full}). \textbf{Bold} $=$ best, \underline{underline} $=$
  second best per column. Our variants specialize most while holding
  retention at base, where the baselines either barely specialize or forget.}
  \label{tab:rq1-cl}
  \begin{tabular}{lcccccc}
    \toprule
    Method & $\bar x$\,$\uparrow$ & $\bar y_{\mathrm{norm}}$\,$\uparrow$ & BWT\,$\uparrow$ & $F$\,$\downarrow$ & SafeSpec$_{2\%}$\,$\uparrow$ & SafeSpec$_{5\%}$\,$\uparrow$ \\
    \midrule
    GRPO           & 0.114{\scriptsize$\pm$0.034} & 0.826{\scriptsize$\pm$0.048} & $-0.203${\scriptsize$\pm$0.058} & 0.221{\scriptsize$\pm$0.057} & 0.035{\scriptsize$\pm$0.012} & 0.056{\scriptsize$\pm$0.022} \\
    OPD            & 0.118{\scriptsize$\pm$0.022} & 0.951{\scriptsize$\pm$0.031} & $-0.062${\scriptsize$\pm$0.043} & 0.136{\scriptsize$\pm$0.041} & 0.101{\scriptsize$\pm$0.019} & 0.135{\scriptsize$\pm$0.026} \\
    SPOT           & 0.184{\scriptsize$\pm$0.031} & 0.706{\scriptsize$\pm$0.074} & $-0.380${\scriptsize$\pm$0.082} & 0.509{\scriptsize$\pm$0.064} & 0.147{\scriptsize$\pm$0.020} & 0.149{\scriptsize$\pm$0.020} \\
    SODA           & 0.224{\scriptsize$\pm$0.031} & 0.776{\scriptsize$\pm$0.053} & $-0.266${\scriptsize$\pm$0.059} & 0.355{\scriptsize$\pm$0.044} & 0.140{\scriptsize$\pm$0.021} & 0.190{\scriptsize$\pm$0.022} \\
    \midrule
    Ours (32B)     & \underline{0.264}{\scriptsize$\pm$0.042} & \textbf{1.000}{\scriptsize$\pm$0.020} & \textbf{0.009}{\scriptsize$\pm$0.028} & \textbf{0.084}{\scriptsize$\pm$0.015} & \textbf{0.202}{\scriptsize$\pm$0.040} & \textbf{0.295}{\scriptsize$\pm$0.048} \\
    Ours (bb)      & \textbf{0.276}{\scriptsize$\pm$0.042} & \underline{0.995}{\scriptsize$\pm$0.024} & \underline{$-0.004$}{\scriptsize$\pm$0.034} & \underline{0.107}{\scriptsize$\pm$0.027} & \underline{0.160}{\scriptsize$\pm$0.037} & \underline{0.235}{\scriptsize$\pm$0.040} \\
    \bottomrule
  \end{tabular}
\end{table}

\begin{figure}[t]
  \centering
  \includegraphics[width=.75\linewidth]{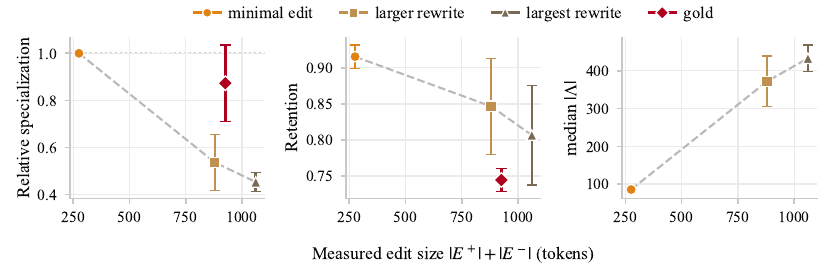}
  \caption{\textbf{RQ2: what a larger edit costs.} Qwen3-8B student per arm on the GSM8K hard
  set, varying only how much a verified-correct correction rewrites the student's own attempt
  (mean\,$\pm$\,s.e.\ over $3$ seeds). Arms sit at their \emph{measured} edit size, since the
  projector rather than the prompt sets it, and the dashed line joins the three matched arms.
  \textbf{Left to right:} hard-set pass@1 relative to the minimal-edit arm, absolute held-out
  pass@1, and the median likelihood shift $|\Lambda|$. Both behavioural axes fall as the edit grows while $|\Lambda|$ rises. \emph{Gold}
  appears on the first two panels only: it keeps none of the attempt, so its edit count understates
  how far it sits, and its $|\Lambda|$ of $3969$ is off the scale.}
  \label{fig:rq2}
\end{figure}

\Cref{fig:rq2} shows that among the three arms that differ only in edit
size, relative specialization falls to $0.54$ and then $0.45$ of the minimal-edit arm and retention from
$0.916$ to $0.807$, while $\Lambda$ falls from $-86$ to $-433$. The arms are equally correct, so what the larger rewrites cost
is distance from the student. \emph{Gold} sits outside that ordering: its edit is $926$ tokens,
between the two rewrites, yet it specializes better than either ($0.87$ of the minimal edit) and
returns the lowest retention of any arm ($0.744$). The token count understates its distance, since gold keeps none of
the attempt and counts small only because a reference solution is short; its likelihood shift does not
say the same: at $\Lambda=+3969$ it is the one arm the student finds \emph{more} likely than its own
attempt, the sign opposite to every rewrite, and that value is dominated by the length difference
rather than lying on the rewrites' scale. We therefore read edit size as the quantity
governing the trade-off among matched rewrites, with the minimal edit best balanced, and report
\emph{gold} on the behavioral axes only; \Cref{app:rq2-mechanism} gives the per-arm values and the
mechanism measurements.
\subsection{RQ3: How much does refreshing corrections online help?}
\label{sec:rq3-refresh}

Projected distillation recomputes its corrections at every round, so that each target is an edit
of an attempt the current student has just produced. Naturally, we ask what role this mechanism plays in projected distillation learning. For comparison, we gather a \emph{static
pool}: minimal-edit corrections of the initial checkpoint's responses, mined once before training
and reused for the remainder of the run. This is the same regime occupied by methods like SPOT~\citep{lin2026spot} and LeMa~\citep{an2023lema}. The projector for both static and online-refreshed arms is
the same black-box projector. We report results pooled over all model$\times$benchmark cells and three
seeds in figure~\ref{fig:rq3-agg}.
\begin{figure}[t]
  \centering
  \begin{minipage}[c]{0.5\linewidth}
    \centering
    \includegraphics[width=\linewidth]{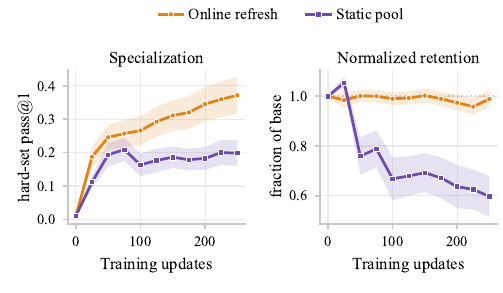}
  \end{minipage}\hfill
  \begin{minipage}[c]{0.48\linewidth}
    \captionof{figure}{\textbf{RQ3: online refresh against a static pool}, pooled over all
    model$\times$benchmark cells and $3$ seeds. Both arms take minimal edits from the same
    black-box projector; the static pool mines them once against the base checkpoint and reuses
    them for the run, the regime of mined-once correction methods. Refreshing holds retention at
    base throughout, while the pool climbs briefly above base and then falls to
    $0.60$.}
    \label{fig:rq3-agg}
  \end{minipage}
\end{figure}

Refreshing the corrections is essential. Online refresh improves specialization steadily throughout
training, reaching $0.37$ pass@1, whereas the static pool rises over the first few updates and then
flattens near $0.20$ (\Cref{fig:rq3-agg}, left). Retention separates more sharply: online refresh
holds it at base for the whole run, while the static pool climbs slightly above base over the first
$25$ steps, falls to $0.76$ by step $50$, and declines from there to $0.60$ (\Cref{fig:rq3-agg},
right). The fall sets in once the pool has been reused for several rounds. A static
pool cannot follow
the student to the failures it develops afterwards, so repeated passes fit it to a fixed set of
targets, while its generalization degrades.

\subsection{RQ4: How best to learn from projections of wrong responses?}
\label{sec:rq4-loss}
\begin{figure}[h]
  \centering
  \begin{minipage}[c]{0.4\linewidth}
    \centering
    \includegraphics[width=\linewidth]{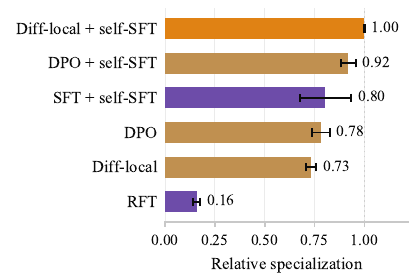}
  \end{minipage}
  \hfill
  \begin{minipage}[c]{0.58\linewidth}
    \captionof{figure}{\textbf{RQ4: loss recipes.} Specialization of each recipe as a fraction of
    diff-local DPO $+$ self-SFT, the recipe adopted in \Cref{sec:method-blackbox}. Qwen3-8B, GSM8K hard set, mean and stderr on $3$ seeds. The adopted recipe is
    \textcolor[RGB]{224,130,20}{orange}, the other contrastive arms
    \textcolor{contrastamber}{amber}, and the arms without a contrast
    \textcolor{covviolet}{violet}. Held-out retention stays between $0.88$ and $0.94$ for every
    arm, so the ordering reflects specialization and not differences in forgetting. SFT on the
    projection carries a wider spread across seeds.}
    \label{fig:rq4-loss}
  \end{minipage}
\end{figure}

In a typical round of projected distillation, we have access to wrong attempts $y^-$, their corrections $y^+$ and 
correct student responses $y^{\mathrm c}$. These three
quantities admit several learning objectives, and in this section we study an effective objective for learning. The
simplest is supervised fine-tuning on the projection, optionally combined with a term on the
student's own correct responses. Keeping only that second term is rejection fine-tuning
(\emph{RFT}; \citealp{yuan2023rft}), which learns from the student's successes alone and ignores the
projection. Further, because the wrong attempt is also retained, a contrastive
objective is available: \emph{DPO} \citep{rafailov2023dpo} over the full response, again
optionally combined with the self-SFT term. Since a minimal edit leaves most of the response
unchanged, the contrast may instead be localized to the edited spans (\eqref{eq:proj-dpo}). We
compare the six resulting recipes on the GSM8K hard set and report the outcome in \Cref{fig:rq4-loss}.

The projection is necessary: training on the student's own correct responses alone recovers $0.16$ of the
adopted recipe, since the hard set is mined at $\mathrm{pass}@k{=}0$ and the student's rollouts stay correct on
about $6\%$ of attempts through the run, so RFT has almost nothing to train on. Among objectives
that do use the projection, the contrast is the more reliable choice: SFT on the projection reaches
$0.81$ on average but carries a standard error of $0.078$ across seeds, three to sixteen times
that of any contrastive arm. The self-SFT term then settles the ordering among those arms. A
contrast constrains the likelihood of $y^+$ only relative to $y^-$, and a localized one does so
over a handful of tokens, so both gain from a term that raises the likelihood of correct text
outright, and the localized one gains more: without it the diff-localized contrast underperforms
the full-sequence one ($0.73$ against $0.78$), and with it the ordering reverses ($1.00$ against
$0.92$). We adopt the diff-localized contrast with self-SFT as the default recipe
(\Cref{sec:method-blackbox}); per-arm objectives and absolute values are in
\Cref{app:rq4-recipe}.
\subsection{RQ5: Can we improve the quality of projections for better learning?}
\label{sec:rq5-teacher}
\begin{figure}[t]
  \centering
  \begin{minipage}[c]{0.55\linewidth}
    \centering
    \includegraphics[width=\linewidth]{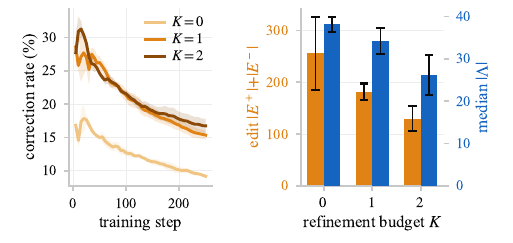}
  \end{minipage}\hfill
  \begin{minipage}[c]{0.43\linewidth}
    \captionof{figure}{\textbf{RQ5: refining the projector at test time.} A frozen Qwen3-32B
    projector iteratively refines its own correction and the Qwen3-8B student trains on the result
    (GSM8K hard set, $K\in\{0,1,2\}$, $3$ seeds), with the running correction rate on the left and
    edit size and off-policyness against the refinement budget on the right. More refinement corrects more wrong attempts
    with a smaller edit, and the projector is never finetuned.}
    \label{fig:rq5-refinement}
  \end{minipage}
\end{figure}

\looseness -1 In this section, we explore ways to make projections from a mid-capacity projector more effective. The
natural approach would be to fine-tune the projector; we instead take a computationally efficient test-time route and ask a
frozen Qwen3-32B to refine its own correction over several rounds. At each round it is re-prompted
with its previous correction and the spans it edited, and across rounds we retain the valid
correction most likely under the base student, measured by log-likelihood over the edited tokens.
We sweep the refinement budget over $K\in\{0,1,2\}$ on a Qwen3-8B student and the GSM8K hard set
with the default recipe.

Refinement improves the corrections in two ways (Fig.~\ref{fig:rq5-refinement}). More of the student's failures become
correctable, the correction rate rising from $9.2\%$ at $K{=}0$ to $15.4\%$ at $K{=}1$ and
$16.7\%$ at $K{=}2$ (\Cref{fig:rq5-refinement}, left), so a larger part of the hard set contributes a training
pair at all. At the same time the edits become smaller, from $256$ to $130$ tokens
(\Cref{fig:rq5-refinement}, right), which by the account of \Cref{sec:rq2-onpolicyness} should
make each target easier to learn from. Downstream, hard-set pass@1 rises from $0.37$ to $0.44$ while
retention stays near $0.91$--$0.93$, with most of the gain arriving in the first round. Test-time
refinement can therefore improve the projections, leading to effective performance. Additional details are in \Cref{app:rq5-refine}.

\subsection{RQ6: Can we use the same model for projection given privileged information?}
\label{sec:rq6-selfproj}

\begin{table}[t]
  \centering
  \setlength{\tabcolsep}{20pt}
  \normalsize
  \caption{\textbf{RQ6: a self-projector recovers about $40\%$ of the specialization.} Qwen3-8B on
  the GSM8K hard set, $250$ updates, mean $\pm$ s.e.\ over $3$ seeds. The self-projector is a frozen
  copy of the student's own base weights, given the verified answer. \emph{Gold} replaces the attempt
  with the dataset's reference solution. Further arms and the full setup are in
  \Cref{app:selfprojection}.}
  \label{tab:rq6-selfproj}
  \begin{tabular}{lcc}
  \toprule
  Projector & Hard set & Held-out \\
  \midrule
  none (untrained student) & 0.000 & $\approx$0.955 \\
  the student itself & 0.219{\scriptsize$\pm$0.012} & 0.924{\scriptsize$\pm$0.021} \\
  \emph{gold} solution & 0.489 & 0.744 \\
  black-box projector & \textbf{0.560} & 0.916 \\
  \bottomrule
  \end{tabular}
\end{table}

Use of the same model to provide the learning signal is a popular choice in post-training. One well-known example is on-policy self-distillation (OPSD)~\citep{zhao2026opsd}. Similarly, in this section, we test how much we can gain by replacing external projections with the model's own self-projections given the ground truth answer. We experiment with Qwen3-8B on the GSM8K dataset. The full setup is provided in \Cref{app:selfprojection}.

\Cref{tab:rq6-selfproj} reports the result. Self-projection does produce usable corrections: the
student lifts its hard set from $0.000$ to $0.219$ while retention stays at $0.924$, so a model that
is told the answer can repair enough of its own reasoning to learn from it. It does not, however,
substitute for a stronger projector. The black-box projector reaches $0.560$ on the same problems,
and even the student-agnostic \emph{gold} solution reaches $0.489$. Concurrent
work~\citep{karan2026sampling} proposes a self-projection scheme of its own, in which an MCMC
sampler has the student rewrite its response with a reference solution in view, and
\Cref{app:selfproj} compares it against black-box projection on its own math benchmark. 

\section{A Patch-MDP Analysis of Projected Distillation}
\label{sec:theory}
\noindent\textbf{Setup.}
We model generation as a deterministic \emph{patch MDP} with a token budget $H$ (\Cref{def:patch}).
A state is a prompt together with the tokens generated so far. An action is a \emph{patch}, a span
of tokens appended to that state, and every token carries a reward in $[0,1]$. A patch may span
several tokens, but the reward and every bound below decompose token by token
(\Cref{lem:patchadd}), so decisions are indexed by token position throughout and $H$ counts them. The student is a policy
$\pi$ in a class $\Pi$, with expected return $J(\pi)$ and optimum $J^\star$. Choosing action $w$ at
state $s$ forfeits some return against the best action available in $s$. We denote this \emph{optimality gap} with
$\Delta(s,w)$, which is upper-bounded by a constant $\Delta_{\max}$~(\Cref{def:gap}). Owing to
telescoping (\Cref{lem:patchadd}), a full rollout's shortfall can be accounted for one token at a
time. The regret $J^\star-J(\pi)$ then equals $H$ times the average per-token optimality gap, taken over $\pi$'s state occupancy (\Cref{lem:pdl}). We write $m$ for the \emph{edit size} (\Cref{def:edit}): the tokens
the correction changes relative to the attempt, counted over the $H$ decisions \Cref{def:patch}
gives both and so including the padding that equalizes their lengths. It relates to the token diff
$|E^+|+|E^-|$ of \Cref{sec:method}, the quantity \Cref{sec:rq2-onpolicyness} reports, by
$m=|E^+|+|E^-|+\big||E^+|-|E^-|\big|=2\max(|E^+|,|E^-|)$, the extra term being the change in answer
length. \Cref{app:setup,app:arms-proofs} give the
full treatment.\\
\noindent\textbf{Learning protocol.}
Training runs for $M$ \emph{rounds}. Each round is one batch of rollouts, the corrections made to
them, and the gradient update that follows. $\pi_i$ is the policy entering round $i$. Ideally we
would query the projector at every student mistake as the student generates a response. That would
let us minimize the gap under the student's own state occupancy. This is exactly the DAgger
\citep{ross2011dagger} approach, where an expert labels states under the learner's own occupancy,
leading to a learning regret of $O(H)$, linear in the horizon. Per-mistake labeling as an LLM
student responds is infeasible, since it demands a projector call and a forward pass at every
mistake. In projected distillation, we instead correct the rollout once and perform a single forward
pass over the response. This minimizes the optimality gap under a state occupancy drifted
away from the student's own. \\
\noindent\textbf{Fitting error.}
The projector edits the student's wrong response such that for each state in the rewritten response, the next token has zero optimality gap. We define two further quantities: (i) the policy fit error $\tilde\varepsilon$, the irreducible
error in fitting the best policy in the class to the expert projections; and (ii) the optimization
term $\tilde\gamma_M$, the student's excess disagreement over that best policy, which vanishes for a
no-regret learner (\Cref{def:regret}). Since a single disagreement costs at most $\Delta_{\max}$, the gap loss at these prefixes is bounded by
$\Delta_{\max}$ times the disagreement rate (\Cref{lem:reduction}).\\
We assume that the drift caused by earlier token edits at a given decision point has a smooth, bounded effect on the optimality gap. We denote the maximum per-token effect on the optimality gap by $\eta$.
\begin{assumption}[Prefix sensitivity]
\label{asmp:eta}
Write $e(s,s')$ for the number of insertions and deletions carrying the generated tokens of one
history into those of the other (\Cref{def:edit}). The map
$s\mapsto\mathbb{E}_{w\sim\pi(\cdot\mid s)}[\Delta(s,w)]$ is $\eta$-Lipschitz in that count:
\[
  \bigl|\mathbb{E}_{w\sim\pi(\cdot\mid s')}[\Delta(s',w)]-\mathbb{E}_{w\sim\pi(\cdot\mid s)}[\Delta(s,w)]\bigr|
  \;\le\;\eta\,e(s,s')\qquad\forall\,\pi\in\Pi,\ \forall\,s,s'.
\]
\end{assumption}

\begin{theorem}[Best-iterate suboptimality of projected distillation]
\label{thm:single}
Let \Cref{asmp:eta} hold, let the projector be as above, and let $m$ be the mean edit size over
the rounds. Then after $M$ rounds
\[
  \min_{i\le M}\big[J^\star-J(\pi_i)\big]
  \;\le\;H\,\Delta_{\max}\,\big(\tilde\varepsilon+\tilde\gamma_M\big)\;+\;\eta\,m\,H .
\]
\end{theorem}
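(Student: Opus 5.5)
The plan is a DAgger-style reduction with an added drift term. Start from the performance-difference identity of \Cref{lem:pdl}: for each round $i$,
\[
J^\star-J(\pi_i)=H\,\mathbb{E}_{s\sim d_{\pi_i}}\mathbb{E}_{w\sim\pi_i(\cdot\mid s)}[\Delta(s,w)],
\]
where $d_{\pi_i}$ is the token-averaged state occupancy of $\pi_i$. The best iterate is no worse than the average iterate, so
\[
\min_{i\le M}\big[J^\star-J(\pi_i)\big]\le \frac1M\sum_{i\le M}\big[J^\star-J(\pi_i)\big].
\]
It therefore suffices to bound the average over rounds of the expected gap under the student's own occupancy.

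The main step is to move from the student's occupancy to the occupancy that is actually trained on: the prefixes of the projected response $y^+_i$. For each rollout $y^-$ drawn from $\pi_i$ and each decision index $t$, couple the on-policy state $s_t$ (the prompt plus $y^-_{<t}$) with the projected state $s_t^+$ (the prompt plus $y^+_{<t}$). Since the projection rewrites only $m_i$ of the $H$ padded positions (\Cref{def:edit}), $e(s_t,s_t^+)$ is at most the number of edited positions before $t$, and in particular $e(s_t,s^+_t)\le m_i$. By \Cref{asmp:eta}, the expected gap of $\pi_i$ at $s_t$ exceeds that at $s_t^+$ by at most $\eta\,e(s_t,s_t^+)\le\eta m_i$. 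Averaging over $t$ and multiplying by $H$ gives
\[
J^\star-J(\pi_i)\le H\,\mathbb{E}_{s\sim\tilde d_i}\mathbb{E}_{w\sim\pi_i}[\Delta(s,w)]+\eta\,m_i\,H,
\]
where $\tilde d_i$ is the occupancy over prefixes of the projected responses. Averaging over rounds turns $\frac1M\sum_i m_i$ into the mean edit size $m$.

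Next, bound the on-projection term. At every prefix of $y^+$ the projector's next token has zero gap, by the fitting-error paragraph. A disagreement with that token costs at most $\Delta_{\max}$, so \Cref{lem:reduction} gives
\[
\mathbb{E}_{\tilde d_i}[\Delta]\le\Delta_{\max}\cdot(\text{disagreement rate of }\pi_i\text{ on }\tilde d_i).
\]
Averaged over rounds, this disagreement rate splits into the loss of the best policy in $\Pi$ on the realized sequence of datasets, which is $\tilde\varepsilon$, plus the learner's excess over it, which is $\tilde\gamma_M$ (\Cref{def:regret}). This mirrors the DAgger/no-regret argument. Combining the two bounds yields the claim.

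The step most likely to be the obstacle is the coupling. The per-token drift must be charged as $\eta m$ per decision rather than as $\eta$ times the full mismatch count along a cascade, since summing growing prefix distances naively could give $\eta m H$ per decision and hence $\eta mH^2$ overall. I would try first to use the bound $e(s_t,s_t^+)\le m_i$, which holds uniformly in $t$ because insertions and deletions are counted over the whole padded response. This gives exactly $\eta m_i$ per decision and $\eta m_i H$ in total. I would also check that $\tilde\varepsilon$ and $\tilde\gamma_M$ are defined on $\tilde d_i$, the projected-prefix distribution, so that the two pieces fit together.
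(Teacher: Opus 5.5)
Your chain of inequalities is the paper's own: best iterate $\le$ average, the identity of \Cref{lem:pdl}, the decision-by-decision coupling of $s_t$ with the corrected prefix under \Cref{asmp:eta} (the paper's \Cref{lem:transfer}), then \Cref{lem:reduction}, then the definition of $\tilde\varepsilon$ and $\tilde\gamma_M$. The step that fails as you justify it is $e(s_t,s_t^+)\le m_i$.

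Your stated reason, that $e(s_t,s_t^+)$ is at most the number of edits lying before $t$, is false. Both prefixes are cut at the same index $t$, and that cut can separate tokens the full alignment matches. Take distinct tokens $y^-=(a_1,\dots,a_L)$ and $y^+=(b,a_1,\dots,a_L)$, padded to $H>L$ with the terminal symbol. Then $m=2$: one insertion plus one padding deletion at the end. For every $1\le t\le L$ only one edit lies before $t$, yet $\mathrm{LCS}(y^-_{1:t},y^+_{1:t})=t-1$, so $e_t=2$. The reason is that $y^-_t$ is matched to $y^+_{t+1}$, which lies outside the prefix.

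So neither the count of edits before $t$ nor the token diff $|E^+|+|E^-|=1$ bounds $e_t$. The bound $e_t\le m$ survives only because $m$ also charges the length change. Your positional picture, in which the projection ``rewrites $m_i$ of the $H$ padded positions,'' is also not what $m$ counts. Counted positionally, this single early insertion mismatches every later position, which is exactly the $\Theta(H)$-per-decision cascade you were worried about. Saying that ``insertions and deletions are counted over the whole padded response'' asserts the uniform bound rather than proving it.

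The missing ingredient is the paper's \Cref{lem:budget}, a cut argument on an optimal alignment:
\begin{enumerate}
\item Fix a largest monotone matching $\Phi$ of the two padded rollouts, so $|\Phi|=|\mathrm{LCS}(y^-,y^+)|$ and $m=2H-2|\Phi|$.
\item The pairs $(u,v)\in\Phi$ with $u,v\le t$ form a common subsequence of the two length-$t$ prefixes.
\item $\max(u,v)$ is strictly increasing along $\Phi$, so at most $H-t$ pairs have $\max(u,v)\ge t+1$.
\item Hence $|\mathrm{LCS}(y^-_{1:t},y^+_{1:t})|\ge|\Phi|-(H-t)$, and
\[
  e_t\;\le\;2t-2|\Phi|+2(H-t)\;=\;2H-2|\Phi|\;=\;m .
\]
\end{enumerate}
With this lemma supplying the per-decision bound, the rest of your argument is correct. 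It yields the theorem via $\bar e\le\frac1M\sum_i m_i=m$.
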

The first term collects the fitting and optimization costs. The second is drift, which does not decrease with $M$; a smaller edit is the only lever on it.\\
\noindent\textbf{Horizon regimes.}
For a minimal edit the edit size is
$m=O(1)$, making the drift $O(\eta H)$ and hence the regret $O(H)$, linear in the horizon. A
complete rewrite of the response leads to $m=\Theta(H)$, making the drift $O(\eta H^{2})$ and
therefore the regret $O(H^{2})$, quadratic in the horizon (\Cref{cor:regimes}). In a controlled
experiment in \Cref{sec:rq2-onpolicyness} we vary $m$ (\Cref{def:edit}) and observe that the
student's performance suffers as the edit size increases. The two endpoints recover the rates of DAgger and of behavior cloning, though not those algorithms
themselves.

\section{Conclusion}
\label{sec:conclusion}

We introduced \textit{projected distillation}, a post-training recipe to enable lower-capacity LLMs to solve hard problems beyond their current reach. We demonstrated that standard on-policy post-training methods such as RLVR and OPD struggle to specialize on the hard task, while training on gold solutions leads to catastrophic forgetting. Our proposed approach avoids these issues by maintaining rich learning targets close to the student's current policy. This is done by iteratively transforming the student’s failed attempts into correct solutions through minimal edits.
Our extensive empirical analyses showed that our approach achieved a better specialization--retention Pareto frontier than previous approaches. We showed that it is important to learn from minimal edits rather than from larger edits or full rewrites, and from the mistakes of the latest checkpoint rather than only those of the base model. Theoretical analysis with a smoothness assumption on the effects of token edits described a spectrum where minimal edits lead to DAgger-like linear-in-horizon regret, while full rewrites lead to behavior-cloning-like quadratic regret. Thus, our work provides empirical and theoretical evidence on the utility of the underlying continuum of on-policyness for continual learning on hard tasks.

\noindent\textbf{Limitations and future work.} 
We instantiate the projection as a minimal token edit; 
exploring likelihood-based alternatives is also an important future direction. The uniform per-token sensitivity assumed in \Cref{asmp:eta} may be strong and needs further investigation. Our recipe assumes a cheap correctness check, leaving open the domains without a readily available verifier. We also study single-turn tasks, where the student produces one response and the projector edits that response. Multi-turn tasks, in which a failure can originate several turns before it becomes visible in the outcome, would require deciding which turn to project, and we leave this to future work.

\bibliographystyle{plainnat}
\bibliography{main}

\begin{thebibliography}{41}
\providecommand{\natexlab}[1]{#1}
\providecommand{\url}[1]{\texttt{#1}}
\expandafter\ifx\csname urlstyle\endcsname\relax
  \providecommand{\doi}[1]{doi: #1}\else
  \providecommand{\doi}{doi: \begingroup \urlstyle{rm}\Url}\fi

\bibitem[Agarwal et~al.(2023)Agarwal, Vieillard, Zhou, Stanczyk, Ramos, Geist, and Bachem]{agarwal2023gkd}
Rishabh Agarwal, Nino Vieillard, Yongchao Zhou, Piotr Stanczyk, Sabela Ramos, Matthieu Geist, and Olivier Bachem.
\newblock On-policy distillation of language models: Learning from self-generated mistakes.
\newblock \emph{International Conference on Learning Representations}, 2023.
\newblock URL \url{https://arxiv.org/abs/2306.13649}.

\bibitem[An et~al.(2023)An, Ma, Lin, Zheng, Lou, and Chen]{an2023lema}
Shengnan An, Zexiong Ma, Zeqi Lin, Nanning Zheng, Jian-Guang Lou, and Weizhu Chen.
\newblock Learning from mistakes makes {LLM} better reasoner.
\newblock \emph{arXiv preprint arXiv:2310.20689}, 2023.
\newblock URL \url{https://arxiv.org/abs/2310.20689}.

\bibitem[Chaudhry et~al.(2018)Chaudhry, Dokania, Ajanthan, and Torr]{chaudhry2018rwalk}
Arslan Chaudhry, Puneet~K. Dokania, Thalaiyasingam Ajanthan, and Philip H.~S. Torr.
\newblock Riemannian walk for incremental learning: Understanding forgetting and intransigence.
\newblock In \emph{European Conference on Computer Vision (ECCV)}, 2018.
\newblock URL \url{https://arxiv.org/abs/1801.10112}.

\bibitem[Chen et~al.(2021)Chen, Tworek, Jun, Yuan, de~Oliveira~Pinto, Kaplan, Edwards, Burda, Joseph, Brockman, Ray, Puri, Krueger, Petrov, Khlaaf, Sastry, Mishkin, Chan, Gray, Ryder, Pavlov, Power, Kaiser, Bavarian, Winter, Tillet, Such, Cummings, Plappert, Chantzis, Barnes, Herbert-Voss, Guss, Nichol, Paino, Tezak, Tang, Babuschkin, Balaji, Jain, Saunders, Hesse, Carr, Leike, Achiam, Misra, Morikawa, Radford, Knight, Brundage, Murati, Mayer, Welinder, McGrew, Amodei, McCandlish, Sutskever, and Zaremba]{chen2021codex}
Mark Chen, Jerry Tworek, Heewoo Jun, Qiming Yuan, Henrique~Ponde de~Oliveira~Pinto, Jared Kaplan, Harri Edwards, Yuri Burda, Nicholas Joseph, Greg Brockman, Alex Ray, Raul Puri, Gretchen Krueger, Michael Petrov, Heidy Khlaaf, Girish Sastry, Pamela Mishkin, Brooke Chan, Scott Gray, Nick Ryder, Mikhail Pavlov, Alethea Power, Lukasz Kaiser, Mohammad Bavarian, Clemens Winter, Philippe Tillet, Felipe~Petroski Such, Dave Cummings, Matthias Plappert, Fotios Chantzis, Elizabeth Barnes, Ariel Herbert-Voss, William~Hebgen Guss, Alex Nichol, Alex Paino, Nikolas Tezak, Jie Tang, Igor Babuschkin, Suchir Balaji, Shantanu Jain, William Saunders, Christopher Hesse, Andrew~N. Carr, Jan Leike, Josh Achiam, Vedant Misra, Evan Morikawa, Alec Radford, Matthew Knight, Miles Brundage, Mira Murati, Katie Mayer, Peter Welinder, Bob McGrew, Dario Amodei, Sam McCandlish, Ilya Sutskever, and Wojciech Zaremba.
\newblock Evaluating large language models trained on code.
\newblock \emph{arXiv preprint arXiv:2107.03374}, 2021.

\bibitem[Chen et~al.(2026)Chen, Wang, Zhu, Qiu, Dong, Deng, Sang, Wang, Geramifard, and Luo]{chen2026soda}
Xiwen Chen, Jingjing Wang, Wenhui Zhu, Peijie Qiu, Xuanzhao Dong, Yueyue Deng, Hejian Sang, Zhipeng Wang, Alborz Geramifard, and Feng Luo.
\newblock {SODA}: Semi on-policy black-box distillation for large language models.
\newblock \emph{arXiv preprint arXiv:2604.03873}, 2026.
\newblock URL \url{https://arxiv.org/abs/2604.03873}.

\bibitem[Cobbe et~al.(2021)Cobbe, Kosaraju, Bavarian, Chen, Jun, Kaiser, Plappert, Tworek, Hilton, Nakano, Hesse, and Schulman]{cobbe2021gsm8k}
Karl Cobbe, Vineet Kosaraju, Mohammad Bavarian, Mark Chen, Heewoo Jun, Lukasz Kaiser, Matthias Plappert, Jerry Tworek, Jacob Hilton, Reiichiro Nakano, Christopher Hesse, and John Schulman.
\newblock Training verifiers to solve math word problems.
\newblock \emph{arXiv preprint arXiv:2110.14168}, 2021.
\newblock URL \url{https://arxiv.org/abs/2110.14168}.

\bibitem[Feng et~al.(2024)Feng, Shen, Wang, Zhuang, Tang, Zhang, and Ding]{feng2024sciknoweval}
Kehua Feng, Xinyi Shen, Weijie Wang, Xiang Zhuang, Yuqi Tang, Qiang Zhang, and Keyan Ding.
\newblock {SciKnowEval}: Evaluating multi-level scientific knowledge of large language models.
\newblock \emph{arXiv preprint arXiv:2406.09098}, 2024.
\newblock URL \url{https://arxiv.org/abs/2406.09098}.

\bibitem[Gao et~al.(2024{\natexlab{a}})Gao, Song, Yang, Cai, Miao, Dong, Li, Ma, Chen, Xu, Tang, Wang, Zan, Quan, Zhang, Sha, Zhang, Ren, Liu, and Chang]{gao2024omnimath}
Bofei Gao, Feifan Song, Zhe Yang, Zefan Cai, Yibo Miao, Qingxiu Dong, Lei Li, Chenghao Ma, Liang Chen, Runxin Xu, Zhengyang Tang, Benyou Wang, Daoguang Zan, Shanghaoran Quan, Ge~Zhang, Lei Sha, Yichang Zhang, Xuancheng Ren, Tianyu Liu, and Baobao Chang.
\newblock {Omni-MATH}: A universal olympiad level mathematic benchmark for large language models.
\newblock \emph{arXiv preprint arXiv:2410.07985}, 2024{\natexlab{a}}.
\newblock URL \url{https://arxiv.org/abs/2410.07985}.

\bibitem[Gao et~al.(2024{\natexlab{b}})Gao, Tow, et~al.]{gao2024lmeval}
Leo Gao, Jonathan Tow, et~al.
\newblock The language model evaluation harness, 07 2024{\natexlab{b}}.
\newblock URL \url{https://zenodo.org/records/12608602}.

\bibitem[He et~al.(2024)He, Luo, Bai, Hu, Thai, Shen, Hu, Han, Huang, Zhang, Liu, Qi, Liu, and Sun]{he2024olympiadbench}
Chaoqun He, Renjie Luo, Yuzhuo Bai, Shengding Hu, Zhen~Leng Thai, Junhao Shen, Jinyi Hu, Xu~Han, Yujie Huang, Yuxiang Zhang, Jie Liu, Lei Qi, Zhiyuan Liu, and Maosong Sun.
\newblock {OlympiadBench}: A challenging benchmark for promoting {AGI} with olympiad-level bilingual multimodal scientific problems.
\newblock In \emph{Proceedings of the 62nd Annual Meeting of the Association for Computational Linguistics}, 2024.
\newblock URL \url{https://arxiv.org/abs/2402.14008}.

\bibitem[Hendrycks et~al.(2021{\natexlab{a}})Hendrycks, Burns, Basart, Zou, Mazeika, Song, and Steinhardt]{hendrycks2021mmlu}
Dan Hendrycks, Collin Burns, Steven Basart, Andy Zou, Mantas Mazeika, Dawn Song, and Jacob Steinhardt.
\newblock Measuring massive multitask language understanding.
\newblock In \emph{International Conference on Learning Representations}, 2021{\natexlab{a}}.
\newblock URL \url{https://arxiv.org/abs/2009.03300}.

\bibitem[Hendrycks et~al.(2021{\natexlab{b}})Hendrycks, Burns, Kadavath, Arora, Basart, Tang, Song, and Steinhardt]{hendrycks2021math}
Dan Hendrycks, Collin Burns, Saurav Kadavath, Akul Arora, Steven Basart, Eric Tang, Dawn Song, and Jacob Steinhardt.
\newblock Measuring mathematical problem solving with the {MATH} dataset.
\newblock In \emph{Advances in Neural Information Processing Systems (NeurIPS) Datasets and Benchmarks Track}, 2021{\natexlab{b}}.
\newblock URL \url{https://arxiv.org/abs/2103.03874}.

\bibitem[Hirschberg(1975)]{hirschberg1975lcs}
D.~S. Hirschberg.
\newblock A linear space algorithm for computing maximal common subsequences.
\newblock \emph{Communications of the ACM}, 18\penalty0 (6):\penalty0 341--343, 1975.
\newblock URL \url{https://doi.org/10.1145/360825.360861}.

\bibitem[Jain et~al.(2024)Jain, Han, Gu, Li, Yan, Zhang, Wang, Solar-Lezama, Sen, and Stoica]{jain2024livecodebench}
Naman Jain, King Han, Alex Gu, Wen-Ding Li, Fanjia Yan, Tianjun Zhang, Sida Wang, Armando Solar-Lezama, Koushik Sen, and Ion Stoica.
\newblock {LiveCodeBench}: Holistic and contamination free evaluation of large language models for code.
\newblock \emph{arXiv preprint arXiv:2403.07974}, 2024.
\newblock URL \url{https://arxiv.org/abs/2403.07974}.

\bibitem[Karan et~al.(2026)Karan, Chen, and Du]{karan2026sampling}
Aayush Karan, Sitan Chen, and Yilun Du.
\newblock Finetuning with sampling: {SFT} learns better than you think.
\newblock \emph{arXiv preprint arXiv:2610.02140}, 2026.
\newblock URL \url{https://arxiv.org/abs/2610.02140}.

\bibitem[Kemker et~al.(2018)Kemker, McClure, Abitino, Hayes, and Kanan]{kemker2018measuring}
Ronald Kemker, Marc McClure, Angelina Abitino, Tyler Hayes, and Christopher Kanan.
\newblock Measuring catastrophic forgetting in neural networks.
\newblock In \emph{AAAI Conference on Artificial Intelligence (AAAI)}, 2018.
\newblock URL \url{https://arxiv.org/abs/1708.02072}.

\bibitem[Kirkpatrick et~al.(2017)Kirkpatrick, Pascanu, Rabinowitz, Veness, Desjardins, Rusu, Milan, Quan, Ramalho, Grabska-Barwinska, Hassabis, Clopath, Kumaran, and Hadsell]{kirkpatrick2017ewc}
James Kirkpatrick, Razvan Pascanu, Neil Rabinowitz, Joel Veness, Guillaume Desjardins, Andrei~A. Rusu, Kieran Milan, John Quan, Tiago Ramalho, Agnieszka Grabska-Barwinska, Demis Hassabis, Claudia Clopath, Dharshan Kumaran, and Raia Hadsell.
\newblock Overcoming catastrophic forgetting in neural networks.
\newblock \emph{Proceedings of the National Academy of Sciences (PNAS)}, 114\penalty0 (13):\penalty0 3521--3526, 2017.
\newblock URL \url{https://arxiv.org/abs/1612.00796}.

\bibitem[Kumar et~al.(2022)Kumar, Raghunathan, Jones, Ma, and Liang]{kumar2022finetuning}
Ananya Kumar, Aditi Raghunathan, Robbie Jones, Tengyu Ma, and Percy Liang.
\newblock Fine-tuning can distort pretrained features and underperform out-of-distribution.
\newblock In \emph{International Conference on Learning Representations (ICLR)}, 2022.
\newblock URL \url{https://arxiv.org/abs/2202.10054}.

\bibitem[Kwon et~al.(2023)Kwon, Li, Zhuang, Sheng, Zheng, Yu, Gonzalez, Zhang, and Stoica]{kwon2023vllm}
Woosuk Kwon, Zhuohan Li, Siyuan Zhuang, Ying Sheng, Lianmin Zheng, Cody~Hao Yu, Joseph~E. Gonzalez, Hao Zhang, and Ion Stoica.
\newblock Efficient memory management for large language model serving with {PagedAttention}.
\newblock In \emph{Proceedings of the 29th Symposium on Operating Systems Principles}, 2023.
\newblock URL \url{https://arxiv.org/abs/2309.06180}.

\bibitem[Lightman et~al.(2023)Lightman, Kosaraju, Burda, Edwards, Baker, Lee, Leike, Schulman, Sutskever, and Cobbe]{lightman2023verify}
Hunter Lightman, Vineet Kosaraju, Yura Burda, Harri Edwards, Bowen Baker, Teddy Lee, Jan Leike, John Schulman, Ilya Sutskever, and Karl Cobbe.
\newblock Let's verify step by step.
\newblock \emph{arXiv preprint arXiv:2305.20050}, 2023.
\newblock URL \url{https://arxiv.org/abs/2305.20050}.

\bibitem[Lin and Han(2026)]{lin2026spot}
Wenye Lin and Kai Han.
\newblock Surgical post-training: Proximal on-policy distillation for reasoning with knowledge retention.
\newblock \emph{arXiv preprint arXiv:2603.01683}, 2026.
\newblock URL \url{https://arxiv.org/abs/2603.01683}.

\bibitem[Liu et~al.(2025)Liu, Farina, and Ozdaglar]{liu2025uft}
Mingyang Liu, Gabriele Farina, and Asuman Ozdaglar.
\newblock {UFT}: Unifying supervised and reinforcement fine-tuning.
\newblock \emph{arXiv preprint arXiv:2505.16984}, 2025.
\newblock URL \url{https://arxiv.org/abs/2505.16984}.

\bibitem[Lopez-Paz and Ranzato(2017)]{lopezpaz2017gem}
David Lopez-Paz and Marc'Aurelio Ranzato.
\newblock Gradient episodic memory for continual learning.
\newblock In \emph{Advances in Neural Information Processing Systems (NeurIPS)}, 2017.
\newblock URL \url{https://arxiv.org/abs/1706.08840}.

\bibitem[{OLMo Team}(2024)]{olmo2_2024}
{OLMo Team}.
\newblock 2 {OLMo} 2 furious.
\newblock \emph{arXiv preprint arXiv:2501.00656}, 2024.
\newblock URL \url{https://arxiv.org/abs/2501.00656}.

\bibitem[{Qwen Team}(2025)]{qwen2025qwen3}
{Qwen Team}.
\newblock Qwen3 technical report.
\newblock \emph{arXiv preprint arXiv:2505.09388}, 2025.
\newblock URL \url{https://arxiv.org/abs/2505.09388}.

\bibitem[Rafailov et~al.(2023)Rafailov, Sharma, Mitchell, Ermon, Manning, and Finn]{rafailov2023dpo}
Rafael Rafailov, Archit Sharma, Eric Mitchell, Stefano Ermon, Christopher~D. Manning, and Chelsea Finn.
\newblock Direct preference optimization: Your language model is secretly a reward model.
\newblock \emph{Advances in Neural Information Processing Systems}, 2023.
\newblock URL \url{https://arxiv.org/abs/2305.18290}.

\bibitem[Ross et~al.(2011)Ross, Gordon, and Bagnell]{ross2011dagger}
St\'ephane Ross, Geoffrey~J. Gordon, and J.~Andrew Bagnell.
\newblock A reduction of imitation learning and structured prediction to no-regret online learning.
\newblock In \emph{Proceedings of the 14th International Conference on Artificial Intelligence and Statistics (AISTATS)}, 2011.
\newblock URL \url{https://arxiv.org/abs/1011.0686}.

\bibitem[Schulman et~al.(2017)Schulman, Wolski, Dhariwal, Radford, and Klimov]{schulman2017ppo}
John Schulman, Filip Wolski, Prafulla Dhariwal, Alec Radford, and Oleg Klimov.
\newblock Proximal policy optimization algorithms.
\newblock \emph{arXiv preprint arXiv:1707.06347}, 2017.
\newblock URL \url{https://arxiv.org/abs/1707.06347}.

\bibitem[Shao et~al.(2024)Shao, Wang, Zhu, Xu, Song, Bi, Zhang, Zhang, Li, Wu, and Guo]{shao2024deepseekmath}
Zhihong Shao, Peiyi Wang, Qihao Zhu, Runxin Xu, Junxiao Song, Xiao Bi, Haowei Zhang, Mingchuan Zhang, Y.~K. Li, Y.~Wu, and Daya Guo.
\newblock {DeepSeekMath}: Pushing the limits of mathematical reasoning in open language models.
\newblock \emph{arXiv preprint arXiv:2402.03300}, 2024.
\newblock URL \url{https://arxiv.org/abs/2402.03300}.

\bibitem[Sheng et~al.(2024)Sheng, Zhang, Ye, Wu, Zhang, Zhang, Peng, Lin, and Wu]{sheng2024verl}
Guangming Sheng, Chi Zhang, Zilingfeng Ye, Xibin Wu, Wang Zhang, Ru~Zhang, Yanghua Peng, Haibin Lin, and Chuan Wu.
\newblock {HybridFlow}: A flexible and efficient {RLHF} framework.
\newblock \emph{arXiv preprint arXiv:2409.19256}, 2024.
\newblock URL \url{https://arxiv.org/abs/2409.19256}.

\bibitem[Tang et~al.(2023)Tang, Deng, Lin, Han, Liang, Cao, and Sun]{tang2023toolalpaca}
Qiaoyu Tang, Ziliang Deng, Hongyu Lin, Xianpei Han, Qiao Liang, Boxi Cao, and Le~Sun.
\newblock {ToolAlpaca}: Generalized tool learning for language models with 3000 simulated cases.
\newblock \emph{arXiv preprint arXiv:2306.05301}, 2023.
\newblock URL \url{https://arxiv.org/abs/2306.05301}.

\bibitem[von Werra et~al.(2020)von Werra, Belkada, Tunstall, Beeching, Thrush, Lambert, Huang, Rasul, and Gallou{\'e}dec]{vonwerra2020trl}
Leandro von Werra, Younes Belkada, Lewis Tunstall, Edward Beeching, Tristan Thrush, Nathan Lambert, Shengyi Huang, Kashif Rasul, and Quentin Gallou{\'e}dec.
\newblock {TRL}: Transformers reinforcement learning.
\newblock \url{https://github.com/huggingface/trl}, 2020.

\bibitem[Wagner and Fischer(1974)]{wagner1974string}
Robert~A. Wagner and Michael~J. Fischer.
\newblock The string-to-string correction problem.
\newblock \emph{Journal of the ACM}, 21\penalty0 (1):\penalty0 168--173, 1974.
\newblock URL \url{https://doi.org/10.1145/321796.321811}.

\bibitem[Wei et~al.(2022)Wei, Bosma, Zhao, Guu, Yu, Lester, Du, Dai, and Le]{wei2022flan}
Jason Wei, Maarten Bosma, Vincent~Y. Zhao, Kelvin Guu, Adams~Wei Yu, Brian Lester, Nan Du, Andrew~M. Dai, and Quoc~V. Le.
\newblock Finetuned language models are zero-shot learners.
\newblock In \emph{International Conference on Learning Representations (ICLR)}, 2022.
\newblock URL \url{https://arxiv.org/abs/2109.01652}.

\bibitem[Yang et~al.(2025)Yang, Yu, Zhang, Xu, Gonzalez, Cui, and Yan]{yang2025supercorrect}
Ling Yang, Zhaochen Yu, Tianjun Zhang, Minkai Xu, Joseph~E Gonzalez, Bin Cui, and Shuicheng Yan.
\newblock {SuperCorrect}: Advancing small {LLM} reasoning with thought template distillation and self-correction.
\newblock In \emph{International Conference on Learning Representations (ICLR)}, 2025.
\newblock URL \url{https://arxiv.org/abs/2410.09008}.

\bibitem[Yang et~al.(2024)Yang, Pang, Feng, Wang, Chen, Zhu, and Liu]{yang2024sdft}
Zhaorui Yang, Tianyu Pang, Haozhe Feng, Han Wang, Wei Chen, Minfeng Zhu, and Qian Liu.
\newblock Self-distillation bridges distribution gap in language model fine-tuning.
\newblock In \emph{Proceedings of the 62nd Annual Meeting of the Association for Computational Linguistics (ACL)}, 2024.
\newblock URL \url{https://arxiv.org/abs/2402.13669}.

\bibitem[Yu et~al.(2025)Yu, Zhang, Zhu, Yuan, Zuo, Yue, Dai, Fan, Liu, Liu, Liu, Lin, Lin, Ma, Sheng, Tong, Zhang, Zhang, Zhang, Zhu, Zhu, Chen, Chen, Wang, Yu, Song, Wei, Zhou, Liu, Ma, Zhang, Yan, Qiao, Wu, and Wang]{yu2025dapo}
Qiying Yu, Zheng Zhang, Ruofei Zhu, Yufeng Yuan, Xiaochen Zuo, Yu~Yue, Weinan Dai, Tiantian Fan, Gaohong Liu, Lingjun Liu, Xin Liu, Haibin Lin, Zhiqi Lin, Bole Ma, Guangming Sheng, Yuxuan Tong, Chi Zhang, Mofan Zhang, Wang Zhang, Hang Zhu, Jinhua Zhu, Jiaze Chen, Jiangjie Chen, Chengyi Wang, Hongli Yu, Yuxuan Song, Xiangpeng Wei, Hao Zhou, Jingjing Liu, Wei-Ying Ma, Ya-Qin Zhang, Lin Yan, Mu~Qiao, Yonghui Wu, and Mingxuan Wang.
\newblock {DAPO}: An open-source {LLM} reinforcement learning system at scale.
\newblock \emph{arXiv preprint arXiv:2503.14476}, 2025.
\newblock URL \url{https://arxiv.org/abs/2503.14476}.

\bibitem[Yuan et~al.(2023)Yuan, Yuan, Li, Dong, Lu, Tan, Zhou, and Zhou]{yuan2023rft}
Zheng Yuan, Hongyi Yuan, Chengpeng Li, Guanting Dong, Keming Lu, Chuanqi Tan, Chang Zhou, and Jingren Zhou.
\newblock Scaling relationship on learning mathematical reasoning with large language models.
\newblock \emph{arXiv preprint arXiv:2308.01825}, 2023.
\newblock URL \url{https://arxiv.org/abs/2308.01825}.

\bibitem[Yue et~al.(2025)Yue, Chen, Lu, Zhao, Wang, Song, and Huang]{yue2025rlvr}
Yang Yue, Zhiqi Chen, Rui Lu, Andrew Zhao, Zhaokai Wang, Shiji Song, and Gao Huang.
\newblock Does reinforcement learning really incentivize reasoning capacity in {LLMs} beyond the base model?
\newblock \emph{arXiv preprint arXiv:2504.13837}, 2025.
\newblock URL \url{https://arxiv.org/abs/2504.13837}.

\bibitem[Zhang et~al.(2026)Zhang, Chai, Fu, Tu, Wang, Lin, Yin, Zhang, Zhu, and Zhao]{zhang2026full}
Yaocheng Zhang, Jiajun Chai, Yuqian Fu, Songjun Tu, Xiaohan Wang, Wei Lin, Guojun Yin, Qichao Zhang, Yuanheng Zhu, and Dongbin Zhao.
\newblock Are full rollouts necessary for on-policy distillation?
\newblock \emph{arXiv preprint arXiv:2605.31490}, 2026.

\bibitem[Zhao et~al.(2026)Zhao, Xie, Liu, Huang, Pang, Chen, and Grover]{zhao2026opsd}
Siyan Zhao, Zhihui Xie, Mengchen Liu, Jing Huang, Guan Pang, Feiyu Chen, and Aditya Grover.
\newblock Self-distilled reasoner: On-policy self-distillation for large language models.
\newblock \emph{arXiv preprint arXiv:2601.18734}, 2026.
\newblock URL \url{https://arxiv.org/abs/2601.18734}.

\end{thebibliography}

\newpage

\appendix
\crefalias{section}{appendix}
\crefalias{subsection}{appendix}
\crefalias{subsubsection}{appendix}
\section{Notation}
\label{app:notation}

\Cref{tab:notation} consolidates the notation used throughout the paper, with each symbol's
meaning and where it is defined or used.

\begin{table}[H]
\centering
\caption{Notation reference.}
\label{tab:notation}
\footnotesize
\renewcommand{\arraystretch}{1.1}
\begin{tabular}{@{}ll@{}}
\toprule
\multicolumn{2}{@{}l}{\emph{Problem, student, verifier (\Cref{sec:prelim})}} \\
$x$ & problem / prompt (drawn from a problem distribution) \\
$y\in\mathcal V^{\le H}$ & answer: a token sequence of length $\le H$ \\
$\pi_{\theta_S}$ & student policy (autoregressive, parameters $\theta_S$) \\
$\pi_{\mathrm{ref}}=\pi_{\theta_{S,0}}$ & frozen initial student, used as the DPO reference \\
$R(x,y)\in\{0,1\}$ & verifier (marks an answer correct or not) \\
$\mathcal C(x)$ & set of correct answers $\{y:R(x,y)=1\}$ \\
$\mathrm{pass}@k{=}0$ & problems on which all $k$ sampled attempts are wrong \\
\midrule
\multicolumn{2}{@{}l}{\emph{Projection and objective (\Cref{def:projection}, \Cref{eq:proj-dpo})}} \\
$\Lambda$ & likelihood shift the correction demands of the student \\
$\proj(x)$ & projection: the most likely correct answer (\Cref{def:projection}) \\
$y^-=y$ & student's wrong attempt (rejected) \\
$y^+$ & the projector's minimal-edit correction (chosen) \\
$y^{\mathrm c}$ & student's own correct rollout (self-SFT target) \\
$E^-,\ E^+$ & mistake span (unlearn) and fix span (learn) from the token-diff \\
$\beta$ & DPO contrast strength (temperature) \\
$\mathcal L_{\mathrm{Diff\text{-}Local\text{-}DPO}},\ \mathcal L_{\mathrm{SFT}},\ \mathcal L$ & diff-local contrast; self-SFT; full objective \\
\midrule
\multicolumn{2}{@{}l}{\emph{Patch MDP (\Cref{def:patch}, \Cref{app:setup})}} \\
$\mathcal V,\ H$ & vocabulary; horizon, the token budget of a rollout \\
$s_t=(x,y_{1:t})$ & state before decision $t$: prompt and the first $t$ generated tokens \\
$w$ & action: a variable-length patch (a continuation that fits the budget) \\
$\mathcal A(s)$ & admissible actions $\{w\in\mathcal V^h:1\le h\le H-|s|\}$ \\
$p:\ s'=s\oplus w$ & deterministic transition (append the patch) \\
$r_{\mathrm{tok}}\in[0,1]$ & per-token reward; $r(s,w)$ sums it over the patch \\
$\rho$ & initial state distribution: the distribution over problems \\
$\gamma=1$ & no discounting \\
$\pi,\ \pi_\theta$ & policy; underlying token model \\
$V^\star,\ Q^\star$ & optimal value and action-value \\
$J(\pi),\ J^\star$ & expected return of $\pi$; its optimum, \eqref{eq:return} \\
$d^{\pi},\ \tilde d_i$ & occupancy induced by $\pi$; the corrected prefixes of round $i$ \\
$\Delta(s,w)$ & optimality gap $V^\star(s)-Q^\star(s,w)\in[0,\Delta_{\max}]$ \\
\midrule
\multicolumn{2}{@{}l}{\emph{The regret analysis (\Cref{sec:theory}, \Cref{thm:single})}} \\
$M,\ \pi_i$ & rounds of training; the learner entering round $i$ \\
$\Pi,\ \Theta$ & policy class and its parameter set (\Cref{app:setup}) \\
$w^\star(s)$ & the projector's token at $s$, assumed optimal there \\
$m$ & edit size of a correction (\Cref{def:edit}); in \Cref{thm:single}, its mean over rounds \\
$e(s,s')$ & edit count between two histories (\Cref{def:edit}) \\
$e_t,\ \bar e_i$ & edit count at decision $t$; its mean over round $i$ (\Cref{lem:transfer}) \\
$\eta$ & per-edited-token drift (\Cref{asmp:eta}) \\
$\Delta_{\max}$ & largest return one decision can forfeit (\Cref{def:gap}) \\
$L(d,\pi)$ & mean gap of $\pi$ under a state distribution $d$, \eqref{eq:loss} \\
$\tilde\varepsilon_i(\pi)$ & rate at which $\pi$ disagrees with $w^\star$ at those prefixes \\
$\tilde\varepsilon$ & approximation error: the disagreement of the best policy in the class \\
$\tilde\gamma_M$ & optimization error: the learner's excess disagreement over it \\
$D,\ G$ & parameter-set diameter; gradient bound (\Cref{prop:rate}) \\
\bottomrule
\end{tabular}
\end{table}

\section{Projection examples}
\label{app:proj-examples}

This appendix shows complete wrong$\to$projected pairs, so the minimal-edit property
can be read at the level of full solutions. In each pair the projector reuses the
student's derivation verbatim and changes only what is needed to make the answer
correct; the changed spans are the token-diff $E^-$ (mistake) and $E^+$ (fix) that the
diff-local DPO loss \eqref{eq:proj-dpo} contrasts. All student answers are from Qwen3-8B
(base); all projections are from the black-box projector. \Cref{app:ex-cindy} works through the
single-edit example of \Cref{fig:minedit-example} in full, and \Cref{app:ex-john} a multi-edit one.

\begin{figure}[H]
  \centering
  \footnotesize
  \begin{subfigure}[t]{0.4\linewidth}
    \centering
    \begin{tikzpicture}
      \node[rectangle, rounded corners, draw=wrongfg, thick, fill=wrongbg,
            text width=\dimexpr\linewidth-14pt\relax, align=left, inner sep=6pt,
            font=\scriptsize] {%
        Dimes: $5\times10 = 50$\\
        Quarters: $3\times25 = 75$\\
        Nickels: $8\times5 = 40$\\
        Pennies: $60\times1 = 60$\\
        Total $= 50{+}75{+}40{+}60 = 225$\\[2pt]
        \colorbox{wrongfg}{\color{white}\texttt{\#\#\#\# 225}}};
    \end{tikzpicture}
    \caption{Student's wrong attempt (keeps a quarter).}
  \end{subfigure}
  \hspace{1cm}
  \begin{subfigure}[t]{0.4\linewidth}
    \centering
    \begin{tikzpicture}
      \node[rectangle, rounded corners, draw=fixfg, thick, fill=fixbg,
            text width=\dimexpr\linewidth-14pt\relax, align=left, inner sep=6pt,
            font=\scriptsize] {%
        Dimes: $5\times10 = 50$\\
        Quarters: $3\times25 = 75$\\
        Nickels: $8\times5 = 40$\\
        Pennies: $60\times1 = 60$\\
        Total $= 50{+}75{+}40{+}60 = 225$\\[2pt]
        \colorbox{fixfg}{\color{white}Eric removes: $225{-}25 = 200$}\\[2pt]
        \colorbox{fixfg}{\color{white}\texttt{\#\#\#\# 200}}};
    \end{tikzpicture}
    \caption{Correction inserts the single missing step.}
  \end{subfigure}
  \caption{\textbf{A minimal-edit projection} on GSM8K (\emph{Cindy's wishing pond}, gold answer
  $200$). The student's wrong attempt (a) and the projector's correction (b) are token-identical
  except for one inserted step and the final answer line, which changes from $225$ to $200$. Only the changed spans $E^-\!\to\!E^+$ enter the contrast in
  \eqref{eq:proj-dpo}.}
  \label{fig:minedit-example}
\end{figure}

\subsection{A single edit: Cindy's wishing pond}
\label{app:ex-cindy}

\textbf{Problem.} Cindy tosses 5 dimes into the wishing pond. Eric flips 3 quarters into
the pond. Garrick throws in 8 nickels. Ivy then drops 60 pennies in. \emph{If Eric dips
his hands into the water and pulls out a quarter}, how much money, in cents, did they put
into the pond? \textbf{(Gold: 200.)}

\medskip
\wrongbox{$y^-$ (rejected, wrong --- \#\#\#\# 225)}{%
  Dimes: $5\times10 = 50$. Quarters: $3\times25 = 75$. Nickels: $8\times5 = 40$.
  Pennies: $60\times1 = 60$. Total $= 50+75+40+60 = 225$. \texttt{\#\#\#\# 225}}

\medskip
\fixbox{$y^+$ (chosen, minimal edit --- \#\#\#\# 200)}{%
  Dimes: $5\times10 = 50$. Quarters: $3\times25 = 75$. Nickels: $8\times5 = 40$.
  Pennies: $60\times1 = 60$. Total $= 50+75+40+60 = 225$.
  \textcolor{fixfg}{\textbf{Eric removes a quarter: $225-25 = 200$.}} \texttt{\#\#\#\# 200}}

\medskip
The entire count is reused verbatim; the projector inserts the single step the student
omitted (subtracting the withdrawn quarter). The token-diff is a one-span edit:
$E^- = \{\texttt{\#\#\#\# 225}\}$, $E^+ = \{$``Eric removes\ldots $= 200$''$, \texttt{\#\#\#\# 200}\}$.

\subsection{Several edits: John's computer}
\label{app:ex-john}

\textbf{Problem.} John's new computer cost \$1500. He needs a monitor and other
peripherals that cost 1/5 that much. He also wants to upgrade the \$300 video card in the base computer
with one that costs \emph{twice as much}. How much did he spend? \textbf{(Gold: 2100.)}

\medskip
\wrongbox{$y^-$ (rejected, wrong --- \#\#\#\# 2400)}{%
  Computer: \$1500. Peripherals: $\tfrac15\times1500 = 300$. Upgraded video card:
  $2\times300 = 600$. Total: $1500+300+600 = 2400$. \texttt{\#\#\#\# 2400}}

\medskip
\fixbox{$y^+$ (chosen, minimal edit --- \#\#\#\# 2100)}{%
  Computer: \$1500. Peripherals: $\tfrac15\times1500 = 300$. Upgraded video card:
  $2\times300 = 600$. \textcolor{fixfg}{\textbf{Since the base computer already includes
  the \$300 card, he pays only the extra: $600-300 = 300$.}} Total:
  $1500+300+\textcolor{fixfg}{\textbf{300}} = 2100$. \texttt{\#\#\#\# 2100}}

\medskip
Here two spans change---the video-card line and the total---while the computer and
peripheral steps are reused unchanged. This is a multi-edit projection: the correction
threads the fix (charging only the upgrade difference) through to the final sum.

\section{Experimental setup}

\subsection{Reproducibility Details}
\label{app:exp-details}

\noindent\textbf{Model families and projectors.}
The two students are open-weight instruction-tuned models from different families,
\textbf{Qwen3-8B}~\citep{qwen2025qwen3} and \textbf{OLMo-2-7B}~\citep{olmo2_2024}, chosen so
the effect is tested as a property of the method rather than of one pretraining recipe. The
same-family projector shares the student's tokenizer, as OPD requires, and unless noted it is
\textbf{Qwen3-32B} for the Qwen student and \textbf{OLMo-2-32B} for the OLMo student. The
black-box path (\Cref{sec:method-blackbox}) needs only the correction text, so it is
tokenizer-free and draws its corrections from a single black-box model shared across students.
The exact checkpoints are:
students \texttt{Qwen/Qwen3-8B} and \texttt{allenai/OLMo-2-1124-7B-Instruct}; same-family
$32$B teachers \texttt{Qwen/Qwen3-32B} and \texttt{allenai/OLMo-2-0325-32B-Instruct}.

\noindent\textbf{Benchmarks.}
Five benchmarks spanning math, code, tools, and science, each contributing a mined hard set
for training and a held-out slice for evaluation:
\begin{itemize}[leftmargin=1.4em,itemsep=1pt,topsep=1pt]
  \item \textbf{GSM8K}~\citep{cobbe2021gsm8k}: grade-school math, numeric answers graded by
    exact match. Mine from the 7{,}473-example \texttt{train} split; evaluate on a $200$-problem
    held-out slice of the \texttt{test} split.
  \item \textbf{MATH}~\citep{hendrycks2021math}: competition math,
    \texttt{\textbackslash boxed\{\}} answers graded by symbolic equivalence. Mine from the
    12k \texttt{train} split; evaluate on the MATH-500 slice~\citep{lightman2023verify}.
  \item \textbf{ToolAlpaca}~\citep{tang2023toolalpaca}: tool use and API calling. Mine from the
    $468$-API, ${\sim}4{,}255$-instance release. The model must emit the first tool call for a
    request, and a call counts as correct when the action is a valid function of that API, its
    arguments parse as JSON with all required parameters present, and the action matches the
    reference call.
  \item \textbf{LiveCodeBench~v6}~\citep{jain2024livecodebench}: contamination-resistant
    competitive-programming problems, graded by unit-test execution (pass@1). Mine hard
    problems from the v6 release; evaluate on a held-out slice.
  \item \textbf{SciKnowEval}~\citep{feng2024sciknoweval}: multi-level scientific knowledge. We
    use the L3 four-choice subset ($4{,}135$ items) of the $28{,}392$-item release, graded by a
    letter match against the reference option.
\end{itemize}

\noindent\textbf{Evaluation protocol.} Specialization and retention are evaluated identically at
step $0$ and every $25$ steps thereafter, so each student and benchmark serves as its own
reference; the continual-learning summaries built from these two series are defined in
\Cref{app:rq1-full}. Every benchmark is scored by the grader listed above, always as
sampled pass@1 on its held-out slice, a $200$-problem slice for GSM8K. The
full-suite evaluations of \Cref{app:math-full} instead score the final checkpoints on complete
test splits. Answer extraction is per benchmark: the \texttt{\#\#\#\#}-delimited final
number for GSM8K, with a last-number fallback for free-form generations; the
\texttt{\textbackslash boxed\{\}} expression for MATH, compared by symbolic equivalence; the
final \texttt{Answer:} letter for SciKnowEval; the first \texttt{Action} and
\texttt{Action Input} pair for ToolAlpaca; and the extracted program for LiveCodeBench, executed
against the unit tests.

\noindent\textbf{Hardware and software.} The main experiments ran on a single pod of $8\times$
NVIDIA A100-SXM4 (40\,GB) GPUs, which was the compute available to us. The stack is PyTorch~2.8.0 (CUDA~12.8), TRL~1.6.0~\citep{vonwerra2020trl},
Transformers~5.12.1, and Accelerate~1.14.0. Multi-GPU training uses fully-sharded
data parallelism (FSDP2, full-shard) with gradient checkpointing. Final checkpoints are
gathered and consolidated from the sharded
(DTensor) state.

\noindent\textbf{Training configuration.} Unless noted, all methods train for $250$ update steps at a
constant learning rate with the Adam optimizer in bf16, computing loss on assistant response
tokens only. GRPO and OPD use $5\times10^{-6}$, and SPOT, SODA, and projected distillation use
$1\times10^{-5}$. The per-step generation budget is matched at $8$ rollouts: the
distillation methods (OPD, SPOT, SODA, and ours) draw $8$ problems per step, while GRPO draws a
group of $8$ samples on a single problem. The DPO-based objectives use $\beta{=}0.1$; SODA runs
a brief SFT warmup on $y^+$ (first $25$ steps) before its preference phase. GRPO keeps TRL's
default KL coefficient of zero toward the reference policy.

\subsection{Hard-Set Construction and Statistics}
\label{app:hardset}

\noindent\textbf{Mining rule.} Training targets problems the student cannot already solve. For each
candidate problem we draw $k{=}16$ samples from the \emph{untrained} student at temperature $0.8$
(seed $0$) and keep the problem only if all $16$ are graded wrong by that benchmark's grader, i.e.\
pass@$16 = 0$. The retained problems form the frozen hard set; the $16$ failed samples are stored
with each record, which is what allows an offline arm to be built without re-generating.

Because the criterion is defined against a specific student, \textbf{hard sets are per student}: the
same benchmark yields a different hard set for Qwen3-8B than for OLMo-2-7B, and a checkpoint is only
ever scored against the hard set mined for its own base model.

\Cref{tab:hardset-qwen} reports the mining statistics for Qwen3-8B and \Cref{tab:hardset-olmo} the
split sizes for OLMo-2-7B. Hard-set size is a property of the \emph{student}, not of the benchmark:
GSM8K is hard for only $1.63\%$ of Qwen3-8B's training pool, whereas MATH yields roughly twice as
many hard problems for OLMo-2-7B ($4{,}046$) as for Qwen3-8B ($1{,}991$).

\begin{table}[H]
  \centering
  \small
  \caption{\textbf{Hard-set mining for Qwen3-8B.} A problem enters the hard set when all $16$
  samples drawn from the untrained student at $T{=}0.8$ are wrong. ``Base sample accuracy'' is the
  fraction of all $16\times$(screened) samples that were correct, i.e.\ the student's accuracy over
  the pool before any training. The evaluation slice is disjoint from the hard set and comes from
  the benchmark's \texttt{test} split.}
  \label{tab:hardset-qwen}
  \begin{tabular}{lrrrrr}
  \toprule
  Benchmark & Screened & Hard set & Hard \% & Base sample acc. & Eval slice \\
  \midrule
  GSM8K         & 7{,}473  & 122    & 1.63  & 0.949 & 200 \\
  MATH          & 12{,}000 & 1{,}991 & 16.59 & 0.703 & 500 \\
  SciKnowEval   & 3{,}721  & 1{,}003 & 26.96 & 0.323 & 400 \\
  ToolAlpaca    & 3{,}584  & 467    & 13.03 & 0.845 & 300 \\
  LiveCodeBench & ---      & 100    & ---   & ---   & 50  \\
  \bottomrule
  \end{tabular}
\end{table}

\begin{table}[H]
  \centering
  \small
  \caption{\textbf{Hard-set sizes for OLMo-2-7B.} The mining summary files were not retained for
  these runs, so the screened pool size and base sample accuracy are unavailable; the split sizes
  are taken directly from the mined files.}
  \label{tab:hardset-olmo}
  \begin{tabular}{lrr}
  \toprule
  Benchmark & Hard set & Eval slice \\
  \midrule
  GSM8K         & 230    & 200 \\
  MATH          & 4{,}046 & 200 \\
  SciKnowEval   & 493    & 200 \\
  ToolAlpaca    & 471    & 200 \\
  LiveCodeBench & 810    & 131 \\
  \bottomrule
  \end{tabular}
\end{table}

\noindent\textbf{Train/evaluation disjointness.} We verify rather than assume separation. Comparing
question text lower-cased with \LaTeX{} commands stripped and non-alphanumeric characters removed
--- so that a problem reformatted between two source releases is still caught, which a byte-level
comparison would miss --- gives zero overlap between hard set and evaluation slice for both math
benchmarks: $1{,}991$ versus $500$ for MATH and $122$ versus $200$ for GSM8K. AMC23,
MMLU~\citep{hendrycks2021mmlu} and
HumanEval are evaluation-only and were never trained on by any method. Four problems appear
twice inside the MATH hard set ($1{,}991$ records, $1{,}987$ unique) and therefore carry double
weight during training.

\subsection{Prompt Templates}
\label{app:prompts}

All prompts are single user turns passed through the model's own chat template. For Qwen3 the
thinking mode is disabled in mining, training and evaluation alike, so the same decoding regime
holds throughout. The instruction appended to a question is selected by the benchmark's answer type,
which fixes the output format the grader expects; the grader always follows the \emph{evaluation}
benchmark, not the benchmark a checkpoint was trained on.

\subsubsection{Student prompts}

The student prompt is the question followed by one of the four instructions below.

\noindent\textbf{Numeric} (GSM8K):
\begin{quote}\small\ttfamily
Solve the problem by showing your full step-by-step reasoning. After the reasoning, write the final
answer on a new line by itself in the exact format "\#\#\#\# N", where N is just the final number
(no units, words, or commas). Always include this \#\#\#\# line as the last line.
\end{quote}

\noindent\textbf{Boxed} (MATH, MATH-500, AMC23):
\begin{quote}\small\ttfamily
Solve the problem by showing your full step-by-step reasoning. Put your final answer inside
\textbackslash boxed\{...\} at the end. The content of the box must be exactly the final answer (a
number or a simplified expression), with no surrounding words.
\end{quote}

\noindent\textbf{Multiple choice} (SciKnowEval, MMLU):
\begin{quote}\small\ttfamily
Answer the multiple-choice question. Think step by step, then write the final answer on a new line
by itself as "Answer: X" where X is A, B, C, or D.
\end{quote}

\noindent\textbf{Code} (LiveCodeBench, HumanEval):
\begin{quote}\small\ttfamily
Complete the Python function. Return the full function definition inside a \texttt{\textasciigrave\textasciigrave\textasciigrave python} code
block. Do not include tests or explanation.
\end{quote}

\subsubsection{Projection prompt}

The projector is instructed to \emph{edit} the student's wrong attempt rather than to solve the
problem afresh; this is what keeps the target near the student's own distribution and makes the
token-level diff small. The role is set per domain (``expert math tutor'' for numeric and boxed
answers, ``expert science tutor'' for multiple choice, ``expert assistant for tool/function
calling'' for tool use).

\noindent\textbf{System message:}
\begin{quote}\small\ttfamily
You are an \{role\}. A student answered a problem incorrectly. PROJECT their attempt onto a correct
solution: keep the student's overall approach, wording, and structure as much as possible, changing
ONLY what is needed to make it correct. Do NOT rewrite from scratch in your own style --- stay as
close to how the student writes as possible. \{end-format requirement\}
\end{quote}

\noindent\textbf{User message:}
\begin{quote}\small\ttfamily
Problem:\\
\{problem\}\\[2pt]
Student's incorrect attempt:\\
\{wrong attempt\}\\[2pt]
Minimally edit the student's attempt into a correct solution, staying as close to their style as
possible. \{end-format requirement\}
\end{quote}

The end-format requirement names the graded target exactly: \texttt{End with the correct final
answer, \{gold\}.} for math, the correct option for multiple choice, and the required
\texttt{Action} / \texttt{Action Input} pair for tool calling. A projection that does not end in the
graded format is \textbf{rejected by the verifier}, so a correction cannot be accepted on the
strength of its prose alone; every accepted projection has been re-graded by the same grader used
for evaluation.

\noindent\textbf{Code is different in kind.} For programming problems the correctness signal given
to the projector is \emph{execution feedback on the student's own attempt} --- which tests failed,
with input, expected and actual values --- and not the hidden test suite:
\begin{quote}\small\ttfamily
You are an expert competitive programmer helping a student. The student wrote a solution that FAILS
some tests. PROJECT their attempt onto a correct one: keep the student's overall approach,
structure, and variable names as much as possible, changing ONLY what is needed to make it pass. Do
NOT rewrite from scratch in your own style --- stay as close to how the student writes as possible.
Return the corrected solution as a single code block.
\end{quote}

\subsection{Training Configuration: Additional Details}
\label{app:train-extra}

\Cref{app:exp-details} gives the models, learning rates, optimizer, per-step generation budget and
hardware. This section records the details that a re-implementation additionally needs.

\noindent\textbf{Generation budget per domain, and why it varies.} The rollout budget must be at
least the budget the hard set was mined with. A problem is in the hard set precisely because the
student failed it under the mining budget, so giving the student a \emph{smaller} budget during
training truncates its derivation mid-proof and the problem cannot be solved even in principle.
Competition MATH attempts run a median of roughly $805$ tokens, with $61\%$ needing at least $690$,
so a $700$-token budget both pinned hard-set accuracy near zero and trained the model on truncated
derivations, collapsing held-out accuracy from $0.39$ to $0.16$. The experiments of
\Cref{sec:experiments} therefore use a uniform $2{,}048$-token rollout budget, across both students
and all five benchmarks, which is above the mining budget everywhere. Rollouts are sampled at
$T{=}0.8$. This is also why the mean deleted span reported for the \emph{gold} arm in
\Cref{tab:rq2-onpol} exceeds $700$ tokens. The transfer runs of \Cref{app:transfer} and the
replication of \Cref{app:selfproj} use their own budgets, stated in those sections.

\noindent\textbf{Projector configuration.} The black-box projector is Claude Opus 4.8, accessed
through a hosted API and run off-GPU, which is what allows the whole pod to be spent on the
student. Each wrong sample gets at most two projection attempts and the first verified-correct
projection is kept; if neither verifies, the sample contributes no preference pair for that step.
The same-family projector variant is served locally.

\noindent\textbf{GPU allocation.} Of the eight GPUs in the pod, seven hold FSDP2 shards of the
student and one is reserved for the in-loop vLLM~\citep{kwon2023vllm} generator that produces
on-policy rollouts and runs
the periodic evaluations. Sharing the generator GPU with a training rank starves the generator and
stalls the step, so the split is fixed rather than dynamic.

\noindent\textbf{Loss variants.} The arms and their objectives are given in
\Cref{app:rq4-recipe}. Two further switches are fixed throughout: the adopted default uses
unnormalized rather than token-mean-normalized DPO log-probabilities, and the auxiliary SFT term
acts on the student's own correct rollouts $y^{\mathrm c}$, as in \eqref{eq:proj-objective}, not on
the projection $y^+$.

\subsection{Evaluation: Statistical Resolution and Seed Variance}
\label{app:eval-resolution}

\noindent\textbf{Two kinds of seed.} Two sources of variation are easy to conflate and differ by
roughly a factor of fifty in magnitude.

\begin{itemize}[leftmargin=1.4em,itemsep=1pt,topsep=1pt]
  \item A \textbf{training seed} indexes an independently trained checkpoint. Different weights.
    The spread is large: the three math-trained projected-distillation seeds score $0.749$, $0.311$
    and $0.688$ on MATH-500.
  \item An \textbf{evaluation seed} indexes which generations were sampled from \emph{one fixed}
    checkpoint. The model is identical across replicates. The spread is small, typically
    $\pm 0.006$ on the large suites.
\end{itemize}

The two are reported in different places. The $\pm$ in \Cref{tab:math-seed0,tab:math-seed0-delta}
is the standard error over evaluation seeds of a single training-seed-$0$ checkpoint: it certifies
that a measurement would reproduce under re-sampling, and says nothing about whether a different
training run would land near it. Everywhere else, including \Cref{tab:rq1-main,tab:rq1-bwt,tab:rq1-forget},
the $\pm$ is the standard error over the three training seeds. Claims about
a \emph{method} rather than about a particular checkpoint have to be read against training-seed
spread, which is the dominant term.

Greedy decoding cannot be averaged this way. It is deterministic, so replicates return the same
number and a standard deviation of exactly zero, which reports nothing about how well the mean is
pinned down. Sampling one generation per problem and repeating over independent seeds gives an error
bar that means something. For pass@$k$ estimated from $n>k$ samples we use the unbiased estimator of
\citet{chen2021codex}, $\text{pass@}k = 1 - \binom{n-c}{k}\big/\binom{n}{k}$. The pass@$16$ figures
in this paper are not estimated that way: they are computed from $n{=}16$ draws as the share of
problems solved at least once. That indicator is unbiased for pass@$16$ when $n=k$, but it carries
more variance than an estimate from more draws would.

\noindent\textbf{Resolution limits of the small suites.} Suite size sets the granularity of any
difference that can be claimed. AMC23 has only $40$ problems, so one problem is $0.025$ and the
base standard error alone is $\pm 0.030$; differences below roughly $0.06$ there should not be
argued. GSM8K ($1{,}319$) and MMLU ($14{,}042$) are the suites where small differences resolve.

\noindent\textbf{A note on greedy reproducibility.} Repeating a greedy evaluation with identical
prompts and identical suite size did not return bit-identical scores. The cause is
continuous-batching non-determinism in the inference server, which changes numerics enough to flip
borderline argmax tokens. Greedy numbers therefore carry a small amount of unreported jitter of
their own, which is a further reason to report a sampled mean with an explicit error bar.

\section{RQ1: Comparison with baselines}
\label{app:rq1-full}

\subsection{Metrics}

\noindent\textbf{Area under the Pareto frontier (AUPF).}
For a single run, evaluation at checkpoints $i=1,\dots,n$ yields points $(x_i,y_i)\in[0,1]^2$,
where $x_i$ is specialization (hard-set pass@1) and $y_i$ is retention (held-out pass@1). A
point is \emph{dominated} if another point is at least as good on both axes, and the Pareto
frontier is the non-dominated set
\begin{equation}
  \mathcal{F}=\bigl\{\,i:\ \nexists\, j\ \text{with}\ x_j\ge x_i,\ y_j\ge y_i,\
  (x_j,y_j)\neq(x_i,y_i)\,\bigr\}.
\end{equation}
We anchor the frontier at a \emph{shared untrained base} $B=(0,H_0)$: specialization $0$ and
retention $H_0$, the untrained held-out accuracy, both estimated as the mean step-$0$ value over
the methods that start untrained. Specialization at the anchor is measured at step $0$ rather than
inferred from the mining criterion; it averages $0.012$ across those methods, which we round to zero. $B$ is a \emph{forced} left anchor (it is never subject to
domination); using a shared base rather than each run's own first checkpoint prevents methods
that begin with an SFT warmup (e.g.\ SODA) from being scored against an already-elevated
starting specialization. Indexing $\{B\}\cup\mathcal{F}$ in order of increasing specialization,
$0=\tilde x_0<\tilde x_1<\dots<\tilde x_K$, AUPF is the area under the piecewise-linear frontier
$y_{\mathcal F}(\cdot)$ by the trapezoidal rule:
\begin{equation}
  \mathrm{AUPF}\;=\;\int_{0}^{\tilde x_K} y_{\mathcal F}(x)\,dx
  \;=\;\sum_{k=1}^{K}\tfrac{1}{2}\,(\tilde y_{k-1}+\tilde y_k)\,(\tilde x_k-\tilde x_{k-1}).
\end{equation}
We compute AUPF per seed and report the mean\,$\pm$\,s.e.\ across seeds. Because retention is
integrated over specialization from the base at $x{=}0$, a method that never specializes
($\tilde x_K\!\approx\!0$) has $\mathrm{AUPF}\!\approx\!0$; a higher AUPF means pushing
specialization further while holding retention.

\noindent\textbf{Continual-learning metrics (\Cref{tab:rq1-cl}).}
For a run with per-step specialization $x_t$ and retention $y_t$ ($t=1,\dots,T$) and the
shared untrained base $(0,H_0)$, define
\begin{align*}
  \bar x &= \tfrac{1}{T}\textstyle\sum_{t=1}^{T} x_t, &
  \bar y &= \tfrac{1}{T}\textstyle\sum_{t=1}^{T} y_t, \\
  \mathrm{BWT} &= y_T - H_0, &
  F &= \max\nolimits_{\tau\le T} y_\tau - y_T \ \text{with}\ y_0{:=}H_0, \\
  \mathrm{SafeSpec}_\varepsilon &= \max\!\bigl\{x_t: y_t\ge H_0-\varepsilon\bigr\}, &
  &\text{$0$ if the set is empty.}
\end{align*}
$\bar x,\bar y$ are training-averaged accuracies in the spirit of the AUC metrics of
\citet{kemker2018measuring}; $\mathrm{BWT}$ is the backward transfer of \citet{lopezpaz2017gem}
and $F$ the forgetting measure of \citet{chaudhry2018rwalk}. $\mathrm{SafeSpec}_\varepsilon$ is the highest specialization
reached at any checkpoint whose retention is within $\varepsilon$ of the untrained base. For the aggregate
row in \Cref{tab:rq1-cl}, retention is expressed as a fraction of base so panels with
different retention scales are commensurate: $\bar y_{\mathrm{norm}}=\bar y/H_0$,
$F_{\mathrm{norm}}=F/H_0$, $\mathrm{BWT}_{\mathrm{norm}}=(y_T-H_0)/H_0$, and pooled
$\mathrm{SafeSpec}$ uses $y_t/H_0\ge1-\varepsilon_{\mathrm{rel}}$ with
$\varepsilon_{\mathrm{rel}}\in\{0.02,0.05\}$. Each metric is computed per seed and averaged
over $3$ seeds; the aggregate pools further over $($family, benchmark$)$, with GRPO on
Qwen-LiveCodeBench entered as no learning.

\subsection{Per-cell results}

\Cref{tab:rq1-main} gives the exact final-checkpoint accuracies behind \Cref{fig:pareto-all}:
hard-set pass@1 (specialization) and held-out pass@1 (retention), as mean\,$\pm$\,standard error
across seeds at $250$ steps. \Cref{fig:pareto-per-benchmark} breaks that figure down per benchmark,
\Cref{fig:learning-curves} gives the pooled trajectories over training, and
\Cref{tab:rq1-bwt,tab:rq1-forget} break the two continual-learning summaries of \Cref{tab:rq1-cl}
down per cell.

\begin{figure}[H]
  \centering
  \includegraphics[width=\linewidth]{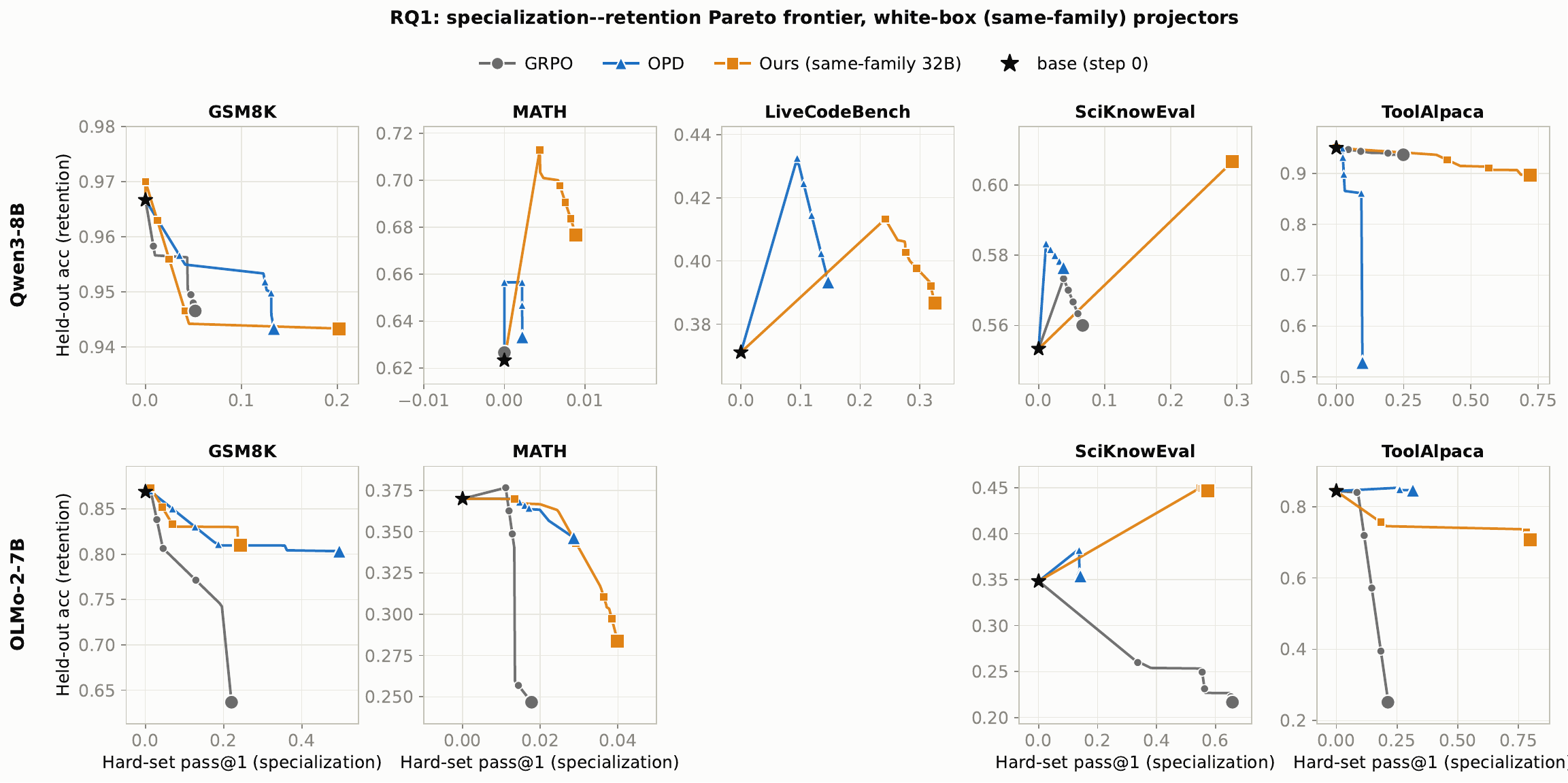}\\[0.6em]
  \includegraphics[width=\linewidth]{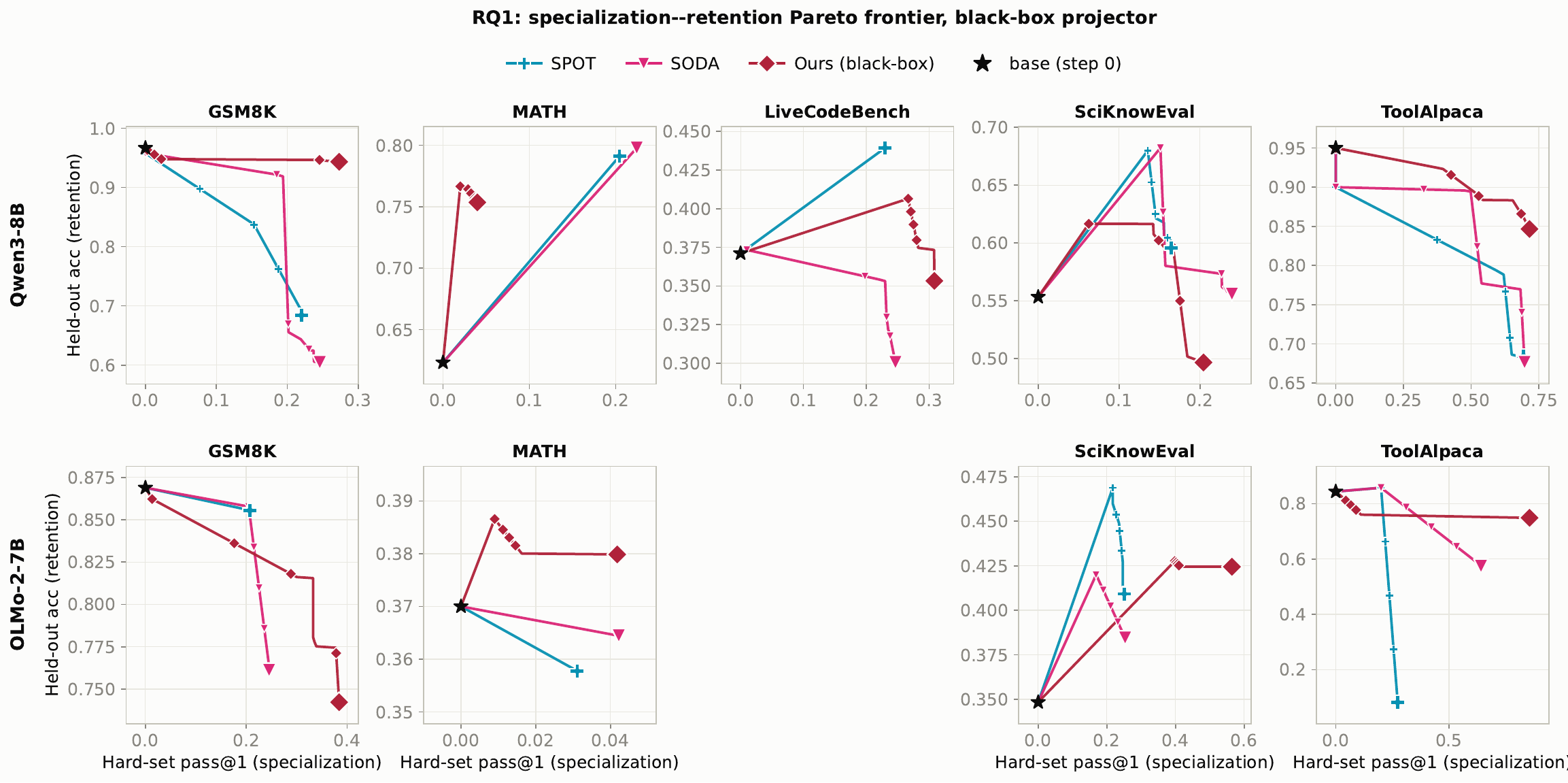}
  \caption{\textbf{Per-benchmark specialization--retention Pareto frontier} (detailed
  breakdown of \Cref{fig:pareto-all}). \textbf{Top:} same-family ($32$B) projectors;
  \textbf{bottom:} black-box projector. Each block has two rows (Qwen3-8B, OLMo-2-7B) and one
  column per benchmark. Axes: hard-set pass@1 ($x$, specialization) and held-out pass@1 ($y$,
  retention). Bold line $=$ monotone Pareto frontier of the seed-averaged trajectory (larger
  marker $=$ final checkpoint) anchored at the untrained base $\star$ (specialization $0$);
  axes auto-scale per panel. GRPO on Qwen-LiveCodeBench is entered as no learning.}
  \label{fig:pareto-per-benchmark}
\end{figure}

\begin{figure}[H]
  \centering
  \includegraphics[width=.7\linewidth]{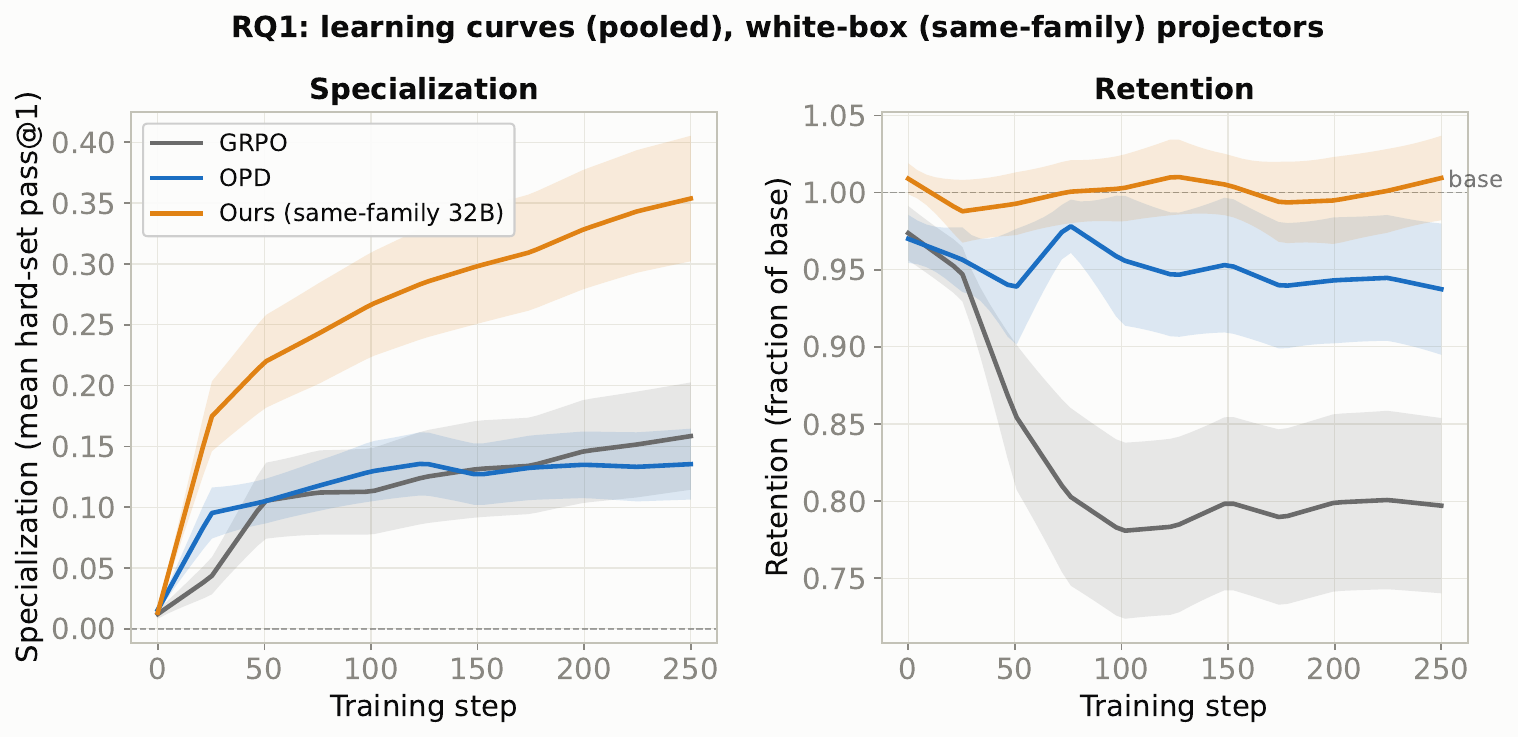}\\[0.5em]
  \includegraphics[width=.7\linewidth]{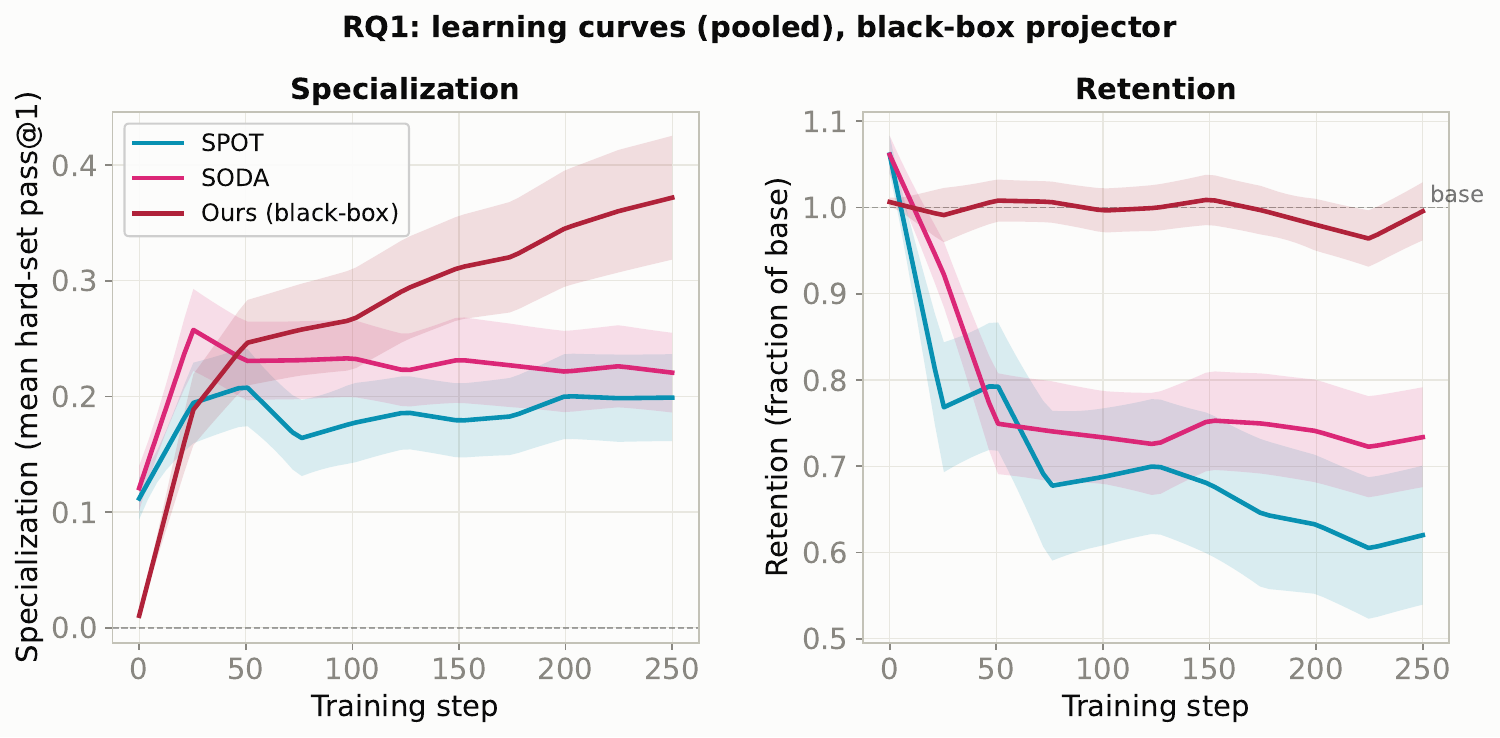}
  \caption{\textbf{RQ1 pooled learning curves.} \textbf{Top:} same-family ($32$B)
  projectors; \textbf{bottom:} black-box projector. \emph{Left panels:} specialization
  (mean hard-set pass@1) over training. \emph{Right panels:} retention as a fraction of the
  untrained base (base $=1.0$ dashed). Curves pool over students, benchmarks, and seeds (same
  pooling as \Cref{fig:pareto-all}); shaded bands are s.e.\ over the pooled runs. The
  early specialization and subsequent retention loss of SPOT and SODA (retention falls by step $50$) and the
  retention preservation of both Ours variants are visible directly. GRPO on Qwen-LiveCodeBench
  entered as no learning.}
  \label{fig:learning-curves}
\end{figure}

\begin{table}[H]
  \centering
  \footnotesize
  \setlength{\tabcolsep}{2pt}
  \caption{\textbf{RQ1 per-cell final accuracies} at the $250$-step checkpoint,
  mean\,$\pm$\,s.e.\ across seeds. Hard $=$ hard-set pass@1 (specialization); Held $=$ held-out
  pass@1 (in-domain retention), measured on a disjoint slice of the same benchmark's training
  split, with sizes in \Cref{tab:hardset-qwen,tab:hardset-olmo}. These slices are not the public
  test suites used in \Cref{tab:math-seed0}, so the two are not comparable cell by cell. Higher is better on both, and a method is preferable where it
  raises Hard without lowering Held. \textbf{Bold} $=$ best on each axis within a (student,
  benchmark) block. Our two variants take the best Hard in $6$ of $9$ blocks and the best Held in
  $5$ of $9$, so the AUPF ordering in \Cref{tab:rq1-aupf} rests on the two axes together rather
  than on either alone. $^{\dagger}$ This is the one cell reported from a single training seed:
  OLMo-2-7B/MATH GRPO seeds $0$ and $1$ stopped with CUDA out-of-memory before the $250$-step
  checkpoint, so only seed $2$ is reported and the $\pm$ there is the standard error of one value,
  not of three. Every other cell is three seeds.}
  \label{tab:rq1-main}
  \begin{tabular}{cllcc}
    \toprule
    Student & Benchmark & Method & Hard $\uparrow$ & Held $\uparrow$ \\
    \midrule
    \multirow{30}{*}{\rotatebox[origin=c]{90}{Qwen3-8B}}
      & \multirow{6}{*}{GSM8K} & GRPO & 0.044\,$\pm$\,0.012 & 0.950\,$\pm$\,0.006 \\
      &  & OPD & 0.123\,$\pm$\,0.013 & \textbf{0.953}\,$\pm$\,0.007 \\
      &  & Ours (32B) & 0.197\,$\pm$\,0.019 & 0.943\,$\pm$\,0.019 \\
      &  & SPOT & 0.090\,$\pm$\,0.005 & 0.216\,$\pm$\,0.039 \\
      &  & SODA & 0.246\,$\pm$\,0.021 & 0.604\,$\pm$\,0.140 \\
      &  & Ours (bb) & \textbf{0.273}\,$\pm$\,0.022 & 0.943\,$\pm$\,0.009 \\
      \cmidrule(l){2-5}
      & \multirow{6}{*}{MATH} & GRPO & 0.000\,$\pm$\,0.000 & 0.627\,$\pm$\,0.003 \\
      &  & OPD & 0.002\,$\pm$\,0.002 & 0.633\,$\pm$\,0.017 \\
      &  & Ours (32B) & 0.007\,$\pm$\,0.004 & 0.700\,$\pm$\,0.015 \\
      &  & SPOT & 0.087\,$\pm$\,0.008 & 0.258\,$\pm$\,0.028 \\
      &  & SODA & \textbf{0.136}\,$\pm$\,0.023 & 0.664\,$\pm$\,0.002 \\
      &  & Ours (bb) & 0.031\,$\pm$\,0.004 & \textbf{0.760}\,$\pm$\,0.006 \\
      \cmidrule(l){2-5}
      & \multirow{6}{*}{LiveCodeBench} & GRPO & 0.000\,$\pm$\,0.000 & 0.371\,$\pm$\,0.000 \\
      &  & OPD & 0.093\,$\pm$\,0.032 & 0.367\,$\pm$\,0.018 \\
      &  & Ours (32B) & \textbf{0.325}\,$\pm$\,0.014 & 0.387\,$\pm$\,0.013 \\
      &  & SPOT & 0.217\,$\pm$\,0.003 & \textbf{0.413}\,$\pm$\,0.027 \\
      &  & SODA & 0.237\,$\pm$\,0.009 & 0.313\,$\pm$\,0.041 \\
      &  & Ours (bb) & 0.308\,$\pm$\,0.008 & 0.353\,$\pm$\,0.007 \\
      \cmidrule(l){2-5}
      & \multirow{6}{*}{SciKnowEval} & GRPO & 0.067\,$\pm$\,0.024 & 0.560\,$\pm$\,0.000 \\
      &  & OPD & 0.027\,$\pm$\,0.004 & 0.567\,$\pm$\,0.026 \\
      &  & Ours (32B) & \textbf{0.293}\,$\pm$\,0.015 & \textbf{0.607}\,$\pm$\,0.022 \\
      &  & SPOT & 0.164\,$\pm$\,0.026 & 0.596\,$\pm$\,0.004 \\
      &  & SODA & 0.227\,$\pm$\,0.033 & 0.573\,$\pm$\,0.037 \\
      &  & Ours (bb) & 0.204\,$\pm$\,0.019 & 0.497\,$\pm$\,0.131 \\
      \cmidrule(l){2-5}
      & \multirow{6}{*}{ToolAlpaca} & GRPO & 0.249\,$\pm$\,0.154 & \textbf{0.937}\,$\pm$\,0.009 \\
      &  & OPD & 0.078\,$\pm$\,0.039 & 0.463\,$\pm$\,0.274 \\
      &  & Ours (32B) & \textbf{0.720}\,$\pm$\,0.010 & 0.897\,$\pm$\,0.015 \\
      &  & SPOT & 0.691\,$\pm$\,0.006 & 0.678\,$\pm$\,0.073 \\
      &  & SODA & 0.667\,$\pm$\,0.023 & 0.691\,$\pm$\,0.067 \\
      &  & Ours (bb) & 0.716\,$\pm$\,0.008 & 0.847\,$\pm$\,0.023 \\
    \midrule
    \multirow{24}{*}{\rotatebox[origin=c]{90}{OLMo-2-7B}}
      & \multirow{6}{*}{GSM8K} & GRPO & 0.220\,$\pm$\,0.070 & 0.637\,$\pm$\,0.048 \\
      &  & OPD & \textbf{0.496}\,$\pm$\,0.016 & 0.803\,$\pm$\,0.015 \\
      &  & Ours (32B) & 0.236\,$\pm$\,0.008 & \textbf{0.830}\,$\pm$\,0.006 \\
      &  & SPOT & 0.115\,$\pm$\,0.045 & 0.313\,$\pm$\,0.153 \\
      &  & SODA & 0.153\,$\pm$\,0.007 & 0.527\,$\pm$\,0.020 \\
      &  & Ours (bb) & 0.378\,$\pm$\,0.091 & 0.774\,$\pm$\,0.018 \\
      \cmidrule(l){2-5}
      & \multirow{6}{*}{MATH} & GRPO$^{\dagger}$ & 0.020\,$\pm$\,0.000 & 0.330\,$\pm$\,0.000 \\
      &  & OPD & 0.013\,$\pm$\,0.004 & 0.370\,$\pm$\,0.025 \\
      &  & Ours (32B) & \textbf{0.036}\,$\pm$\,0.008 & 0.307\,$\pm$\,0.009 \\
      &  & SPOT & 0.009\,$\pm$\,0.006 & 0.133\,$\pm$\,0.020 \\
      &  & SODA & 0.011\,$\pm$\,0.002 & 0.151\,$\pm$\,0.008 \\
      &  & Ours (bb) & 0.016\,$\pm$\,0.006 & \textbf{0.380}\,$\pm$\,0.010 \\
      \cmidrule(l){2-5}
      & \multirow{6}{*}{SciKnowEval} & GRPO & \textbf{0.658}\,$\pm$\,0.037 & 0.217\,$\pm$\,0.047 \\
      &  & OPD & 0.113\,$\pm$\,0.024 & 0.353\,$\pm$\,0.033 \\
      &  & Ours (32B) & 0.573\,$\pm$\,0.023 & \textbf{0.447}\,$\pm$\,0.003 \\
      &  & SPOT & 0.224\,$\pm$\,0.008 & 0.431\,$\pm$\,0.019 \\
      &  & SODA & 0.089\,$\pm$\,0.016 & 0.371\,$\pm$\,0.002 \\
      &  & Ours (bb) & 0.564\,$\pm$\,0.035 & 0.424\,$\pm$\,0.024 \\
      \cmidrule(l){2-5}
      & \multirow{6}{*}{ToolAlpaca} & GRPO & 0.171\,$\pm$\,0.171 & 0.140\,$\pm$\,0.140 \\
      &  & OPD & 0.273\,$\pm$\,0.015 & \textbf{0.840}\,$\pm$\,0.015 \\
      &  & Ours (32B) & 0.798\,$\pm$\,0.006 & 0.707\,$\pm$\,0.047 \\
      &  & SPOT & 0.193\,$\pm$\,0.112 & 0.069\,$\pm$\,0.049 \\
      &  & SODA & 0.220\,$\pm$\,0.070 & 0.189\,$\pm$\,0.059 \\
      &  & Ours (bb) & \textbf{0.856}\,$\pm$\,0.026 & 0.749\,$\pm$\,0.036 \\
    \bottomrule
  \end{tabular}
\end{table}

\begin{table}[H]
  \centering
  \scriptsize
  \setlength{\tabcolsep}{2pt}
  \caption{\textbf{RQ1 per-cell backward transfer}, $\mathrm{BWT}=y_T-H_0$
  \citep{lopezpaz2017gem}, mean\,$\pm$\,s.e.\ across seeds in raw retention points. Higher is
  better: a positive value means the run ended above the untrained base, a negative one that it
  ended below, and zero that it returned to base. \textbf{Bold} $=$ highest per row. Our variants
  take the highest value in $5$ of $9$ cells and SPOT the lowest in $5$. Every method ends below
  base on GSM8K and ToolAlpaca for both students, so those two benchmarks carry the retention cost
  regardless of how the correction is obtained. GRPO on Qwen-LiveCodeBench is entered as no
  learning.}
  \label{tab:rq1-bwt}
  \setlength{\tabcolsep}{2pt}\begin{tabular}{ll c cc ccc}
    \toprule
    & & RLVR & \multicolumn{2}{c}{same-family $32$B teacher} & \multicolumn{3}{c}{black-box projector} \\
    \cmidrule(lr){3-3}\cmidrule(lr){4-5}\cmidrule(lr){6-8}
    Student & Benchmark & GRPO & OPD & Ours & SPOT & SODA & Ours \\
    \midrule
    \multirow{5}{*}{Qwen3-8B}
      & GSM8K          & $-0.017${\scriptsize$\pm$0.006} & $\mathbf{-0.013}${\scriptsize$\pm$0.007} & $-0.023${\scriptsize$\pm$0.019} & $-0.751${\scriptsize$\pm$0.039} & $-0.362${\scriptsize$\pm$0.140} & $-0.023${\scriptsize$\pm$0.009} \\
      & MATH           & $+0.003${\scriptsize$\pm$0.003} & $+0.010${\scriptsize$\pm$0.017} & $+0.077${\scriptsize$\pm$0.015} & $-0.366${\scriptsize$\pm$0.028} & $+0.041${\scriptsize$\pm$0.002} & $\mathbf{+0.137}${\scriptsize$\pm$0.006} \\
      & LiveCodeBench  & $+0.000${\scriptsize$\pm$0.000} & $-0.004${\scriptsize$\pm$0.018} & $+0.016${\scriptsize$\pm$0.013} & $\mathbf{+0.042}${\scriptsize$\pm$0.027} & $-0.058${\scriptsize$\pm$0.041} & $-0.018${\scriptsize$\pm$0.007} \\
      & SciKnowEval    & $+0.007${\scriptsize$\pm$0.000} & $+0.013${\scriptsize$\pm$0.026} & $\mathbf{+0.053}${\scriptsize$\pm$0.022} & $+0.042${\scriptsize$\pm$0.004} & $+0.020${\scriptsize$\pm$0.037} & $-0.057${\scriptsize$\pm$0.131} \\
      & ToolAlpaca     & $\mathbf{-0.013}${\scriptsize$\pm$0.009} & $-0.487${\scriptsize$\pm$0.274} & $-0.053${\scriptsize$\pm$0.015} & $-0.272${\scriptsize$\pm$0.073} & $-0.259${\scriptsize$\pm$0.067} & $-0.103${\scriptsize$\pm$0.023} \\
    \midrule
    \multirow{4}{*}{OLMo-2-7B}
      & GSM8K          & $-0.232${\scriptsize$\pm$0.048} & $-0.066${\scriptsize$\pm$0.015} & $\mathbf{-0.039}${\scriptsize$\pm$0.006} & $-0.556${\scriptsize$\pm$0.153} & $-0.342${\scriptsize$\pm$0.020} & $-0.094${\scriptsize$\pm$0.018} \\
      & MATH           & $-0.123${\scriptsize$\pm$0.049} & $-0.000${\scriptsize$\pm$0.025} & $-0.063${\scriptsize$\pm$0.009} & $-0.237${\scriptsize$\pm$0.020} & $-0.219${\scriptsize$\pm$0.008} & $\mathbf{+0.010}${\scriptsize$\pm$0.010} \\
      & SciKnowEval    & $-0.132${\scriptsize$\pm$0.047} & $+0.005${\scriptsize$\pm$0.033} & $\mathbf{+0.098}${\scriptsize$\pm$0.003} & $+0.083${\scriptsize$\pm$0.019} & $+0.023${\scriptsize$\pm$0.002} & $+0.076${\scriptsize$\pm$0.024} \\
      & ToolAlpaca     & $-0.704${\scriptsize$\pm$0.140} & $\mathbf{-0.004}${\scriptsize$\pm$0.015} & $-0.137${\scriptsize$\pm$0.047} & $-0.775${\scriptsize$\pm$0.049} & $-0.655${\scriptsize$\pm$0.059} & $-0.095${\scriptsize$\pm$0.036} \\
    \bottomrule
  \end{tabular}
\end{table}

\begin{table}[H]
  \centering
  \scriptsize
  \setlength{\tabcolsep}{2pt}
  \caption{\textbf{RQ1 per-cell forgetting}, $F=\max_{\tau\le T} y_\tau - y_T$ with $H_0$
  included in the max \citep{chaudhry2018rwalk}, mean\,$\pm$\,s.e.\ across seeds in raw retention
  points. Smaller is better: $F$ is the retention a run gave up between its best checkpoint and its
  last, so a small value means retention was held throughout rather than lost and partly regained.
  \textbf{Bold} $=$ smallest per row. SPOT is largest in $5$ of $9$ cells. The smallest is split
  evenly between OPD, GRPO and our same-family variant, so $F$ alone does not separate methods and
  should be read alongside specialization, since a run that moves the student little also has
  little to give up. GRPO on Qwen-LiveCodeBench is entered as no learning.}
  \label{tab:rq1-forget}
  \setlength{\tabcolsep}{2pt}\begin{tabular}{ll c cc ccc}
    \toprule
    & & RLVR & \multicolumn{2}{c}{same-family $32$B teacher} & \multicolumn{3}{c}{black-box projector} \\
    \cmidrule(lr){3-3}\cmidrule(lr){4-5}\cmidrule(lr){6-8}
    Student & Benchmark & GRPO & OPD & Ours & SPOT & SODA & Ours \\
    \midrule
    \multirow{5}{*}{Qwen3-8B}
      & GSM8K          & 0.020{\scriptsize$\pm$0.006} & \textbf{0.016}{\scriptsize$\pm$0.007} & 0.030{\scriptsize$\pm$0.015} & 0.751{\scriptsize$\pm$0.039} & 0.362{\scriptsize$\pm$0.140} & 0.024{\scriptsize$\pm$0.009} \\
      & MATH           & \textbf{0.001}{\scriptsize$\pm$0.001} & 0.030{\scriptsize$\pm$0.012} & 0.027{\scriptsize$\pm$0.015} & 0.533{\scriptsize$\pm$0.030} & 0.133{\scriptsize$\pm$0.004} & 0.010{\scriptsize$\pm$0.006} \\
      & LiveCodeBench  & \textbf{0.000}{\scriptsize$\pm$0.000} & 0.073{\scriptsize$\pm$0.029} & 0.047{\scriptsize$\pm$0.018} & 0.040{\scriptsize$\pm$0.020} & 0.070{\scriptsize$\pm$0.038} & 0.053{\scriptsize$\pm$0.007} \\
      & SciKnowEval    & 0.040{\scriptsize$\pm$0.010} & 0.047{\scriptsize$\pm$0.026} & \textbf{0.023}{\scriptsize$\pm$0.012} & 0.084{\scriptsize$\pm$0.009} & 0.109{\scriptsize$\pm$0.046} & 0.153{\scriptsize$\pm$0.129} \\
      & ToolAlpaca     & \textbf{0.013}{\scriptsize$\pm$0.009} & 0.490{\scriptsize$\pm$0.272} & 0.053{\scriptsize$\pm$0.015} & 0.272{\scriptsize$\pm$0.073} & 0.259{\scriptsize$\pm$0.067} & 0.103{\scriptsize$\pm$0.023} \\
    \midrule
    \multirow{4}{*}{OLMo-2-7B}
      & GSM8K          & 0.233{\scriptsize$\pm$0.048} & 0.070{\scriptsize$\pm$0.016} & \textbf{0.043}{\scriptsize$\pm$0.007} & 0.556{\scriptsize$\pm$0.153} & 0.342{\scriptsize$\pm$0.020} & 0.095{\scriptsize$\pm$0.018} \\
      & MATH           & 0.147{\scriptsize$\pm$0.038} & \textbf{0.023}{\scriptsize$\pm$0.015} & 0.077{\scriptsize$\pm$0.009} & 0.240{\scriptsize$\pm$0.022} & 0.219{\scriptsize$\pm$0.008} & 0.027{\scriptsize$\pm$0.015} \\
      & SciKnowEval    & 0.132{\scriptsize$\pm$0.047} & 0.063{\scriptsize$\pm$0.032} & \textbf{0.013}{\scriptsize$\pm$0.009} & 0.062{\scriptsize$\pm$0.006} & 0.060{\scriptsize$\pm$0.014} & 0.033{\scriptsize$\pm$0.018} \\
      & ToolAlpaca     & 0.704{\scriptsize$\pm$0.140} & \textbf{0.030}{\scriptsize$\pm$0.006} & 0.139{\scriptsize$\pm$0.048} & 0.789{\scriptsize$\pm$0.048} & 0.669{\scriptsize$\pm$0.058} & 0.098{\scriptsize$\pm$0.037} \\
    \bottomrule
  \end{tabular}
\end{table}

\subsection{Math-domain evaluation: in-domain and out-of-domain behavior}
\label{app:math-full}

\Cref{sec:rq1-main} measures retention on a held-out slice of the benchmark a method was trained
on. Here we ask what training on one domain does to abilities elsewhere. For each math-trained
Qwen3-8B method we evaluate its final checkpoint on five full test suites: GSM8K ($1{,}319$ problems)
and MATH-500 ($500$) in the trained domain, and AMC23 ($40$), MMLU ($14{,}042$) and HumanEval
($164$), none of which any method was trained on. The metric is sampled pass@1, one generation per
problem at $T{=}0.8$ and top-$p$ $0.95$. Each evaluation is repeated with three \emph{evaluation}
seeds and averaged, with the checkpoint held fixed: an evaluation seed changes only which
generations are sampled from that one model. The reported $\pm$ is the standard error of that mean
across the three replicates. Training and evaluation data are disjoint, with zero overlap between
the $1{,}991$-problem math hard set and the $500$-problem evaluation slice on normalized,
LaTeX-stripped text. \Cref{tab:math-seed0} reports the absolute scores and
\Cref{tab:math-seed0-delta} the change relative to the untrained student.

\begin{table}[H]
  \centering
  \small
  \caption{\textbf{Training on math corrections suggests retention is held outside the trained
  domain.} Final-checkpoint evaluation of the \emph{training-seed-$0$} math-trained Qwen3-8B
  methods, sampled pass@1 at
  $T{=}0.8$ averaged over three evaluation seeds, $\pm$ the standard error of that mean, on full
  public test splits. These are distinct from the in-domain held-out slices that
  \Cref{tab:rq1-main} uses for retention. GSM8K and MATH-500 are in the trained domain; AMC23, MMLU and HumanEval were never
  trained on. The black-box variant is the only arm whose score rises on the held-out split of
  the domain it trained on ($0.709 \to 0.749$ on MATH-500, where the same recipe with a $32$B
  projector reaches $0.615$, SPOT $0.513$ and SODA $0.590$), and it reaches $0.765$ on MMLU while
  remaining within a few points of base on GSM8K and HumanEval. \textbf{Bold} $=$ best and \underline{underline} $=$ second best per column among the
  six trained methods; the untrained base is a reference row and is not ranked. GRPO ranks high on
  several columns because it moves the student least, as \Cref{tab:math-seed0-delta} makes plain,
  so these absolute scores are best read alongside that table. These are single checkpoints on one
  domain, so the pattern is suggestive rather than conclusive, and AMC23 has only $40$ problems,
  where small differences should not be read as meaningful.}
  \label{tab:math-seed0}
  \begin{tabular}{lccccc}
  \toprule
  Method & GSM8K & MATH-500 & AMC23 & MMLU & HumanEval \\
  \midrule
  Base (Qwen3-8B) & 0.934\,$\pm$\,0.003 & 0.709\,$\pm$\,0.005 & 0.433\,$\pm$\,0.030 & 0.702\,$\pm$\,0.002 & 0.821\,$\pm$\,0.002 \\
  \midrule
  \textsc{Ours} & 0.915\,$\pm$\,0.001 & \textbf{0.749}\,$\pm$\,0.003 & \textbf{0.492}\,$\pm$\,0.036 & \underline{0.765}\,$\pm$\,0.001 & 0.787\,$\pm$\,0.000 \\
  \textsc{Ours}-32B & 0.917\,$\pm$\,0.001 & 0.615\,$\pm$\,0.005 & 0.400\,$\pm$\,0.050 & 0.724\,$\pm$\,0.001 & \textbf{0.827}\,$\pm$\,0.005 \\
  \textsc{Soda} & 0.882\,$\pm$\,0.004 & 0.590\,$\pm$\,0.019 & 0.342\,$\pm$\,0.022 & \textbf{0.766}\,$\pm$\,0.002 & 0.797\,$\pm$\,0.005 \\
  \textsc{Spot} & 0.916\,$\pm$\,0.002 & 0.513\,$\pm$\,0.011 & 0.325\,$\pm$\,0.029 & 0.750\,$\pm$\,0.002 & 0.813\,$\pm$\,0.014 \\
  GRPO & \textbf{0.932}\,$\pm$\,0.004 & \underline{0.700}\,$\pm$\,0.008 & \underline{0.425}\,$\pm$\,0.029 & 0.702\,$\pm$\,0.002 & \underline{0.821}\,$\pm$\,0.002 \\
  OPD & \underline{0.927}\,$\pm$\,0.002 & 0.692\,$\pm$\,0.005 & 0.408\,$\pm$\,0.008 & 0.677\,$\pm$\,0.002 & 0.815\,$\pm$\,0.005 \\
  \bottomrule
  \end{tabular}
\end{table}

\begin{table}[H]
  \centering
  \small
  \caption{\textbf{Only the black-box variant improves in the trained domain.} Change relative to
  the untrained Qwen3-8B, from the same evaluation as \Cref{tab:math-seed0}. Ours is alone in moving
  up on MATH-500 ($+0.041$) and on AMC23 ($+0.058$), and gains $+0.063$ on MMLU. The same recipe
  with a $32$B projector loses $0.093$ on MATH-500, and the mined-once and static-snapshot
  corrections lose $0.119$ and $0.195$. GRPO is flat throughout and OPD moves down. \textbf{Bold} $=$ largest change and \underline{underline} $=$ second largest per
  column; GRPO is near zero everywhere, so where it places second it has simply moved the least.}
  \label{tab:math-seed0-delta}
  \begin{tabular}{lccccc}
  \toprule
  Method & GSM8K & MATH-500 & AMC23 & MMLU & HumanEval \\
  \midrule
  \textsc{Ours} & $-0.019$ & $\mathbf{+0.041}$ & $\mathbf{+0.058}$ & \underline{$+0.063$} & $-0.035$ \\
  \textsc{Ours}-32B & $-0.017$ & $-0.093$ & $-0.033$ & $+0.022$ & $\mathbf{+0.006}$ \\
  \textsc{Soda} & $-0.052$ & $-0.119$ & $-0.092$ & $\mathbf{+0.064}$ & $-0.024$ \\
  \textsc{Spot} & $-0.018$ & $-0.195$ & $-0.108$ & $+0.047$ & $-0.008$ \\
  GRPO & $\mathbf{-0.002}$ & \underline{$-0.009$} & \underline{$-0.008$} & $-0.001$ & \underline{$+0.000$} \\
  OPD & \underline{$-0.006$} & $-0.017$ & $-0.025$ & $-0.025$ & $-0.006$ \\
  \bottomrule
  \end{tabular}
\end{table}

\subsection{Discussion}

\Cref{sec:rq1-main} names three cells that go to a baseline; the full rows are in
\Cref{tab:rq1-main}. On Qwen MATH, SODA reaches $0.136$ specialization at $0.664$ retention and
SPOT $0.087$ at $0.258$, against our $0.031$ at $0.760$: both fit the hard set further than we do,
and SPOT gives up most of its held-out accuracy doing so. On Qwen SciKnowEval, SODA reaches $0.227$
at $0.573$ against our $0.204$ at $0.497$, leading on both axes; our retention there carries a
standard error of $0.131$ across seeds, against $0.037$ for SODA and $0.004$ for SPOT, so that cell
is the least stable in the panel. On OLMo GSM8K, OPD reaches $0.496$ at $0.803$ against our $0.236$
at $0.830$.

The OLMo-2-7B LiveCodeBench cell is omitted from every RQ1 table and figure. No method moves
specialization appreciably above zero there, so a frontier anchored at the base is degenerate and
the cell contributes an area near zero for every method, our own included.

\subsection{Transfer to other benchmarks' hard problems}
\label{app:transfer}

This subsection gives the setup and the full results of the transfer experiment summarized in
\Cref{sec:rq1-main} and \Cref{tab:transfer-main}.

\noindent\textbf{Hard sets.} For every problem of every pool we draw $16$ samples from the
untrained Qwen3-8B at $T{=}0.8$ (seed $0$, thinking mode disabled, $3{,}072$ new tokens) and grade
them with the MATH grader. A problem is \emph{hard} when all $16$ are wrong, as in
\Cref{app:hardset}. The budget is set so that a hard problem is one the student cannot solve rather
than one it cannot finish (\Cref{app:train-extra}). This leaves $628$ of the $11{,}998$ MATH
training problems hard. We split the $628$ into $501$ training and
$127$ held-out problems, stratified by subject and level. Every other pool is used only for testing.
A test problem is removed when a MATH training problem covers at least
$60\%$ of its $13$-word phrases with the same answer, which removes $410$ AIME and $474$ DAPO-Math
problems.

\noindent\textbf{Training and evaluation.} Each method trains Qwen3-8B on the $501$ problems for
$250$ updates (about four passes) with the configuration of \Cref{app:exp-details} and a
$3{,}072$-token rollout budget, saving a checkpoint every $50$ updates, under one training seed. Ours,
SPOT and SODA draw their corrections from the black-box projector, which corrects $1{,}798$ of our
$1{,}844$ wrong attempts ($97.5\%$). The base model and every checkpoint draw $16$ samples per problem at
$T{=}0.8$ and top-$p$ $0.95$ (seed $1$, so the base scores slightly above zero on its own hard set);
pass@1 is the mean over the samples and pass@16 the share of problems solved at least once. A
change against the base is marked
when it is at least twice its paired standard error. The four other benchmarks contribute
$2{,}885$ hard problems in all, by benchmark in \Cref{tab:transfer-bench}, out of $6{,}874$ that
survive filtering and decontamination.

\noindent\textbf{Which benchmarks transfer.} \Cref{tab:transfer-bench} splits the cross-benchmark
gain of our best checkpoint by benchmark. It concentrates in Omni-MATH and DAPO-Math, which
together hold $2{,}536$ of the $2{,}885$ hard problems, and it holds under sampling and under greedy
decoding alike. OlympiadBench moves within its standard error and the AIME problems do not move.

\begin{table}[H]
  \centering
  \small
  \caption{\textbf{Cross-benchmark gain of Ours at its best checkpoint, by benchmark}, on each
  benchmark's hard problems. Sampled pass@1 over $16$ generations and one greedy decode; changes
  against the base $\pm$ paired standard error.}
  \label{tab:transfer-bench}
  \begin{tabular}{lrcccc}
  \toprule
  & & \multicolumn{2}{c}{Sampled pass@1} & & Greedy \\
  \cmidrule(lr){3-4}
  Benchmark & Hard & Base $\to$ Ours & Change & pass@16 & change \\
  \midrule
  OlympiadBench & 150 & $0.011\to0.017$ & $+0.006${\scriptsize$\pm$0.004} & $0.133\to0.147$ & $+0.000${\scriptsize$\pm$0.013} \\
  Omni-MATH & 2{,}178 & $0.010\to0.019$ & $\mathbf{+0.009}${\scriptsize$\pm$0.001} & $0.096\to0.132$ & $\mathbf{+0.015}${\scriptsize$\pm$0.003} \\
  AIME 1983--2023 & 199 & $0.016\to0.013$ & $-0.003${\scriptsize$\pm$0.003} & $0.146\to0.126$ & $-0.010${\scriptsize$\pm$0.010} \\
  DAPO-Math & 358 & $0.018\to0.033$ & $\mathbf{+0.016}${\scriptsize$\pm$0.004} & $0.201\to0.307$ & $\mathbf{+0.034}${\scriptsize$\pm$0.010} \\
  \midrule
  All four & 2{,}885 & $0.011\to0.020$ & $\mathbf{+0.009}${\scriptsize$\pm$0.001} & $0.114\to0.154$ & $\mathbf{+0.015}${\scriptsize$\pm$0.003} \\
  \bottomrule
  \end{tabular}
\end{table}

\noindent\textbf{A second training benchmark.} To check that the transfer is not specific to MATH,
we repeat the experiment for our method with Omni-MATH as the training benchmark: $501$ of its
$2{,}178$ hard problems, drawn in proportion to its difficulty levels and topics, and otherwise the
same recipe and protocol. The projector corrects $86$ to $96\%$ of the wrong attempts. The two runs
share most test problems, so \Cref{tab:transfer-omni} compares them on identical sets: the $127$
held-out MATH hard problems, the $1{,}677$ Omni-MATH hard problems that neither run trained on, and
the three benchmarks that neither run trained on (OlympiadBench, AIME 1983--2023 and DAPO-Math).
Transfer runs in both directions and to a similar degree. Training on Omni-MATH lifts the held-out
MATH hard problems about as much as training on MATH does (pass@16 $0.362$ against $0.354$), and
training on MATH lifts the Omni-MATH hard problems nearly as much as training on Omni-MATH does
(pass@1 $0.018$ against $0.022$). The Omni-MATH run transfers more to the three olympiad-style
benchmarks ($0.030$ against $0.024$). The direction of training therefore does not decide where the gain
appears.

\begin{table}[H]
  \centering
  \footnotesize
  \setlength{\tabcolsep}{3.2pt}
  \caption{\textbf{Training on MATH against training on Omni-MATH, on identical problems} (our
  method, $1$ training seed each, each at its own best checkpoint). ``$3$ others'' $=$
  OlympiadBench, AIME 1983--2023 and DAPO-Math, which neither run trained on, pooled over their
  hard problems. Sampled pass@1 unless marked. Each run's $^{*}$ is against its own evaluation of the base model; the two base
  evaluations agree to within $0.001$ in pass@1 and $0.008$ in pass@16.}
  \label{tab:transfer-omni}
  \begin{tabular}{lccccc}
  \toprule
  & \multicolumn{2}{c}{MATH held-out, hard} & \multicolumn{2}{c}{Omni-MATH held-out, hard} & $3$ others \\
  \cmidrule(lr){2-3}\cmidrule(lr){4-5}
  Trained on & pass@1 & pass@16 & pass@1 & pass@16 & hard \\
  \midrule
  Base & 0.013 & 0.150 & 0.009 & 0.092 & 0.016 \\
  \midrule
  MATH & \textbf{0.056}$^{*}$ & 0.354 & 0.018$^{*}$ & 0.128 & 0.024$^{*}$ \\
  Omni-MATH & 0.045$^{*}$ & \textbf{0.362} & \textbf{0.022}$^{*}$ & \textbf{0.146} & \textbf{0.030}$^{*}$ \\
  \bottomrule
  \end{tabular}
\end{table}

\section{RQ2: Edit size}
\label{app:rq2-mechanism}

\subsection{Setup and measurement}

Qwen3-8B student, GSM8K hard set (mined pass@$k$$=$$0$, $122$ problems), black-box projector,
one objective shared by every arm, lr $1\mathrm{e}{-}5$, $\beta{=}0.1$, $250$ steps, $3$ seeds. The axis is realized by \emph{edit magnitude}: the minimal edit, then re-prompting
the projector for progressively larger rewrites of the same attempt, plus the
student-agnostic \emph{gold} anchor. Because the projector rather than the prompt determines
how much text it actually replaces, arms are identified by their measured edit size
(\Cref{tab:rq2-onpol}); the two larger arms realize $3.2\times$ and $3.9\times$ the minimal
edit. Off-policyness is the log-likelihood difference between the correction and the attempt
under the current student $\pi_\theta$,
\begin{equation}
  \label{eq:rq2-onpol}
  \Lambda \;=\; \log\pi_{\theta}(y^+ \mid x)\;-\;\log\pi_{\theta}(y^- \mid x),
\end{equation}
and a larger $|\Lambda|$ means the correction sits farther from what the student would itself
produce. We summarize $\Lambda$ by its \emph{median} over training steps rather than its mean:
the per-step quantity is heavy-tailed, since a rollout is scored against a policy that has
advanced by up to five optimizer steps since it was sampled, and occasional large excursions
dominate an average. The median is stable across seeds
(\Cref{tab:rq2-onpol}) whereas the mean is not. For \emph{gold}, which replaces a long attempt
with a short reference solution, $\Lambda$ moves chiefly through length; \Cref{tab:rq2-onpol}
reports its value with that caveat, and it is omitted from the $|\Lambda|$ panel of
\Cref{fig:rq2}.

\subsection{Ablation arms}

\Cref{tab:rq2-onpol} lists the per-arm endpoint accuracies with the off-policyness scalar
$|\Lambda|$ and the mean edit-span sizes $|E^+|,|E^-|$ that define the arms. Both spans grow
together across the edit arms, so the two are not separable here. \emph{gold} is the exception in
kind rather than in degree, inserting the fewest tokens of any arm while deleting the entire
attempt.

\begin{table}[H]
  \centering
  \small
  \caption{RQ2 edit-size ablation (Qwen3-8B, GSM8K hard set, $3$ seeds, mean\,$\pm$\,s.e.).
  \emph{hard} and \emph{held} are final sampled pass@1; $\Lambda$ is the median signed
  likelihood shift over training under the current student (\eqref{eq:rq2-onpol}); a negative value
  means the correction is less likely than the attempt. $|E^+|,|E^-|$
  are mean edit-span sizes in tokens, and $|E^+|+|E^-|$ is the measured edit size by which the arms
  are identified. In the three outcome columns \textbf{bold} $=$ best and \underline{underline}
  $=$ second best, with $\Lambda$ read as nearer zero is better; the two edit-size columns describe
  the arms and are not ranked. $^{\dagger}$\emph{gold} is the one arm whose shift is positive: it replaces a
  long attempt with a short reference solution, which the student finds \emph{more} likely, not
  less. Its value moves chiefly through length and is not on the scale of the rewrites; its standard error is $62\%$ of the value against $5$ to $18\%$ for the edit
  arms.}
  \label{tab:rq2-onpol}
  \begin{tabular}{lccccc}
    \toprule
    Arm & $|E^+|\!+\!|E^-|$ & $|E^+|$ / $|E^-|$ & hard & held & $\Lambda$ \\
    \midrule
    minimal (ours) & $276$  & $160$ / $116$ & $\mathbf{0.560}${\scriptsize$\pm$0.015} & $\mathbf{0.916}${\scriptsize$\pm$0.016} & $\mathbf{-86}${\scriptsize$\pm$4} \\
    $3.2\times$             & $879$  & $308$ / $572$ & $0.301${\scriptsize$\pm$0.065}          & $\underline{0.847}${\scriptsize$\pm$0.067}          & $\underline{-372}${\scriptsize$\pm$67} \\
    $3.9\times$             & $1061$ & $359$ / $702$ & $0.254${\scriptsize$\pm$0.021}          & $0.807${\scriptsize$\pm$0.069}          & $-433${\scriptsize$\pm$35} \\
    gold (anchor)           & $926$  & $85$ / $841$  & $\underline{0.489}${\scriptsize$\pm$0.090}          & $0.744${\scriptsize$\pm$0.016}          & $+3969${\scriptsize$\pm$2471}$^{\dagger}$ \\
    \bottomrule
  \end{tabular}
\end{table}

\subsection{Mechanism}

\noindent\textbf{Full-utterance and diff-localized shift.}
\Cref{fig:rq2-fullvslocal} compares $\Lambda$ measured over the whole utterance against the
same quantity restricted to the changed spans $E^-\!/E^+$. The two agree at $r=0.9999$ across
all $2{,}249$ logged steps of the edit arms, with the localized quantity retaining $94$ to
$95\%$ of the magnitude, because the tokens before the first edit are both identical and identically
conditioned, so their contributions cancel exactly. Shared tokens \emph{after} an edit do not: they
are scored on prefixes that have already diverged, and they are what the remaining $5$ to $6\%$
measures. Localizing the measurement, as the objective of \eqref{eq:proj-dpo} does, therefore keeps
almost all of the signal.

\begin{figure}[H]
  \centering
  \includegraphics[width=.7\linewidth]{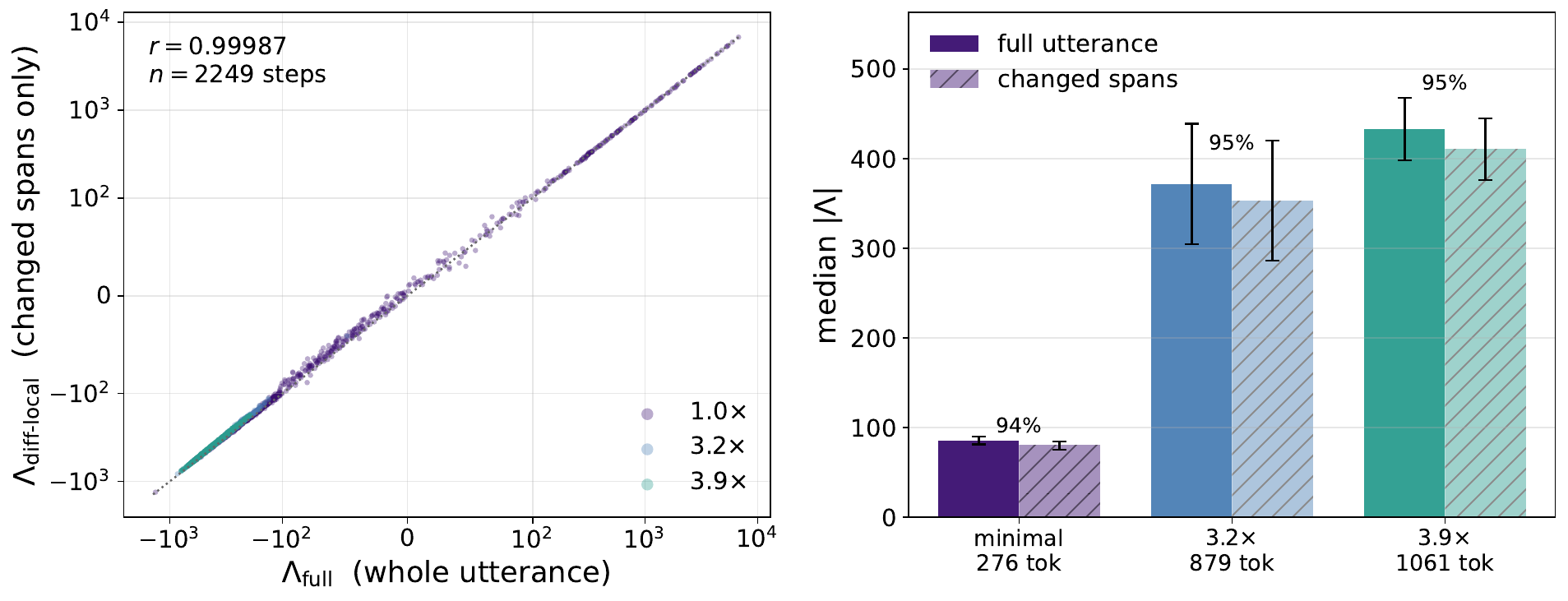}
  \caption{Full-utterance against diff-localized likelihood shift. \textbf{Left:} every logged
  step of the three edit arms, on symmetric log axes with the identity line dotted.
  \textbf{Right:} median $|\Lambda|$ per arm under both measurements, with the localized value
  as a percentage of the full one.}
  \label{fig:rq2-fullvslocal}
\end{figure}

\noindent\textbf{Edit size and the likelihood shift, step by step.}
\Cref{fig:rq2-delta-vs-edit} plots every logged step of the edit arms, with the edit size realized
at that step against the likelihood shift it demands. The relation is strongly monotone but not
linear. Pooling the arms, the median $|\Lambda|$ rises across every edit-size decile, from $82$ in
the smallest to $570$ in the largest (Spearman $\rho=0.83$), while the Pearson correlation is only
$0.28$. Both concavity and a heavy tail depress it, and the tail dominates: dropping the largest
$2\%$ of $|\Lambda|$ raises Pearson to $0.71$, whereas linearizing the concavity alone, by
correlating against $\log|\Lambda|$, reaches $0.50$. The effect also diminishes with the size of the edit: the
per-step Theil--Sen slope is $0.72$ [$0.63$, $0.82$] at the minimal edit against $0.17$
[$0.16$, $0.18$] and $0.14$ [$0.13$, $0.15$] at the two larger rewrites, and the relation holds
within each arm on its own ($\rho=0.52$, $0.82$, $0.78$). We therefore read edit size as a
monotone indicator of the likelihood shift rather than a proportional one.

\begin{figure}[H]
  \centering
  \includegraphics[width=.35\linewidth]{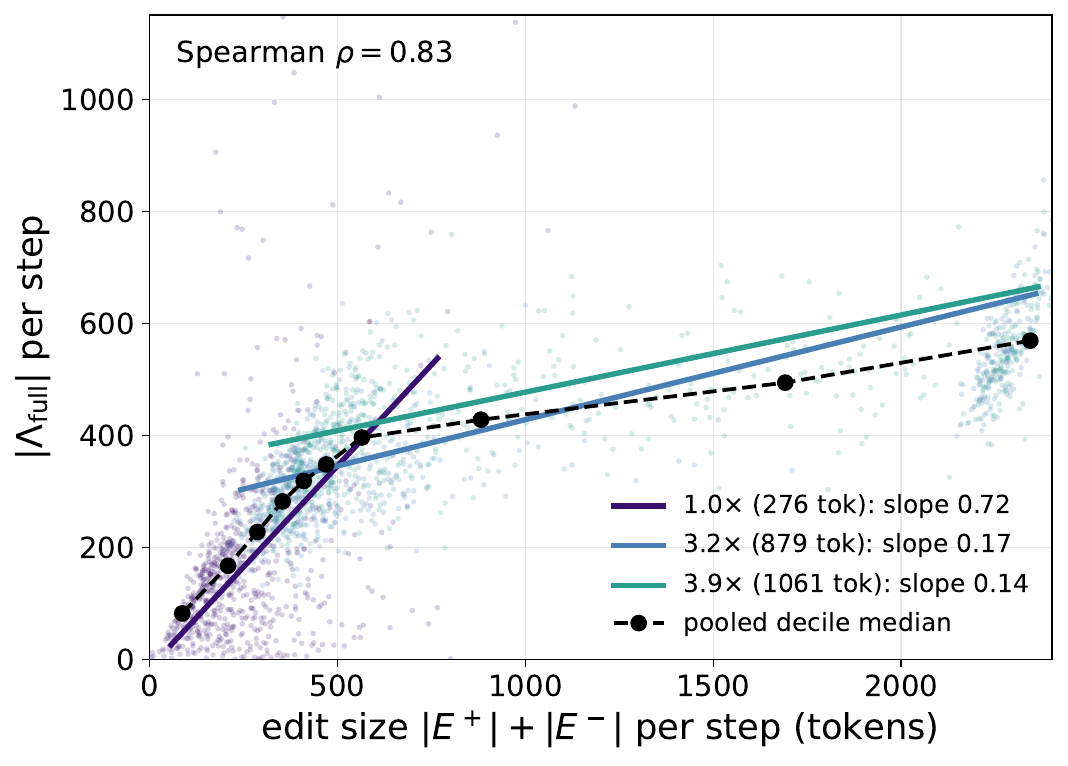}
  \caption{Per-step edit size against the likelihood shift it demands, for the three edit arms
  (one point per logged step, $n=2249$). Colored lines are per-arm Theil--Sen fits with their
  slopes; the dashed black line joins the pooled decile medians. Axes are clipped at the
  $98$th/$99.5$th percentile, so a few staleness-affected steps fall outside the view.}
  \label{fig:rq2-delta-vs-edit}
\end{figure}

\noindent\textbf{Insertion and deletion across the arms.}
\Cref{fig:rq2-eplus-eminus} reports the per-step ratio $|E^+|/|E^-|$. It falls monotonically along
the axis, from $2.46$ at the minimal edit through $1.33$ and $0.99$ at the larger rewrites to
$0.13$ for \emph{gold}, so what changes across the arms is not only the amount rewritten but
whether the correction predominantly adds to the attempt or replaces it.

\noindent\textbf{The two edit-size measures.} That asymmetry is also what separates the quantity
\Cref{sec:rq2-onpolicyness} reports from the one \Cref{thm:single} uses.
\Cref{tab:rq2-editmeasures} gives both for every arm.

\begin{table}[H]
  \centering
  \small
  \caption{\textbf{Token diff against the edit size of \Cref{def:edit}.} The token diff
  $|E^+|+|E^-|$ is what \Cref{sec:rq2-onpolicyness} reports. The edit size $m$ counts over the $H$
  decisions \Cref{def:patch} gives both rollouts, so it also carries the padding that equalizes
  their lengths: $m=|E^+|+|E^-|+\big||E^+|-|E^-|\big|$. Means over training steps and $3$ seeds,
  rounded. The two measures order the arms the same way except for \emph{gold}, which is
  intermediate under the token diff and the largest edit of any arm under $m$.}
  \label{tab:rq2-editmeasures}
  \begin{tabular}{lccccc}
  \toprule
  Arm & $|E^+|$ & $|E^-|$ & token diff & length change & $m$ \\
  \midrule
  minimal (ours) & $160$ & $116$ & $276$  & $44$  & $319$ \\
  $3.2\times$    & $308$ & $572$ & $879$  & $264$ & $1144$ \\
  $3.9\times$    & $359$ & $702$ & $1061$ & $343$ & $1405$ \\
  \emph{gold}    & $85$  & $841$ & $926$  & $755$ & $1681$ \\
  \bottomrule
  \end{tabular}
\end{table}

\begin{figure}[H]
  \centering
  \includegraphics[width=.35\linewidth]{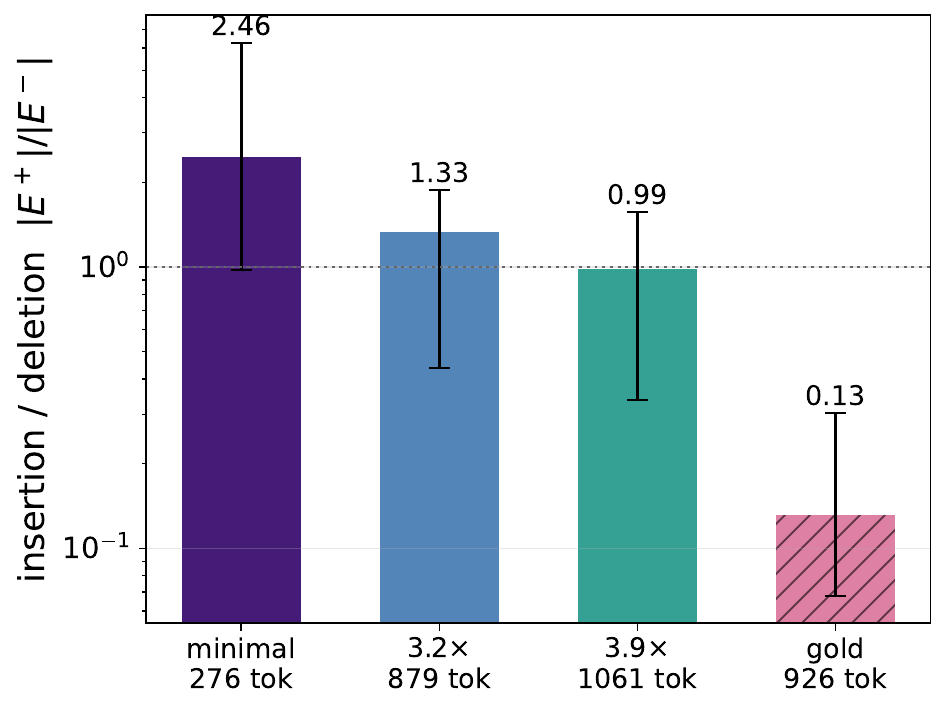}
  \caption{Insertion-to-deletion balance per arm, per-step median with interquartile range on a
  logarithmic axis. The dotted line marks equal insertion and deletion.}
  \label{fig:rq2-eplus-eminus}
\end{figure}

\subsection{Additional results}

\noindent\textbf{Specialization--retention plane.}
\Cref{fig:rq2-pareto} plots the two axes against each other. The minimal edit occupies the
upper-right region, dominating both larger rewrites on both axes, while \emph{gold} attains
intermediate specialization at the lowest retention of any arm.

\begin{figure}[H]
  \centering
  \includegraphics[width=.35\linewidth]{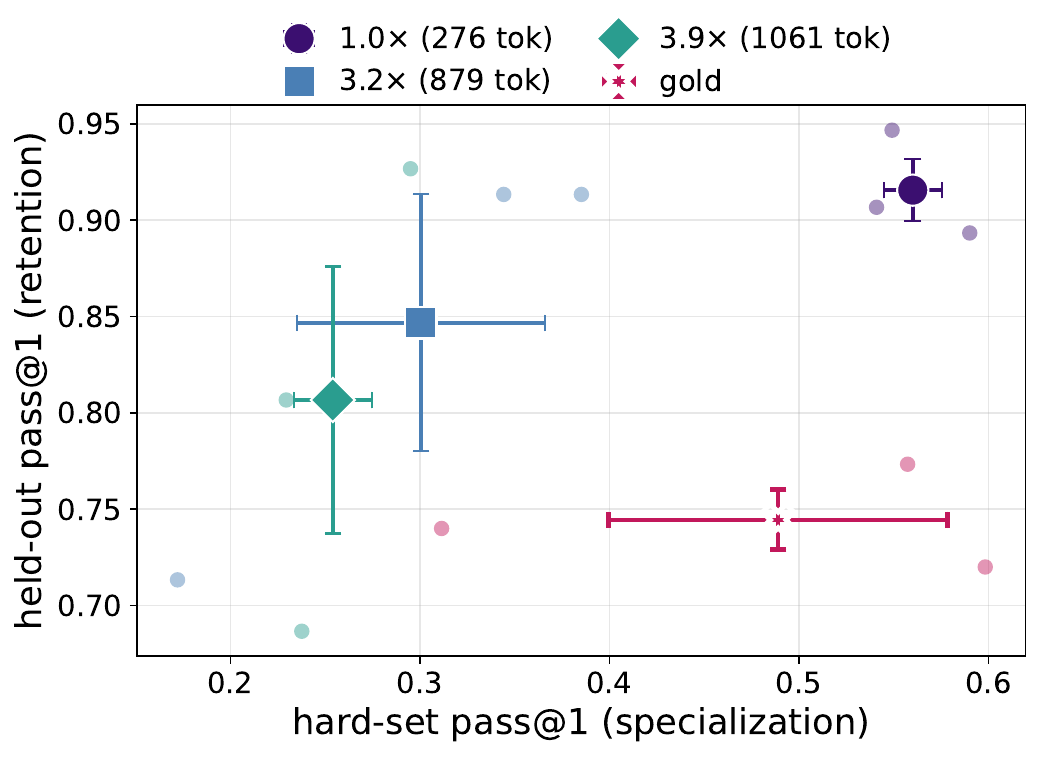}
  \caption{RQ2 specialization--retention plane ($3$ seeds; faint markers are individual seeds).
  The minimal edit dominates both larger rewrites on both axes; \emph{gold} is dominated on
  retention.}
  \label{fig:rq2-pareto}
\end{figure}

\noindent\textbf{Held-out trajectory.}
\Cref{fig:rq2-traj} tracks held-out accuracy over training. The edit arms separate in
edit-size order and decline smoothly, the minimal edit holding near its base value throughout.
\emph{gold} instead destabilizes early, falling to $0.54$ by step $75$ and remaining volatile
for the rest of the run before settling at $0.744$.

\begin{figure}[H]
  \centering
  \includegraphics[width=.35\linewidth]{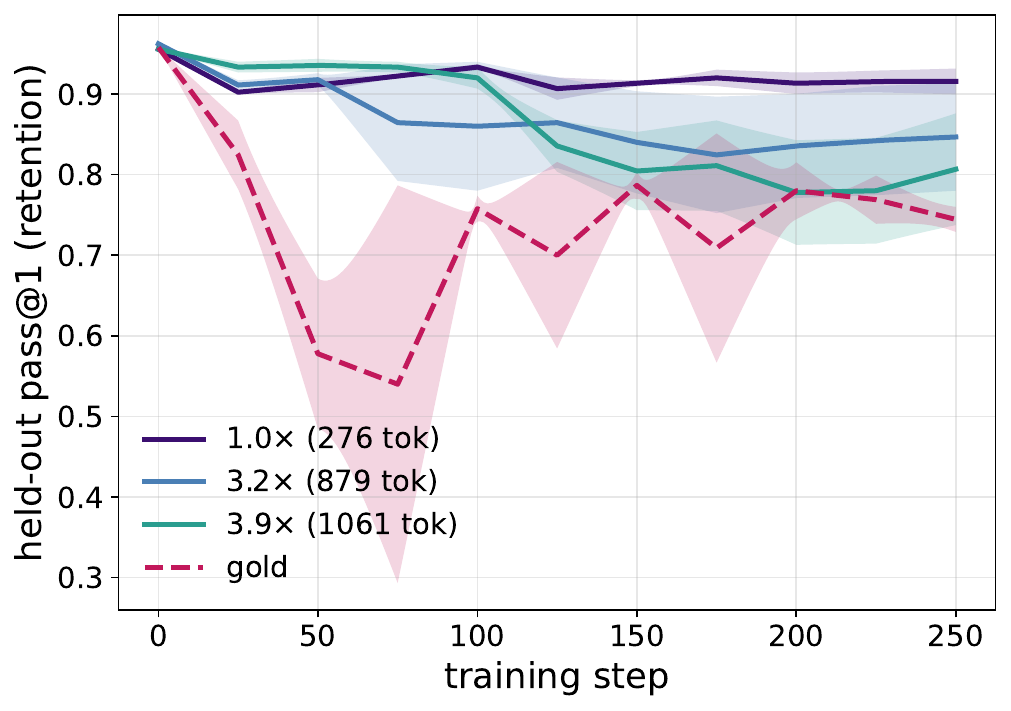}
  \caption{Held-out accuracy over training (mean $\pm$ s.e.\ over $3$ seeds). The edit arms
  decline smoothly and in edit-size order; only \emph{gold} destabilizes.}
  \label{fig:rq2-traj}
\end{figure}

\section{RQ4: Loss-recipe details}
\label{app:rq4-recipe}

\subsection{Objectives}

\noindent\textbf{Rollout terminology.} Each training step, the student rolls out attempts on the
hard problems and the verifier splits them: a \emph{correct rollout} $y^{\mathrm c}$ is one
it solves on its own ($R(x,y^{\mathrm c})=1$); a \emph{wrong rollout} $y^-$ is one it fails
($R=0$). For each wrong rollout the frozen projector produces the minimal-edit
\emph{projection} $y^+$ (verified-correct, \Cref{sec:method}), and a token-diff aligns
$(y^-,y^+)$ into the mistake span $E^-$ and fix span $E^+$ used by \eqref{eq:proj-dpo}. The
\emph{full-sequence} DPO variant replaces $E^+,E^-$ in \eqref{eq:proj-dpo} with the full
sequences $y^+,y^-$.

\noindent\textbf{Per-arm objectives.} The four contrastive arms optimize
$\mathcal{L}^{E}_{\mathrm{DPO}}$ (diff-local, on the changed spans $E^\pm$) or
$\mathcal{L}^{\mathrm{seq}}_{\mathrm{DPO}}$ (full-sequence, on $y^\pm$), each with or without
an added self-SFT term $\mathcal{L}_{\mathrm{SFT}}(y^{\mathrm c})$ on the student's own
correct rollouts. \emph{RFT} optimizes $\mathcal{L}_{\mathrm{SFT}}(y^{\mathrm c})$ alone (no
projected target, no contrast); \emph{SFT on projection $+$ self-SFT} optimizes
$\mathcal{L}_{\mathrm{SFT}}(y^+)+\mathcal{L}_{\mathrm{SFT}}(y^{\mathrm c})$ (SFT on the
projected correction $y^+$ plus self-SFT on the student's own correct rollouts, no contrast).

\subsection{Results}

Compute (batch size), number of epochs and learning rate are matched across arms.
\Cref{tab:rq4-arms} gives the absolute numbers behind \Cref{fig:rq4-loss}.

\begin{table}[H]
  \centering
  \small
  \caption{\textbf{RQ4 per-arm results} on the GSM8K hard set with Qwen3-8B, mean\,$\pm$\,s.e.\
  across $3$ seeds. \emph{hard} is hard-set pass@1 and \emph{rel.} the same quantity as a fraction
  of the adopted recipe, which is also the best arm; \emph{held} is held-out pass@1.
  \textbf{Bold} $=$ best per column. SFT on the projection carries the widest spread on both axes.}
  \label{tab:rq4-arms}
  \begin{tabular}{lccc}
    \toprule
    Arm & hard\,$\uparrow$ & rel.\,$\uparrow$ & held\,$\uparrow$ \\
    \midrule
    Diff-local DPO $+$ self-SFT & $\mathbf{0.601}${\scriptsize$\pm$0.005} & $\mathbf{1.000}$ & $0.938${\scriptsize$\pm$0.002} \\
    Full-sequence DPO $+$ self-SFT & $0.552${\scriptsize$\pm$0.022} & $0.918$ & $0.911${\scriptsize$\pm$0.014} \\
    SFT on projection $+$ self-SFT & $0.484${\scriptsize$\pm$0.078} & $0.805$ & $0.877${\scriptsize$\pm$0.054} \\
    Full-sequence DPO & $0.470${\scriptsize$\pm$0.026} & $0.782$ & $0.902${\scriptsize$\pm$0.002} \\
    Diff-local DPO & $0.440${\scriptsize$\pm$0.015} & $0.732$ & $0.920${\scriptsize$\pm$0.014} \\
    RFT & $0.096${\scriptsize$\pm$0.011} & $0.159$ & $\mathbf{0.944}${\scriptsize$\pm$0.002} \\
    \bottomrule
  \end{tabular}
\end{table}

Three things the table does not show. RFT is limited by how rarely the student succeeds unaided:
the hard set is mined at $\mathrm{pass}@k{=}0$, and its rollouts stay correct on about $6\%$ of
attempts through the run, so it trains on very little. The instability of SFT on the projection
runs across both axes, its standard error of $0.078$ on specialization and $0.054$ on retention
being three to sixteen and four to twenty-seven times those of any other arm. And the self-SFT
term is worth $+0.082$ to the full-sequence contrast and $+0.161$ to the diff-local one in absolute
pass@1: because the projection edits few tokens, a full-sequence contrast is spread over spans that
are already correct while a diff-local one acts only where the edit fell, and that concentration
leaves it with less absolute signal of its own, so it gains more from a term that raises the
likelihood of correct text outright.

\section{RQ5: Refinement details}
\label{app:rq5-refine}

\subsection{Refinement protocol}

A correction is \emph{valid} only if a reasoning check
confirms that its derivation process actually yields the answer, so a projection that reaches the
right answer by an unsound route is discarded. Round $0$ is a single correction pass. In each subsequent round
the frozen projector is re-prompted with its own previous correction together with the spans it
edited in that correction, and returns a refined correction. Across rounds we keep the valid
correction that the student itself is most likely to produce, scored by the base student's
log-likelihood on the edited tokens, which selects for corrections the student can absorb rather
than for the ones the projector prefers. The projector is never updated. All arms train the
Qwen3-8B student for $250$ steps with the default recipe of \Cref{sec:method-blackbox} over $3$
seeds.

\subsection{Results}

\begin{table}[H]
  \centering
  \small
  \caption{\textbf{RQ5 per-budget results} on the GSM8K hard set with Qwen3-8B,
  mean\,$\pm$\,s.e.\ across $3$ seeds. The first three columns describe the corrections the frozen
  projector supplies, averaged over training; the last two the student that learns from them, at
  the final checkpoint. \emph{corr.} is the running correction rate
  $\sum_t n^{\mathrm{corr}}_t/\sum_t n^{\mathrm{wrong}}_t$ plotted in
  \Cref{fig:rq5-refinement} (left), and \emph{edit} and median $|\Lambda|$ are the two bars in
  \Cref{fig:rq5-refinement} (right). \textbf{Bold} $=$ best per column. Every column describing the
  corrections improves monotonically with the budget; downstream, the specialization gain arrives
  in the first round.}
  \label{tab:rq5-arms}
  \begin{tabular}{lccccc}
    \toprule
    & \multicolumn{3}{c}{Corrections} & \multicolumn{2}{c}{Student} \\
    \cmidrule(lr){2-4}\cmidrule(lr){5-6}
    $K$ & corr.\ (\%)\,$\uparrow$ & edit $|E^+|{+}|E^-|$\,$\downarrow$ & median $|\Lambda|$\,$\downarrow$ & hard\,$\uparrow$ & held\,$\uparrow$ \\
    \midrule
    $0$ & $9.2${\scriptsize$\pm$0.2} & $256${\scriptsize$\pm$70} & $38.1${\scriptsize$\pm$1.8} & $0.366${\scriptsize$\pm$0.023} & $0.911${\scriptsize$\pm$0.014} \\
    $1$ & $15.4${\scriptsize$\pm$0.7} & $182${\scriptsize$\pm$16} & $34.2${\scriptsize$\pm$3.0} & $\mathbf{0.440}${\scriptsize$\pm$0.010} & $0.909${\scriptsize$\pm$0.010} \\
    $2$ & $\mathbf{16.7}${\scriptsize$\pm$1.1} & $\mathbf{130}${\scriptsize$\pm$24} & $\mathbf{26.2}${\scriptsize$\pm$4.8} & $\mathbf{0.440}${\scriptsize$\pm$0.021} & $\mathbf{0.929}${\scriptsize$\pm$0.006} \\
    \bottomrule
  \end{tabular}
\end{table}

\noindent\textbf{Projection quality.} Refinement improves the corrections on three counts at once,
and the three are linked. Re-prompting the projector with the spans it has already edited pushes it
to consolidate rather than rewrite, so the correction it returns reaches the right answer more
often and departs from the student's own attempt less: the median $|\Lambda|$ falls from $38.1$ to
$26.2$ over the two rounds, placing the refined targets nearer the student in the sense
\Cref{sec:rq2-onpolicyness} measures, and the edit size falls with it. The $K{=}0$ edit size
carries a standard error of $70$ tokens across seeds, so the level of that row is the least settled
number in the table; the ordering across budgets is what the comparison rests on.

\noindent\textbf{Downstream trade-off.} \Cref{fig:rq5-pareto} traces the attainable frontier over
training for each budget. Refinement extends it: the furthest specialization reached on the
frontier rises from $0.393$ at $K{=}0$ to $0.459$ at $K{=}1$ and $0.475$ at $K{=}2$. The $K{=}2$
frontier also lies above the other two over the whole sweep, and is the only one that rises above
the shared base in retention, reaching $0.960$ against the base's $0.955$, while $K{=}0$ and
$K{=}1$ only decline from it; the extension in specialization is therefore not paid for out of
retention. At the final checkpoint (\Cref{tab:rq5-arms}) the two budgets that refine are level on
specialization and $K{=}2$ holds retention slightly higher; against standard errors of
$0.010$--$0.021$ neither difference is resolved, and what separates the three budgets is the first
round.

\begin{figure}[H]
  \centering
  \includegraphics[width=.35\linewidth]{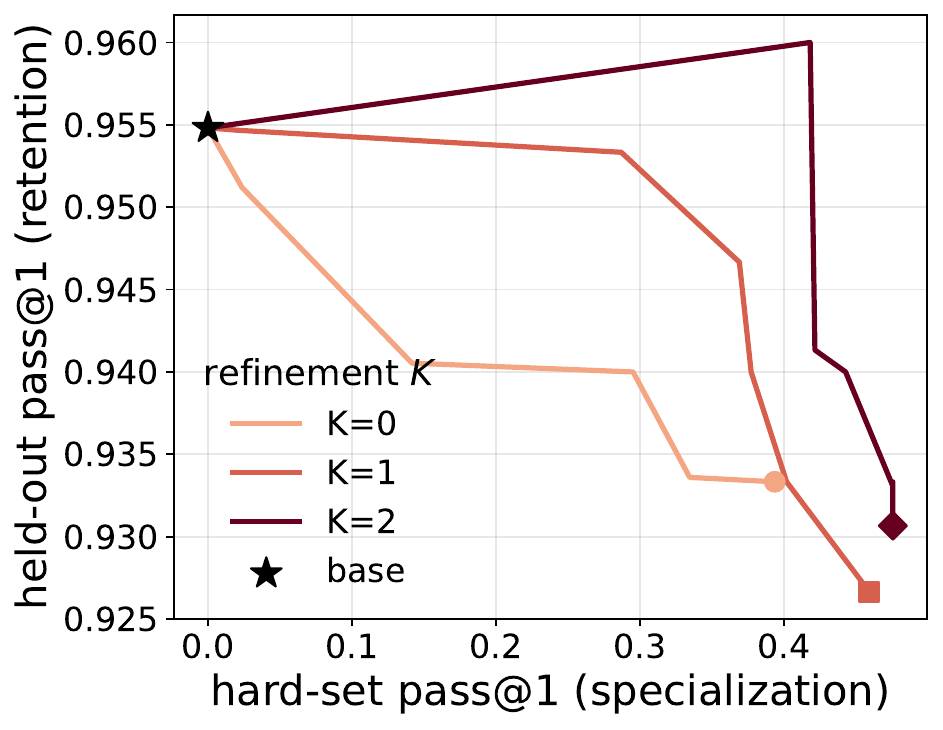}
  \caption{Specialization--retention frontiers over training for $K\in\{0,1,2\}$ ($3$ seeds
  pooled, frontier over the pooled points), gold-known, with the shared base marked $\star$. More
  refinement extends the frontier to higher specialization, and the $K{=}2$ frontier dominates the
  other two: it is the only one that rises above the base in retention. Note the narrow retention
  axis, which spans $0.925$--$0.960$.}
  \label{fig:rq5-pareto}
\end{figure}

\noindent\textbf{Correction rate over training.} \Cref{fig:rq5-corr-perstep} shows the
\emph{per-step} correction rate for $K\in\{0,1,2\}$. For every $K$ the rate rises early and then
declines as training proceeds, because the student learns the easily-correctable problems and the
wrong attempts that remain in later training are the hardest ones, which even the refined
projector rarely corrects. The main-text \Cref{fig:rq5-refinement} reports the running (cumulative)
version of this quantity.

\begin{figure}[H]
  \centering
  \includegraphics[width=.35\linewidth]{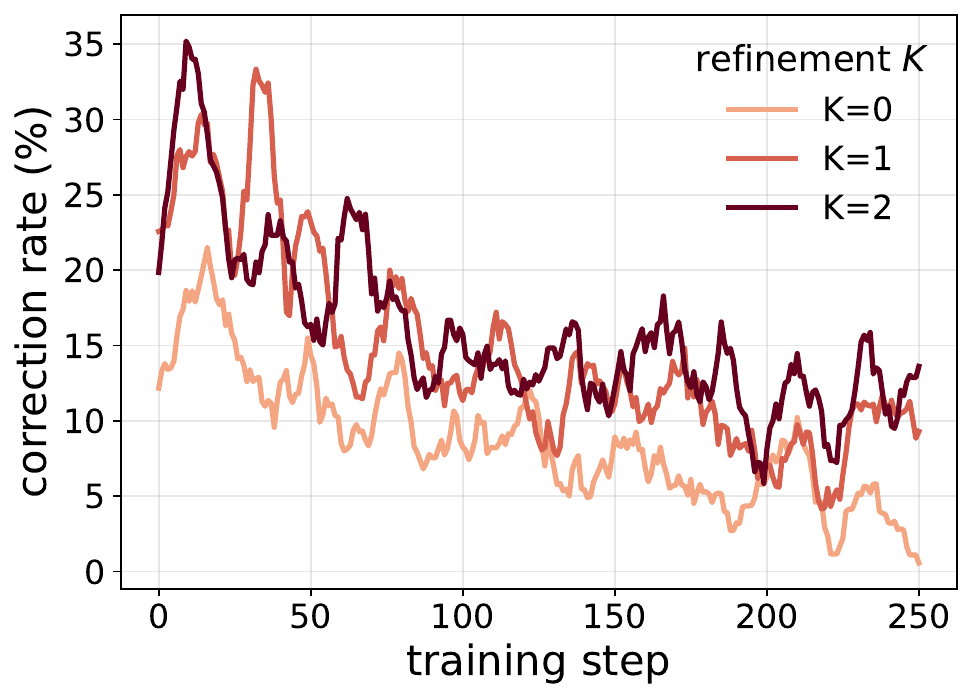}
  \caption{Per-step correction rate over training (pooled over $3$ seeds, $11$-step moving
  average), gold-known, $K\in\{0,1,2\}$. The rate peaks early and declines as only the hardest
  problems remain uncorrected; more refinement holds it higher throughout.}
  \label{fig:rq5-corr-perstep}
\end{figure}

\section{RQ6: Self-projection details}
\label{app:selfprojection}

Everywhere else in this paper the projector is a separate and stronger model. Here we make the same
model write the correction. We give the student its own wrong attempt together with privileged
information, the verified final answer, and ask it for a correct solution that stays close to what
it wrote. That output is then used exactly as a projector's correction would be, with the same
objective and the same training loop. Nothing else changes, so the run measures how much of the
method depends on the projector being stronger than the student.

The student is Qwen3-8B and the hard set is the $122$ GSM8K problems it fails on all $16$ attempts.
The teacher is a frozen copy of the student's own base weights, so it does not improve as the
student does. We train for $250$ updates over $3$ seeds, with the same data, budget and evaluation
as the edit-size runs of \Cref{sec:rq2-onpolicyness}, from which the reference rows are taken. The answer is supplied with an instruction to derive it rather than assert it;
without that, the model appends the answer line to unchanged wrong reasoning and the grader accepts
it. We also run the same idea at the logit level, where the frozen base model scores the student's
own generation under a prompt carrying the answer and the student trains on that token distribution
instead of on edited text.

\begin{table}[H]
  \centering
  \small
  \caption{\textbf{Self-projection recovers about $40\%$ of the specialization.} Qwen3-8B on the
  GSM8K hard set, $250$ updates, mean $\pm$ s.e. over $3$ seeds. The two middle rows use the
  student's own frozen base weights as the teacher. The reference rows come from
  \Cref{app:rq2-mechanism}, on the same hard set and budget.}
  \label{tab:selfprojection}
  \begin{tabular}{llcc}
  \toprule
  Teacher & Correction enters as & Hard set & Held-out \\
  \midrule
  none (base student) & -- & 0.000 & $\approx$0.955 \\
  \midrule
  the student itself & edited text & 0.219{\scriptsize$\pm$0.012} & 0.924{\scriptsize$\pm$0.021} \\
  the student itself & token distribution & 0.046{\scriptsize$\pm$0.010} & 0.937{\scriptsize$\pm$0.003} \\
  \midrule
  gold solution & full replacement & 0.489 & 0.744 \\
  black-box projector & minimal edit & \textbf{0.560} & 0.916 \\
  \bottomrule
  \end{tabular}
\end{table}

Self-projection works, but it does not match a stronger projector. It lifts the hard set from
$0.000$ to $0.219$ while retention stays at $0.924$, so an 8B model that is told the answer can
repair its own reasoning often enough to learn from it. A black-box projector reaches $0.560$ on
the same problems, and even the gold solution reaches $0.489$. Self-projection recovers about
$40\%$ of the specialization, which says that the projector's strength, and not the privileged
answer by itself, is what the method mostly relies on.

The logit version gives almost nothing, $0.046$ after $250$ updates. The answer does reach the
teacher, and the student never sees it, so the two token distributions differ from the first step;
what is missing is not information but capability. The teacher is the student's own base weights,
and being told which answer is correct does not let those weights produce a correct derivation, so
the distribution it supplies over the student's own tokens carries little the student can use. The
measured loss stays between $0.003$ and $0.006$ throughout, and making the target denser, a pure
forward {KL} against the teacher's top-$20$ tokens rather than its top-$1$, roughly doubled the
result from a very small base.

Two points help in reading the table. The text-level and logit-level arms inherit different
learning rates from their respective families in the main experiments ($10^{-5}$ and
$5\times10^{-6}$), so each is comparable to its own column rather than to the other; the comparison
against the black-box projector is at a matched $10^{-5}$. The 8B also does not edit minimally. It
appends a fresh derivation rather than repairing the attempt, which is part of why it falls short:
producing a small, targeted edit is precisely what the weaker projector cannot do.

\section{Comparison with concurrent self-projection}
\label{app:selfproj}

Concurrent work~\citep{karan2026sampling} also moves the supervision toward the student, but does so
by \emph{self-projection}: an MCMC sampler in which the student itself, shown the problem, a reference
solution and a partial response, regenerates the response block by block ($32$ blocks of $58$
tokens, $10$ proposals per block), accepting a proposal when it raises the student's mean
log-likelihood. The student is then fine-tuned on the resulting traces (\emph{Sampling SFT}),
optionally followed by GRPO (\emph{Sampling SFT + RL}). We reproduce the math setting of that work's
Table~1 with its released code, data, training recipes and evaluation scripts, and change only where
the correction comes from: the black-box projector minimally edits each wrong attempt given the
reference answer (\Cref{app:prompts}), and that work's grader must accept the edit.

\noindent\textbf{Setup.} The student is Qwen2.5-3B, trained on that work's $7{,}962$ MATH training
problems. It answers each once with that work's prompt ($T{=}0.6$, $1{,}856$ tokens) and solves
$3{,}675$; the projector corrects $4{,}265$ of the other $4{,}287$ attempts within two tries ($4{,}317$
calls, $11.2$M tokens). We train four arms from these data, one training seed each:
\begin{itemize}[leftmargin=1.4em,itemsep=1pt,topsep=1pt]
  \item \textbf{Black-box projection + SFT}: that work's SFT recipe (learning rate $5\times10^{-5}$,
    $2$ epochs, batch $16$, cosine schedule with $10$ warm-up steps, sequences up to $2{,}048$
    tokens) on the $3{,}675$ correct answers and the $4{,}265$ projections, in place of its MCMC
    traces.
  \item \textbf{Black-box projection + SFT + RL}: that work's RL stage on the previous arm (GRPO for
    $500$ updates of $256$ problems $\times$ $4$ rollouts, learning rate $10^{-6}$, KL coefficient
    $0.001$).
  \item \textbf{Ours, static pool}: the objective of \eqref{eq:proj-objective} on the same data, the
    $4{,}265$ pairs $(y^-,y^+)$ supplying the contrast and the $3{,}675$ correct answers the self-SFT
    term, for $1{,}985$ updates of $8$ records ($2$ passes, as for SFT) at learning rate $10^{-5}$. As
    in \Cref{sec:rq3-refresh}, the corrections are mined once against the untrained student.
  \item \textbf{Ours, online refresh}: the full recipe of \Cref{sec:method}, in which every update
    samples the current student on $8$ problems and projects its wrong answers, for $250$ updates.
\end{itemize}
Evaluation follows that work's protocol: one sample per problem at $T{=}0.6$ with up to $1{,}856$
tokens, on MATH levels $3$--$5$ ($1{,}024$ problems), AMC ($83$), MATH-500 ($500$) and GSM8K
($1{,}320$) for the trained skill, and on SciKnowEval chemistry ($600$) and MMLU
(lm-evaluation-harness~\citep{gao2024lmeval}, $0$-shot) for prior capabilities. Our SFT ran in TRL
rather than verl~\citep{sheng2024verl}, with
the same settings.

\begin{table}[H]
  \centering
  \footnotesize
  \setlength{\tabcolsep}{3.6pt}
  \caption{\textbf{Black-box projection matches self-projection} in the math setting of
  \citet{karan2026sampling} (Qwen2.5-3B). Upper block as reported there, not rerun; lower block
  ours, evaluated with its released scripts and protocol (one sample per problem). Avg.\ is the mean
  of the four math columns. $^{*}$ $=$ paired change of at least two standard errors against our own
  evaluation of the untrained student (math and chemistry columns).}
  \label{tab:selfproj}
  \begin{tabular}{lccccc|cc}
  \toprule
  & MATH(3,4,5) & AMC & MATH-500 & GSM8K & Avg. & Chemistry & MMLU \\
  \midrule
  Base (reported) & 0.315 & 0.133 & 0.245 & 0.579 & 0.318 & 0.283 & 0.651 \\
  SFT & 0.243 & 0.100 & 0.168 & 0.455 & 0.241 & 0.222 & 0.648 \\
  OPSD~\citep{zhao2026opsd} & 0.267 & 0.084 & 0.332 & 0.524 & 0.302 & 0.242 & 0.652 \\
  GRPO & 0.457 & 0.249 & 0.313 & 0.808 & 0.457 & 0.278 & 0.652 \\
  UFT~\citep{liu2025uft} & 0.470 & 0.293 & 0.297 & 0.746 & 0.452 & 0.283 & 0.653 \\
  Sampling SFT & 0.495 & 0.277 & 0.582 & 0.782 & 0.534 & 0.266 & 0.651 \\
  Sampling SFT + RL & 0.545 & 0.241 & 0.652 & 0.830 & 0.567 & 0.285 & 0.652 \\
  \midrule
  Black-box projection + SFT & 0.514$^{*}$ & 0.277 & 0.578 & 0.812$^{*}$ & 0.545 & 0.287 & 0.649 \\
  Black-box projection + SFT + RL & 0.561$^{*}$ & 0.301 & 0.622$^{*}$ & 0.832$^{*}$ & 0.579 & 0.268 & 0.645 \\
  Ours, static pool & 0.553$^{*}$ & 0.277 & 0.586 & 0.812$^{*}$ & 0.557 & 0.262 & 0.651 \\
  Ours, online refresh & 0.488$^{*}$ & 0.277 & 0.536 & 0.805$^{*}$ & 0.527 & 0.250 & 0.654 \\
  \bottomrule
  \end{tabular}
\end{table}

\noindent\textbf{Findings.} SFT on black-box projections reaches a math average of $0.545$ against
$0.534$ for Sampling SFT, and $0.579$ against $0.567$ once the same RL stage is added
(\Cref{tab:selfproj}). Within our own runs, the objective of
\eqref{eq:proj-objective} on the static pool beats SFT on the same data ($0.557$ against $0.545$;
$+0.039\pm0.014$ on MATH(3,4,5)) and matches SFT + RL on MATH(3,4,5) to within noise, at
$1/24$ of its compute (\Cref{tab:selfproj-compute}). Prior capabilities
hold for every arm: MMLU stays within $0.006$ of base, and no change in Chemistry is significant.

Two features of this setting bear on the online arm. The static pool covers $7{,}962$ problems and
is passed over only twice, so it never reaches the regime in which a pool has been reused for
several rounds, which is where the static arm of \Cref{sec:rq3-refresh} falls; and the online run
trains on an eighth as many records. Under these conditions the static pool scores higher ($0.557$
against $0.527$).

\begin{table}[H]
  \centering
  \small
  \caption{\textbf{Training compute} of each arm. Student FLOPs use $2N$ per forward token, $6N$ per
  training token and $12N$ per RL token ($N{=}3.09$B; attention's context term omitted, about
  $5\%$ at these lengths). The MCMC sampler is counted \emph{with} prefix caching, which is
  the accounting most favourable to it; counted from its released implementation without caching it
  costs $1.3\times10^{19}$ and the ratios become $46\times$, $24\times$ and $184\times$.}
  \label{tab:selfproj-compute}
  \begin{tabular}{lcc}
  \toprule
  Arm & Student FLOPs & Against Sampling SFT \\
  \midrule
  Sampling SFT & $8.4\times10^{18}$ & $1\times$ \\
  Sampling SFT + RL & $2.1\times10^{19}$ & -- \\
  Black-box projection + SFT & $2.8\times10^{17}$ & $30\times$ less \\
  Black-box projection + SFT + RL & $1.3\times10^{19}$ & -- \\
  Ours, static pool & $5.3\times10^{17}$ & $16\times$ less \\
  Ours, online refresh & $6.9\times10^{16}$ & $121\times$ less \\
  \bottomrule
  \end{tabular}
\end{table}

\noindent\textbf{Compute.} The MCMC sampler accounts for $98\%$ of Sampling SFT's compute: each
proposal re-reads a $521$-token prompt, regenerates part of the response and is re-scored, about
$2.5\times10^{5}$ token passes per training problem. Black-box projection uses at
least $30\times$ less compute on the student's hardware. The RL stage costs about as much as the
whole MCMC step, so it dominates both arms that include it.

\section{Theoretical analysis}

\subsection{The Patch MDP: Model and Assumption}
\label{app:setup}

This appendix gives the model and the assumption behind \Cref{sec:theory}; \Cref{app:arms-proofs}
gives the proofs.

\begin{definition}[Patch MDP]
\label{def:patch}
We define a \emph{patch MDP} as the tuple $\mathcal M=(\mathcal S,\mathcal A,p,r,\rho,\gamma)$ over a
vocabulary $\mathcal V$ with token budget $H$:
\begin{itemize}[leftmargin=1.9em,itemsep=1pt,topsep=2pt]
  \item $\mathcal S$: a state is a full history $s=(x,y')$, a problem together with the tokens $y'$
        generated so far; $s_t=(x,y_{1:t})$ is the state before decision $t$, and $|s|$ counts the
        generated tokens.
  \item $\mathcal A$: an action is a \emph{patch}, a token span fitting the remaining budget,
        $\mathcal A(s)=\{w\in\mathcal V^{h}:1\le h\le H-|s|\}$.
  \item $p$: transitions are deterministic and append the patch, $p(s\oplus w\mid s,w)=1$.
  \item $r$: the reward is token-additive,
        $r(s,w)=\sum_{j=1}^{|w|}r_{\mathrm{tok}}(s\oplus w_{<j},w_j)$ with
        $r_{\mathrm{tok}}\in[0,1]$, so a rollout returns at most $H$.
  \item $\rho$: the initial state distribution is the distribution over problems,
        $s_0=(x,\varnothing)$ with $x\sim\rho$.
  \item $\gamma=1$: no discounting. $\mathcal V$ contains a terminal symbol, and a rollout that emits
        it moves to an absorbing state of zero reward and zero value.
\end{itemize}
Each decision emits one token, so a rollout takes exactly $H$ of them, indexed $t=0,\dots,H-1$ with
$s_H$ absorbing. Then
$d^{\pi}(s)=\tfrac1H\sum_{t=0}^{H-1}\Pr_{\pi}[s_t=s]$ is a probability distribution over states, the
\emph{normalized occupancy} of $\pi$.
\end{definition}

Write $V^\star$ and $Q^\star$ for the optimal value and action-value of $\mathcal M$. The return of a
policy and its optimum are
\begin{equation}
\label{eq:return}
  J(\pi)\;=\;\mathbb{E}_{s_0\sim\rho}\,\mathbb{E}_{\pi}\Big[\sum_{t=0}^{H-1}r_{\mathrm{tok}}(s_t,w_t)\Big],
  \qquad
  J^\star\;=\;\max_{\pi}J(\pi)\;=\;\mathbb{E}_{s_0\sim\rho}\big[V^\star(s_0)\big],
\end{equation}
the second equality because an optimal policy is optimal from every state.

\begin{definition}[Optimality gap]
\label{def:gap}
$\Delta(s,w)\coloneqq V^\star(s)-Q^\star(s,w)$ is the return an action forfeits against the best one
available at $s$, and $\Delta_{\max}\coloneqq\sup_{s,w}\Delta(s,w)$, finite because returns lie in
$[0,H]$. It is a constant of $\mathcal M$, and plays the role of the cost-to-go constant $u$ of
\citet{ross2011dagger}.
\end{definition}

\begin{definition}[Edit count and edit size]
\label{def:edit}
For histories $s,s'$,
\[
  e(s,s')\;\coloneqq\;|s|+|s'|-2\,\big|\mathrm{LCS}(s,s')\big|,
\]
where $\mathrm{LCS}(s,s')$ is a longest common subsequence~\citep{hirschberg1975lcs} of their
generated tokens. This counts the insertions and deletions carrying one into the other, the
string-to-string correction distance of \citet{wagner1974string} restricted to those two operations.
The \emph{edit size} of a correction is $m\coloneqq e(y^-,y^+)$, the two rollouts compared at the $H$
decisions \Cref{def:patch} gives them, absorbing padding included. It exceeds the token-diff count
$|E^-|+|E^+|$ of \Cref{sec:method}, which \Cref{sec:rq2-onpolicyness} measures, by exactly the change
in answer length: $m=|E^-|+|E^+|+\big|\,|y^-|-|y^+|\,\big|$.
\end{definition}

\paragraph{The learning problem.}
Training runs for $M$ rounds. In round $i$ the learner $\pi_i$ samples rollouts, a frozen projector
rewrites the wrong ones, and the learner is scored on those rewrites and updated once. The learner
ranges over a policy class $\Pi=\{\pi_\theta:\theta\in\Theta\}$ with $\Theta\subseteq\mathbb R^{d}$
convex of Euclidean diameter $D$, so minimizing over $\pi\in\Pi$ and over $\theta\in\Theta$ are the
same operation; only \Cref{prop:rate} uses this.

\paragraph{Where the loss is read.}
The learner corrects a whole rollout once and scores the correction in a single forward pass, so the
loss is evaluated at \emph{corrected} prefixes, not at the states the learner visits. Write
$\tilde d_i(\tilde s)=\tfrac1H\sum_{t=0}^{H-1}\Pr[\tilde s_t=\tilde s]$ for the occupancy of those
corrected prefixes in round $i$, the state-visitation distribution the loss is read under and the
counterpart of $d^{\pi_i}$. For any distribution $d$ over states, let
\begin{equation}
\label{eq:loss}
  L(d,\pi)\;\coloneqq\;\mathbb{E}_{s\sim d}\,\mathbb{E}_{w\sim\pi(\cdot\mid s)}\big[\Delta(s,w)\big]
\end{equation}
be the mean gap of $\pi$ under $d$. Performance is controlled by $L(d^{\pi_i},\pi_i)$, but the
algorithm can only reduce $L(\tilde d_i,\pi)$.

\paragraph{The projector.}
The projector edits a wrong attempt into a correct one, and $w^\star(s)$ denotes its token at $s$. We
idealize it as an expert: its token is optimal wherever we read it, so $\Delta(s,w^\star(s))=0$.
\Cref{cor:imperfect} gives the bound for a projector that is suboptimal.

\begin{definition}[Surrogate, realizability, average regret]
\label{def:regret}
The surrogate objective $\tilde\varepsilon_i(\pi)$ for policy $\pi$ is the $0$--$1$ disagreement
with the projector, measured under $\tilde d_i$:
\[
  \tilde\varepsilon_i(\pi)
  \;\coloneqq\;\mathbb{E}_{\tilde s\sim\tilde d_i}\Big[\Pr_{w\sim\pi(\cdot\mid\tilde s)}\big[w\neq w^\star(\tilde s)\big]\Big]
  \;=\;\mathbb{E}_{\tilde s\sim\tilde d_i}\big[\,1-\pi(w^\star(\tilde s)\mid\tilde s)\,\big].
\]
The approximation and optimization terms are
\[
  \tilde\varepsilon\;\coloneqq\;\min_{\pi\in\Pi}\frac1M\sum_{i=1}^{M}\tilde\varepsilon_i(\pi),
  \qquad
  \tilde\gamma_M\;\coloneqq\;\frac1M\Big(\sum_{i=1}^{M}\tilde\varepsilon_i(\pi_i)
      -\min_{\pi\in\Pi}\sum_{i=1}^{M}\tilde\varepsilon_i(\pi)\Big),
\]
so that $\tfrac1M\sum_i\tilde\varepsilon_i(\pi_i)=\tilde\varepsilon+\tilde\gamma_M$ and the
learner is \emph{no-regret} when $\tilde\gamma_M\to0$. Here $\tilde\varepsilon$ is a minimum
over the policy class, so it measures how well the class can reproduce the projector's tokens. It is
unaffected by how much return those tokens forfeit; that enters separately, as $\bar\delta$ in
\Cref{cor:imperfect}.
\end{definition}

\subsubsection{The assumption}
\label{app:assumption}

The analysis rests on \Cref{asmp:eta}, stated in \Cref{sec:theory}: the map
$s\mapsto\mathbb{E}_{w\sim\pi(\cdot\mid s)}[\Delta(s,w)]$ is $\eta$-Lipschitz in the edit count
$e(\cdot,\cdot)$, uniformly over $\pi\in\Pi$. It is the only assumption the analysis makes about
the MDP. Three points about it follow.

The Lipschitz constant is uniform over $\Pi$ and over pairs of histories, but the proof invokes it only at the
pairs $(s_t,\tilde s_t)$ that \Cref{lem:transfer} compares, so $\eta$ may be taken as the largest
shift over those pairs.

The condition is placed on the \emph{averaged} gap rather than on $\Delta(s,w)$ at a fixed $w$. The
proof compares $\mathbb{E}_{w\sim\pi}[\Delta]$ at two different prefixes, and moving from one to the
other shifts the action distribution as well as the gap, so both must be controlled. Two pointwise
conditions suffice. Suppose the gap is itself Lipschitz,
$|\Delta(s',w)-\Delta(s,w)|\le\eta_\Delta\,e(s,s')$, and the policy is smooth in the same count,
$\mathrm{TV}\big(\pi(\cdot\mid s),\pi(\cdot\mid s')\big)\le\kappa\,e(s,s')$. Adding and
subtracting $\mathbb{E}_{w\sim\pi(\cdot\mid s')}[\Delta(s,w)]$ splits the difference in two, bounded by
$\eta_\Delta e$ and, since $\Delta$ takes values in $[0,\Delta_{\max}]$ and its oscillation is
therefore at most $\Delta_{\max}$, by $\Delta_{\max}\kappa e$. Together they give \Cref{asmp:eta}
with $\eta\le\eta_\Delta+\Delta_{\max}\kappa$.

\Cref{asmp:eta} is also the only reason any result below depends on the size of an edit. An argument
that knows only \emph{which} decisions sit downstream of an edit pays $\Delta_{\max}$ at each of
them, which is identical for a one-token edit and for a hundred-token edit at the same position:
``the prefix differs'' is a binary predicate and it saturates. \Cref{asmp:eta} instead charges in
proportion to $e(\cdot,\cdot)$, which every edit-size claim below uses.

\subsection{Proofs}
\label{app:arms-proofs}

\subsubsection{The gap decomposes across a patch}

\begin{lemma}[Patch additivity]
\label{lem:patchadd}
For patch MDP,
$\ \Delta(s,w_{1:h})=\sum_{j=1}^{h}\Delta(s\oplus w_{<j},w_j)$.
\end{lemma}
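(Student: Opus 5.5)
Write $s'_j \coloneqq s\oplus w_{<j}$ for the intermediate histories inside the patch, so $s'_1=s$ and $s'_{h+1}=s\oplus w_{1:h}$. The plan is a telescoping argument over these histories, using only the deterministic, token-additive structure of \Cref{def:patch}.

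\textbf{Step 1: per-token Bellman identity.} Because transitions are deterministic and the reward of a patch is the sum of its token rewards, applying a patch $w$ at $s$ gives
\[
Q^\star(s,w)=r(s,w)+V^\star(s\oplus w).
\]
For a single token $v$ this reads $Q^\star(s,v)=r_{\mathrm{tok}}(s,v)+V^\star(s\oplus v)$. One point needs checking here: the optimal value of a state must be the same whether patches of arbitrary length or single tokens are allowed. Any patch is a sequence of single-token actions with the same total reward and the same endpoint. Conversely, a single token is itself a patch of length one. Hence $V^\star$ is unchanged, and the optimal values are well defined under either action granularity.

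\textbf{Step 2: telescope.} Summing the single-token gaps along the patch gives
\[
\sum_{j=1}^{h}\Delta(s'_j,w_j)=\sum_{j=1}^{h}\big[V^\star(s'_j)-r_{\mathrm{tok}}(s'_j,w_j)-V^\star(s'_{j+1})\big].
\]
The value terms telescope to $V^\star(s)-V^\star(s\oplus w_{1:h})$. The reward terms sum to $r(s,w_{1:h})$ by the token-additive definition of $r$. The right-hand side is therefore
\[
V^\star(s)-\big[r(s,w_{1:h})+V^\star(s\oplus w_{1:h})\big]=V^\star(s)-Q^\star(s,w_{1:h}),
\]
which equals $\Delta(s,w_{1:h})$ by Step 1 and \Cref{def:gap}.

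\textbf{Main obstacle: the terminal symbol and the budget.} The delicate part is the edge case where the terminal symbol appears inside $w$, or the budget is reached. After the terminal, every intermediate history is absorbing, with zero reward and zero value. All later single-token gaps $V^\star-Q^\star$ are then $0-0=0$, so the identity still holds term by term. I will also check that admissibility is consistent: every intermediate $w_{\le j}$ must respect $|s|+j\le H$, which follows from $h\le H-|s|$. So every intermediate token is a legal action and every $Q^\star$ in the sum is defined.
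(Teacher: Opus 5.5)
Your proposal is correct and follows the paper's proof. Both arguments use the deterministic Bellman identity $Q^\star(s,w)=r(s,w)+V^\star(s\oplus w)$, applied to the whole patch and to each single token, together with token-additivity of $r$ and a telescoping of $V^\star$ over the intermediate histories $s\oplus w_{<j}$. You only run the telescoping from the sum of per-token gaps back to the patch gap, whereas the paper runs it forward. Your extra checks on action granularity and on terminal/budget edge cases are harmless but not needed: a single token is already a length-one patch, and the algebra holds unchanged at absorbing states.
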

\begin{proof}
Since the transitions of Patch MDP are deterministic (c.f. \Cref{def:patch}),  $Q^\star(s,w_{1:h})=r(s,w_{1:h})+V^\star(s\oplus w_{1:h})$, and by
\Cref{def:patch} the reward splits as
$r(s,w_{1:h})=\sum_{j=1}^{h}r_{\mathrm{tok}}(s\oplus w_{<j},w_j)$. Hence
\begin{align*}
  \Delta(s,w_{1:h})
    &= V^\star(s)-r(s,w_{1:h})-V^\star(s\oplus w_{1:h})
       \tag{by determinism}\\
    &= V^\star(s)-V^\star(s\oplus w_{1:h})-\sum_{j=1}^{h}r_{\mathrm{tok}}(s\oplus w_{<j},w_j)
       \tag{by \Cref{def:patch}}\\
    &= \sum_{j=1}^{h}\big[V^\star(s\oplus w_{<j})-V^\star(s\oplus w_{<j+1})\big]
       -\sum_{j=1}^{h}r_{\mathrm{tok}}(s\oplus w_{<j},w_j)
       \tag{telescoping}\\
    &= \sum_{j=1}^{h}\big[V^\star(s\oplus w_{<j})-r_{\mathrm{tok}}(s\oplus w_{<j},w_j)-V^\star(s\oplus w_{<j+1})\big]\\
    &= \sum_{j=1}^{h}\Delta(s\oplus w_{<j},w_j)
       \tag{by determinism}
\end{align*}
where the last step uses
$Q^\star(s\oplus w_{<j},w_j)=r_{\mathrm{tok}}(s\oplus w_{<j},w_j)+V^\star(s\oplus w_{<j+1})$.
\end{proof}

The total gap along a rollout is therefore the same however the rollout is cut into patches; only
the per-step decomposition changes.

\subsubsection{Regret is the accumulated gap, under the learner's own occupancy}

\begin{lemma}[No distribution shift]
\label{lem:pdl}
Under \Cref{def:patch}, for every round $i$,
\[
  J^\star-J(\pi_i)\;=\;\mathbb{E}_{\pi_i}\Big[\sum_{t=0}^{H-1}\Delta(s_t,w_t)\Big]\;=\;H\,L(d^{\pi_i},\pi_i),
\]
where $\mathbb{E}_{\pi_i}$ is over trajectories $(s_0,w_0,\dots,s_{H-1},w_{H-1})$ generated by
$\pi_i$ from $s_0\sim\rho$, and $L$ is as in \eqref{eq:loss}.
\end{lemma}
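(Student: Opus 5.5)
The plan is the performance-difference (telescoping) argument along the learner's own trajectory, made short by the determinism of the patch MDP. Fix a round $i$ and a trajectory $(s_0,w_0,\dots,s_{H-1},w_{H-1})$ generated by $\pi_i$ from $s_0\sim\rho$. Here each $w_t$ is a single token, as in \Cref{def:patch}; a multi-token patch is split into tokens by \Cref{lem:patchadd}, which leaves the total gap unchanged. Since $s_{t+1}=s_t\oplus w_t$ deterministically, we have $Q^\star(s_t,w_t)=r_{\mathrm{tok}}(s_t,w_t)+V^\star(s_{t+1})$. This gives the pathwise identity $\Delta(s_t,w_t)=V^\star(s_t)-V^\star(s_{t+1})-r_{\mathrm{tok}}(s_t,w_t)$.

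Next, sum over $t=0,\dots,H-1$. The value terms telescope to $V^\star(s_0)-V^\star(s_H)$. By \Cref{def:patch}, $s_H$ is absorbing with zero value; more carefully, once the terminal symbol is emitted, every later state is absorbing with zero reward and zero value. Those later steps therefore contribute zero gap, zero reward and zero value change, so they do not affect the sum. Hence, pathwise,
\[
\sum_{t=0}^{H-1}\Delta(s_t,w_t)=V^\star(s_0)-\sum_{t=0}^{H-1}r_{\mathrm{tok}}(s_t,w_t).
\]
Taking $\mathbb{E}_{\pi_i}$ and using \eqref{eq:return}, the right side becomes $\mathbb{E}_{s_0\sim\rho}[V^\star(s_0)]-J(\pi_i)=J^\star-J(\pi_i)$. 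This is the first equality.

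For the second equality, write the expectation of the sum as $\sum_{t}\mathbb{E}_{s_t}\,\mathbb{E}_{w\sim\pi_i(\cdot\mid s_t)}[\Delta(s_t,w)]$. Then use the definition $d^{\pi_i}(s)=\tfrac1H\sum_t\Pr_{\pi_i}[s_t=s]$ to rewrite this as $H\,\mathbb{E}_{s\sim d^{\pi_i}}\mathbb{E}_{w\sim\pi_i}[\Delta(s,w)]=H\,L(d^{\pi_i},\pi_i)$ by \eqref{eq:loss}. Exchanging sum and expectation is safe, since $\Delta\in[0,\Delta_{\max}]$ and there are finitely many steps.

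The only delicate step is the bookkeeping at the horizon. It must be checked that the absorbing-state convention makes $V^\star(s_H)=0$, and that it makes post-termination steps contribute zero to both $\Delta$ and the reward, so that a rollout of exactly $H$ decisions is consistent with early termination. Once that is settled, the result is a pure identity. It involves no distribution shift because everything is read under $\pi_i$'s own occupancy, which is the point of the lemma's name.
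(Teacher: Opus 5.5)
Your proposal is correct and follows essentially the same route as the paper. Both arguments telescope $V^\star$ pathwise along $\pi_i$'s own trajectory using $Q^\star(s_t,w_t)=r_{\mathrm{tok}}(s_t,w_t)+V^\star(s_{t+1})$ and $V^\star(s_H)=0$, take expectations, and regroup the $H$ per-step terms through the normalized occupancy $d^{\pi_i}$. Your added check that post-termination absorbing steps contribute zero reward, zero value change and zero gap is a slightly more careful version of what the paper takes directly from \Cref{def:patch}.
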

\begin{proof}
Fix one trajectory $(s_0,w_0,\dots,s_{H-1},w_{H-1})$ generated by $\pi_i$. By \Cref{def:patch} it
has exactly $H$ decisions and $V^\star(s_H)=0$, since $s_H$ is the absorbing state with zero value. Then
\begin{align*}
  V^\star(s_0)-\sum_{t=0}^{H-1}r_{\mathrm{tok}}(s_t,w_t)
    &= \sum_{t=0}^{H-1}\big[V^\star(s_t)-r_{\mathrm{tok}}(s_t,w_t)-V^\star(s_{t+1})\big]
       \tag{telescoping, $V^\star(s_H)=0$}\\
    &= \sum_{t=0}^{H-1}\Delta(s_t,w_t)
       \tag{$Q^\star(s_t,w_t)=r_{\mathrm{tok}}(s_t,w_t)+V^\star(s_{t+1})$}
\end{align*}
Taking $\mathbb{E}_{\pi_i}$ over trajectories turns the left side into
$\mathbb{E}[V^\star(s_0)]-\mathbb{E}\big[\sum_t r_{\mathrm{tok}}(s_t,w_t)\big]=J^\star-J(\pi_i)$, which is the stated
identity. For the second equality,
\begin{align*}
  \mathbb{E}_{\pi_i}\Big[\sum_{t=0}^{H-1}\Delta(s_t,w_t)\Big]
    &= \sum_{t=0}^{H-1}\mathbb{E}\big[\Delta(s_t,w_t)\big]
       \tag{exactly $H$ terms, by \Cref{def:patch}}\\
    &= H\,\mathbb{E}_{s\sim d^{\pi_i}}\mathbb{E}_{w\sim\pi_i}\big[\Delta(s,w)\big]
       \tag{by the definition of $d^{\pi_i}$}\\
    &= H\,L(d^{\pi_i},\pi_i)
\end{align*}
\end{proof}

Both expectations in $L(d^{\pi_i},\pi_i)$ are taken under $\pi_i$'s own occupancy. No change of
measure is performed, so no concentrability coefficient appears.

\subsubsection{Disagreement with the projector dominates the gap}

\begin{lemma}[Reduction]
\label{lem:reduction}
Under \Cref{def:gap}, with the projector idealized as above, for every $\pi\in\Pi$ and every
round $i$,
\[
  L(\tilde d_i,\pi)\;\le\;\Delta_{\max}\,\tilde\varepsilon_i(\pi).
\]
\end{lemma}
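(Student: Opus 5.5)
The plan is to argue pointwise at each corrected prefix $\tilde s$ and then average under $\tilde d_i$; no MDP structure beyond \Cref{def:gap} and the idealized projector is needed.

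Fix a round $i$, a policy $\pi\in\Pi$, and a prefix $\tilde s$ in the support of $\tilde d_i$. I would split the inner expectation $\mathbb{E}_{w\sim\pi(\cdot\mid\tilde s)}[\Delta(\tilde s,w)]$ according to whether $w=w^\star(\tilde s)$ or $w\neq w^\star(\tilde s)$. On the event $w=w^\star(\tilde s)$ the integrand vanishes, because the projector is idealized as an expert, $\Delta(\tilde s,w^\star(\tilde s))=0$. On the complementary event, \Cref{def:gap} gives $\Delta(\tilde s,w)\le\Delta_{\max}$. Together these yield the pointwise bound
\[
  \mathbb{E}_{w\sim\pi(\cdot\mid\tilde s)}\big[\Delta(\tilde s,w)\big]
  \;\le\;\Delta_{\max}\,\Pr_{w\sim\pi(\cdot\mid\tilde s)}\big[w\neq w^\star(\tilde s)\big]
  \;=\;\Delta_{\max}\big(1-\pi(w^\star(\tilde s)\mid\tilde s)\big).
\]

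Next I would take $\mathbb{E}_{\tilde s\sim\tilde d_i}$ of both sides. By linearity and monotonicity of expectation, the left side becomes $L(\tilde d_i,\pi)$ from \eqref{eq:loss}, and the right side becomes $\Delta_{\max}\,\tilde\varepsilon_i(\pi)$ from \Cref{def:regret}. This gives the claim for every $\pi$ and every $i$.

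The only delicate point is measure-theoretic bookkeeping. The pointwise bound needs $\Delta\ge 0$, which holds because $V^\star(s)\ge Q^\star(s,w)$, and it needs $\Delta$ to be bounded so that the expectations are finite. Boundedness follows from $\Delta_{\max}$ being finite, since returns lie in $[0,H]$. I would also remark that actions are indexed per token, consistent with \Cref{lem:patchadd}, so $w^\star(\tilde s)$ is a single token and the event $\{w\neq w^\star(\tilde s)\}$ is well defined. I expect no step to be a real obstacle; the argument is the standard DAgger reduction from cost-to-go to classification error.
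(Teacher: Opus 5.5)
Your proposal is correct and matches the paper's proof essentially step for step: split $\mathbb{E}_{w\sim\pi(\cdot\mid\tilde s)}[\Delta(\tilde s,w)]$ at $w=w^\star(\tilde s)$, drop that term because the projector is idealized, bound the remainder by $\Delta_{\max}\bigl(1-\pi(w^\star(\tilde s)\mid\tilde s)\bigr)$, and average under $\tilde d_i$ using \eqref{eq:loss} and \Cref{def:regret}. One small aside: the pointwise bound does not actually need $\Delta\ge 0$, since $\Delta(\tilde s,w)\le\Delta_{\max}\,\mathbf{1}[w\neq w^\star(\tilde s)]$ follows directly from the two cases.
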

\begin{proof}
Fix a corrected state $\tilde s$ and split the expectation over the projector's token and the rest:
\begin{align*}
  \mathbb{E}_{w\sim\pi}\big[\Delta(\tilde s,w)\big]
    &= \pi(w^\star\mid\tilde s)\,\Delta(\tilde s,w^\star)
       +\sum_{w\neq w^\star}\pi(w\mid\tilde s)\,\Delta(\tilde s,w)\\
    &= \sum_{w\neq w^\star}\pi(w\mid\tilde s)\,\Delta(\tilde s,w)
       \tag{$\Delta(\tilde s,w^\star)=0$, idealized projector}\\
    &\le \Delta_{\max}\sum_{w\neq w^\star}\pi(w\mid\tilde s)
       \tag{by \Cref{def:gap}}\\
    &= \Delta_{\max}\big(1-\pi(w^\star\mid\tilde s)\big)
\end{align*}
Taking $\mathbb{E}_{\tilde s\sim\tilde d_i}$ of both sides and applying \Cref{def:regret} gives the
claim.
\end{proof}

This is the reduction of \citet{ross2011dagger}, applied at the corrected prefixes rather than at
the learner's own states.

\subsubsection{Transfer from the corrected prefixes to the learner's own}

\begin{lemma}[Transfer]
\label{lem:transfer}
Under \Cref{asmp:eta} and \Cref{def:patch}, for every round $i$,
\[
L(d^{\pi_i},\pi_i)\;\le\;L(\tilde d_i,\pi_i)+\eta\,\bar e_i,
  ~\textit{where}~
  \bar e_i\;\coloneqq\;\mathbb{E}_i\Big[\tfrac1H\sum_{t=0}^{H-1}e_t\Big]
\]
and $e_t\coloneqq e(s_t,\tilde s_t)$ is the edit count between the learner's own prefix at decision
$t$ and the corrected prefix at that decision. A round draws one problem, one rollout of $\pi_i$ on
it, and the one correction the projector returns for that rollout; $\mathbb{E}_i$ is the average over
that draw.
\end{lemma}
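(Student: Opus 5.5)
The plan is a coupling argument at matched decision indices, with \Cref{asmp:eta} applied pointwise along each coupled pair of trajectories. We first unfold both losses through the definition of normalized occupancy in \Cref{def:patch}:
\[
  L(d^{\pi_i},\pi_i)=\tfrac1H\sum_{t=0}^{H-1}\mathbb{E}\big[g(s_t)\big],\qquad
  L(\tilde d_i,\pi_i)=\tfrac1H\sum_{t=0}^{H-1}\mathbb{E}\big[g(\tilde s_t)\big],
\]
where $g(s)\coloneqq\mathbb{E}_{w\sim\pi_i(\cdot\mid s)}[\Delta(s,w)]$. Both occupancies average over exactly $H$ decisions. By \Cref{def:patch}, the rollout and its correction are each padded to $H$ decisions with absorbing states. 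So the $t$-th learner prefix $s_t$ and the $t$-th corrected prefix $\tilde s_t$ are both well defined on the same draw.

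Next we put the two expectations on a common probability space. This is the draw described in the statement: one problem $x\sim\rho$, one rollout of $\pi_i$ yielding $(s_t)_t$, and the projector's correction of that rollout yielding $(\tilde s_t)_t$. Both sides are then expectations $\mathbb{E}_i$ over this single draw. The marginal of $s_t$ is exactly $\Pr_{\pi_i}[s_t=\cdot]$, and the marginal of $\tilde s_t$ is exactly the law defining $\tilde d_i$. With this coupling,
\[
  L(d^{\pi_i},\pi_i)-L(\tilde d_i,\pi_i)=\mathbb{E}_i\Big[\tfrac1H\sum_{t=0}^{H-1}\big(g(s_t)-g(\tilde s_t)\big)\Big].
\]
For a correct rollout, which the projector leaves untouched, $\tilde s_t=s_t$, so $e_t=0$ and the term vanishes.

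We then bound each summand pointwise by \Cref{asmp:eta} with $\pi=\pi_i\in\Pi$:
\[
  g(s_t)-g(\tilde s_t)\le\big|g(s_t)-g(\tilde s_t)\big|\le\eta\,e(s_t,\tilde s_t)=\eta\,e_t.
\]
Averaging over $t$ and taking $\mathbb{E}_i$ gives $\eta\,\bar e_i$, which is the claim.

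The main obstacle is justifying the coupling at matched indices. The key point is that $\tilde d_i$ is built from the correction of the very rollout that $\pi_i$ produced, so both sequences live on one draw and no change of measure is needed. A second point is ensuring that absorbing padding is treated consistently. Where one history has terminated, $g$ at the absorbing state is zero, and the padding tokens are counted in $e_t$ as in \Cref{def:edit}, so \Cref{asmp:eta} still applies to those pairs. No other property of the projector, such as optimality of $w^\star$, is needed for this step.
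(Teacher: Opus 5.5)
Your proposal is correct and matches the paper's proof: both pair the learner's prefix $s_t$ with the corrected prefix $\tilde s_t$ on the single draw of a round, write each loss as an average of exactly $H$ per-decision terms via \Cref{def:patch}, and apply \Cref{asmp:eta} with $\pi=\pi_i$ term by term before averaging to get $\eta\,\bar e_i$. Your extra remarks on untouched correct rollouts and on absorbing padding are harmless but not needed, since the assumption already holds for every pair of histories.
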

\begin{proof}
Pair the two distributions decision by decision: the $t$-th term pairs the learner's own prefix
$s_t$ with the corrected prefix $\tilde s_t$, which differ by $e_t$ tokens (\Cref{def:edit}). By \Cref{def:patch} both sides of \eqref{eq:loss} are
averages of exactly $H$ such terms, so
\begin{align*}
  L(d^{\pi_i},\pi_i)
    &= \tfrac1H\sum_{t=0}^{H-1}\mathbb{E}_i\,\mathbb{E}_{w\sim\pi_i(\cdot\mid s_t)}\big[\Delta(s_t,w)\big]
       \tag{by \Cref{def:patch}}\\
    &\le \tfrac1H\sum_{t=0}^{H-1}\mathbb{E}_i\Big[\mathbb{E}_{w\sim\pi_i(\cdot\mid\tilde s_t)}\big[\Delta(\tilde s_t,w)\big]+\eta\,e_t\Big]
       \tag{by \Cref{asmp:eta}}\\
    &= L(\tilde d_i,\pi_i)+\eta\,\mathbb{E}_i\Big[\tfrac1H\sum_{t=0}^{H-1}e_t\Big]\\
    &= L(\tilde d_i,\pi_i)+\eta\,\bar e_i
\end{align*}
\end{proof}

\subsubsection{How much edited text a rollout carries}

\begin{lemma}[Edit budget]
\label{lem:budget}
Let $y$ be the learner's rollout in round $i$ and $\tilde y$ its correction, each of $H$ decisions by
\Cref{def:patch}, and let $m=e(y,\tilde y)$ be the edit size, with $e_t$ as in
\Cref{lem:transfer}. Then $e_t\le m$ at every decision
$0\le t<H$, and consequently
\[
  \sum_{t=0}^{H-1}e_t\;\le\;m\,H .
\]
\end{lemma}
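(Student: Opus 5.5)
The plan is to compare the two prefixes at decision $t$ with the whole-rollout alignment, using the fact that the edit count between two sequences is $|s|+|s'|-2|\mathrm{LCS}(s,s')|$. Write $s_t=(x,y_{1:t})$ and $\tilde s_t=(x,\tilde y_{1:t})$; the problem $x$ is shared, so only the generated tokens enter $e_t$. By \Cref{def:patch} both rollouts are padded to exactly $H$ decisions with the absorbing symbol, so $|y_{1:t}|=|\tilde y_{1:t}|=t$ and
\[
  e_t \;=\; 2t-2\,\big|\mathrm{LCS}(y_{1:t},\tilde y_{1:t})\big| .
\]
Similarly $m=e(y,\tilde y)=2H-2\,|\mathrm{LCS}(y,\tilde y)|$. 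The bound $e_t\le m$ is therefore equivalent to
\[
  t-|\mathrm{LCS}(y_{1:t},\tilde y_{1:t})|\;\le\;H-|\mathrm{LCS}(y,\tilde y)| .
\]
In words, a prefix of length $t$ cannot have more unmatched tokens than the full sequences have.

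To prove this inequality, fix an optimal common subsequence of $y$ and $\tilde y$. It is realised by a monotone matching of $\ell=|\mathrm{LCS}(y,\tilde y)|$ index pairs $(a_k,b_k)$ with $y_{a_k}=\tilde y_{b_k}$. Restrict this matching to the pairs with $a_k\le t$ and $b_k\le t$. This restriction is a common subsequence of the two prefixes, so $|\mathrm{LCS}(y_{1:t},\tilde y_{1:t})|$ is at least the number of retained pairs.

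Every discarded pair has $a_k>t$ or $b_k>t$, and there are at most $H-t$ indices beyond $t$ on each side. The step I expect to be the main obstacle is that a naive count only bounds the discarded pairs by $2(H-t)$, which gives $e_t\le m+2(H-t)$ and is too weak. The way around this is to count per side. Let $p$ be the number of retained pairs, let $u$ be the number of positions of $y_{1:t}$ left unmatched by the full matching, and let $v$ be the analogous count for $\tilde y_{1:t}$. Each discarded pair with $a_k\le t$ has $b_k>t$. Monotonicity of the matching prevents crossing, so such pairs and the symmetric ones (with $b_k\le t<a_k$) cannot both occur. Suppose, without loss of generality, that only the first kind occurs, and call their number $q$. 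Then $t=p+q+u$ and $t=p+v$ on the two sides. Combining these,
\[
  2t-2p \;=\; q+u+v \;\le\; (q+u+v)+(\text{unmatched tokens beyond } t) \;=\; m ,
\]
since a matched $y$-index $a_k\le t$ whose partner $b_k$ lies beyond $t$ contributes nothing unmatched. More carefully, I would verify that the total number of unmatched indices $2H-2\ell$ on both full sequences is at least $u+v+q$. This holds because the $q$ pairs with $b_k>t$ force, by the length equality of the prefixes, at least $q$ unmatched indices of $\tilde y_{1:t}$ to be counted in $v$ beyond $u$, or else unmatched tail indices to exist. Settling this counting cleanly is the crux. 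If the direct argument gets tangled, I would fall back on a simpler route: deleting one token from either sequence changes the LCS by at most one, and then induct on $H-t$, truncating one position from both sequences at a time and checking that $e$ does not increase.

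Once $e_t\le m$ holds for every $0\le t<H$, summing over the $H$ decisions gives $\sum_{t=0}^{H-1}e_t\le mH$ immediately. This is exactly the quantity that feeds $\bar e_i\le m$ into \Cref{lem:transfer}.
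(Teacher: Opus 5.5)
Your setup matches the paper's. You fix a maximum monotone matching of $y$ and $\tilde y$ and keep the $p$ pairs with $a_k\le t$ and $b_k\le t$, so that $e_t\le 2(t-p)$. You also correctly isolate the crux: the discarded pairs must be bounded by $H-t$, not $2(H-t)$. The paper settles this by noting that $\max(a_k,b_k)$ strictly increases along the matching. The discarded pairs therefore take distinct values of $\max(a_k,b_k)$ in $\{t+1,\dots,H\}$, so there are at most $H-t$ of them.

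Your no-crossing observation is a genuinely different way in, and it works, but as written you do not close it. The displayed chain ending in ``$=m$'' overcounts by $q$: those $q$ indices are matched, so they are not among the $2H-2\ell$ unmatched ones. The justification you then offer for $u+v+q\le 2H-2\ell$ (``or else unmatched tail indices exist'') is not yet an argument.

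The finish is one line from what you already derived. On the side with no outgoing pairs (here $\tilde y$) you have $t=p+v$, so $e_t\le 2(t-p)=2v$. Since $v$ counts unmatched positions of $\tilde y$ that lie in its prefix, $v\le H-\ell$, the total number of unmatched positions of $\tilde y$. Hence $e_t\le 2(H-\ell)=m$. Equivalently, your two counts give $u+q=v\le H-\ell$, which is exactly your target $u+v+q\le 2(H-\ell)$.

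Compared with the paper, your route needs a case split on which side has outgoing pairs. In return it gives an exact accounting, $t-p=v$, so $m-2(t-p)$ is exactly twice the number of unmatched positions in that side's tail. The paper's injection via $\max(a_k,b_k)$ avoids the case split.

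Your fallback would not work as stated. Truncating both sequences by one position is two single deletions. ``Each deletion lowers the LCS by at most one'' then only bounds the total drop by two, which lets $e$ increase by $2$ per step. The induction needs that removing the last position of both sequences together lowers the LCS by at most one. That is again the no-crossing fact: pairs $(n,j)$ and $(i,n)$ with $i,j<n$ would cross, so at most one matched pair touches position $n$. Alternatively, you can get it from the standard LCS recurrence.
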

\begin{proof}
Let $u$ index the tokens of $y$ and $v$ those of $\tilde y$. A \emph{monotone matching} between them
is a set $\Phi$ of pairs $(u,v)$ with $y_u=\tilde y_v$, whose order agrees on both sides:
$u<u'\iff v<v'$ for distinct $(u,v),(u',v')\in\Phi$. The matched tokens read in order form a common
subsequence of $y$ and $\tilde y$, so a largest matching has $|\Phi|=|\mathrm{LCS}(y,\tilde y)|$.
Take such a $\Phi$. Since $y$ and $\tilde y$ each have $H$ tokens, $m=2H-2|\Phi|$ by
\Cref{def:edit}. For a decision $t$ put $\Phi_t=\{(u,v)\in\Phi:u\le t,\ v\le t\}$, the pairs
falling inside both prefixes, whose matched tokens form a common subsequence of $y_{1:t}$ and
$\tilde y_{1:t}$.
Both coordinates increase along $\Phi$, so $\max(u,v)$ does too, strictly; every pair outside
$\Phi_t$ has $\max(u,v)\ge t+1$, and distinct pairs take distinct values there, so
\begin{equation}
\label{eq:cut}
  \big|\Phi\setminus\Phi_t\big|\;\le\;H-t .
\end{equation}
Therefore
\begin{align*}
  e_t &= 2t-2\,\big|\mathrm{LCS}(y_{1:t},\tilde y_{1:t})\big|
       \tag{by \Cref{def:edit}}\\
      &\le 2t-2\,|\Phi_t|
       \tag{$\Phi_t$ is a matching}\\
      &= 2t-2\,|\Phi|+2\,\big|\Phi\setminus\Phi_t\big|\\
      &\le 2t-2\,|\Phi|+2\,(H-t)
       \tag{by \eqref{eq:cut}}\\
      &= 2H-2\,|\Phi|\;=\;m
       \tag{by \Cref{def:edit}}
\end{align*}
Summing over the $H$ decisions gives the claim.
\end{proof}

An insertion early in the answer shifts every later prefix, and a count of mismatched positions would
read that shift as a rewrite of the whole suffix. \Cref{def:edit} charges it at the size of the
insertion instead, which is why the drift of \Cref{thm:single} scales with $m$ and not with $H$.

\subsubsection{The bound}

\Cref{thm:single} of \Cref{sec:theory} states that under \Cref{def:patch,def:gap}, \Cref{asmp:eta},
an idealized projector and $m$ the mean edit size over the rounds,
\[
  \min_{i\le M}\big[J^\star-J(\pi_i)\big]
  \;\le\;\underbrace{H\,\Delta_{\max}\,\big(\tilde\varepsilon+\tilde\gamma_M\big)}_{\text{fit and optimization}}
  \;+\;\underbrace{\eta\,m\,H}_{\text{corrected-prefix drift}}.
\]

\begin{proof}[Proof of \Cref{thm:single}]

\begin{align*}
    \min_{i\le M}[J^\star-J(\pi_i)] &\le \tfrac1M\sum_{i=1}^{M}[J^\star-J(\pi_i)] \\
    &\le \tfrac{H}{M}\sum_i L(d^{\pi_i},\pi_i) \tag{by \Cref{lem:pdl}}\\
    &\le \tfrac{H}{M}\sum_i L(\tilde d_i,\pi_i)+\eta H\bar e \tag{by \Cref{lem:transfer}} \\
    & \le H\Delta_{\max}\cdot\tfrac1M\sum_i\tilde\varepsilon_i(\pi_i) + \eta H\bar e  \tag{by \Cref{lem:reduction}}\\
    & \le H\Delta_{\max}(\tilde\varepsilon+\tilde\gamma_M) + \eta H\bar e \tag{by \Cref{def:regret}}\\
    & \le H\Delta_{\max}(\tilde\varepsilon+\tilde\gamma_M) + \eta\,m\,H \tag{by \Cref{lem:budget}}
\end{align*}
where $\bar e=\tfrac1M\sum_i\bar e_i$. \Cref{lem:budget} bounds round $i$ by its own edit size,
$H\bar e_i=\mathbb{E}_i\big[\sum_{t=0}^{H-1}e_t\big]\le m_i\,H$, so
$\bar e\le\tfrac1M\sum_i m_i=m$; no bound on the largest round is needed.
\end{proof}

\subsubsection{Rate in the number of rounds}

\begin{proposition}[Convex reference rate]
\label{prop:rate}
Let $\Pi$ and $\Theta$ be as in \Cref{app:setup}, and suppose in addition that $\theta\mapsto\tilde\varepsilon_i(\pi_\theta)$
is convex on $\Theta$ for every $i$, with gradient bound
$G=\max_i\sup_{\theta\in\Theta}\lVert\nabla_\theta\tilde\varepsilon_i(\pi_\theta)\rVert$. Then \emph{projected} online
gradient descent on $\Theta$ with step size $\alpha=D/(G\sqrt M)$ attains $\tilde\gamma_M\le DG/\sqrt M$.
\end{proposition}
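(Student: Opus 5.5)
This is the standard reduction to online convex optimization with projected online gradient descent (Zinkevich). Each round $i$ reveals the loss $f_i(\theta)\coloneqq\tilde\varepsilon_i(\pi_\theta)$. It is revealed after $\pi_i=\pi_{\theta_i}$ is fixed, because $\tilde d_i$ is built from $\pi_i$'s own rollout. By assumption $f_i$ is convex on the convex set $\Theta$, with $\lVert\nabla f_i\rVert\le G$.

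The algorithm is
\[
\theta_{i+1}=\Pi_\Theta\big(\theta_i-\alpha\nabla f_i(\theta_i)\big),
\]
where $\Pi_\Theta$ is Euclidean projection onto $\Theta$. Since $\Pi$ is parametrized by $\Theta$, we have $\min_{\pi\in\Pi}\sum_i\tilde\varepsilon_i(\pi)=\min_{\theta\in\Theta}\sum_i f_i(\theta)$. Hence $M\tilde\gamma_M$ is exactly the online regret $\sum_i f_i(\theta_i)-\min_{\theta}\sum_i f_i(\theta)$ by \Cref{def:regret}. If the minimum is not attained, we work with an infimum and a limiting argument.

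Fix any comparator $\theta^\ast\in\Theta$ and proceed in three steps.

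\textbf{Step 1 (linearize).} Convexity gives $f_i(\theta_i)-f_i(\theta^\ast)\le\langle g_i,\theta_i-\theta^\ast\rangle$, where $g_i=\nabla f_i(\theta_i)$.

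\textbf{Step 2 (one-step inequality).} Projection onto a convex set is nonexpansive toward points of the set. Therefore
\[
\lVert\theta_{i+1}-\theta^\ast\rVert^2\le\lVert\theta_i-\theta^\ast\rVert^2-2\alpha\langle g_i,\theta_i-\theta^\ast\rangle+\alpha^2\lVert g_i\rVert^2 ,
\]
which rearranges to
\[
\langle g_i,\theta_i-\theta^\ast\rangle\le\frac{\lVert\theta_i-\theta^\ast\rVert^2-\lVert\theta_{i+1}-\theta^\ast\rVert^2}{2\alpha}+\frac{\alpha G^2}{2}.
\]

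\textbf{Step 3 (sum and tune).} Summing over $i=1,\dots,M$ telescopes the distance terms. Bounding $\lVert\theta_1-\theta^\ast\rVert\le D$ by the diameter gives a regret of at most
\[
\frac{D^2}{2\alpha}+\frac{\alpha G^2M}{2}.
\]
With $\alpha=D/(G\sqrt M)$ this equals $DG\sqrt M$. Dividing by $M$ yields $\tilde\gamma_M\le DG/\sqrt M$.

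The computations are routine. The main point needing care is a modeling one: checking that the sequence $f_i$ is legitimately an adversarial online sequence. Each $f_i$ depends on $\pi_i$ through $\tilde d_i$, but it is fixed before the update. The comparator is evaluated against those same frozen $f_i$. The OGD regret bound holds for arbitrary, adaptively chosen convex losses, so this dependence is harmless.

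If a round's rollout is random, the guarantee should be read per realized sequence, or in expectation with stochastic gradients bounded by $G$. I would state the deterministic version, matching \Cref{def:regret}.

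One subtlety worth a remark is differentiability: $1-\pi_\theta(w^\star\mid\tilde s)$ is assumed convex. If it is only convex and not differentiable, subgradients of norm at most $G$ work verbatim in Step 1.
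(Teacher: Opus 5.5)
Your proof is correct and follows the same route as the paper's: linearize via convexity, telescope the potential $\tfrac{1}{2\alpha}\lVert\theta_i-\theta^\star\rVert^2$ using nonexpansiveness of the projection, and tune $\alpha=D/(G\sqrt M)$ to get $DG\sqrt M$. Your extra remarks are sound refinements that the paper leaves implicit. First, the losses $\tilde\varepsilon_i$ may be chosen adaptively through $\tilde d_i$. Second, since the bound holds against every comparator in $\Theta$, the minimum in \Cref{def:regret} need not be attained.
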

\begin{proof}
Since $\Pi$ is parameterized by $\Theta$, the minimum over $\pi\in\Pi$ in \Cref{def:regret} is a minimum over
$\theta\in\Theta$. Write $\theta^\star\in\arg\min_{\theta\in\Theta}\sum_i
\tilde\varepsilon_i(\pi_\theta)$ for a minimiser, so that the comparator in the definition and the
comparator in the analysis are the same point. Then, with
$g_i=\nabla_\theta\tilde\varepsilon_i(\pi_{\theta_i})$,
\begin{align*}
  M\tilde\gamma_M
    &= \sum_i\tilde\varepsilon_i(\pi_{\theta_i})-\sum_i\tilde\varepsilon_i(\pi_{\theta^\star})
       \tag{by \Cref{def:regret}}\\
    &\le \sum_i g_i^{\top}(\theta_i-\theta^\star)
       \tag{convexity}\\
    &\le \tfrac{D^{2}}{2\alpha}+\tfrac{\alpha}{2}MG^{2}
       \tag{potential $\tfrac{1}{2\alpha}\lVert\theta_i-\theta^\star\rVert^{2}$ telescopes}\\
    &\le DG\sqrt M
       \tag{at $\alpha=D/(G\sqrt M)$}
\end{align*}
where $D$ is the Euclidean diameter of $\Theta$ and $G$ the gradient bound.
Dividing by $M$ gives $\tilde\gamma_M\le DG/\sqrt M$.
\end{proof}

\begin{remark}[Scope of the rate]
\label{rem:nonconvex}
\Cref{prop:rate} is the standard online-convex-optimization guarantee, stated to show \emph{how}
regret decays in the number of rounds. Its convexity hypothesis does not hold for a neural policy.
\Cref{thm:single} uses only $\tilde\gamma_M$ as defined, so nothing below depends on
\Cref{prop:rate}; the $O(1/\sqrt M)$ form should be read as a convex reference point.
\end{remark}

\subsubsection{Reading the bound}

\begin{corollary}[Edit size affects the horizon dependence]
\label{cor:regimes}
Under the hypotheses of \Cref{thm:single}, the drift term has two extremes:
\begin{enumerate}[label=(\alph*),leftmargin=2.2em,itemsep=2pt,topsep=2pt]
  \item minimal edits, $m=O(1)$, give drift $O(\eta H)$;
  \item a correction rewriting the whole answer, $m=\Theta(H)$, gives drift $O(\eta H^{2})$.
\end{enumerate}
\end{corollary}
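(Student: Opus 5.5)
The corollary follows by substituting each regime's edit size into the drift term $\eta\,m\,H$ of \Cref{thm:single}. Nothing new needs to be proved about the MDP. The theorem already isolates the drift as exactly $\eta\,m\,H$, with $m$ the mean edit size over the rounds, and \Cref{lem:budget} is what makes that term linear in $m$ rather than in $H$. So the plan is to fix the regime, bound $m$, and multiply by $\eta H$.

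For (a), I will take the hypothesis $m=O(1)$ to mean that the per-round edit sizes $m_i$ are bounded by a constant $c$ independent of $H$. Then the mean $m=\tfrac1M\sum_i m_i\le c$. In the proof of \Cref{thm:single}, the averaged drift $\bar e$ is controlled through $\bar e\le \tfrac1M\sum_i m_i$. Together these give drift at most $\eta c H=O(\eta H)$.

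For (b), I will use the upper bound available from \Cref{def:edit}. Both rollouts are compared over $H$ decisions, absorbing padding included, so $e(y,\tilde y)=2H-2|\mathrm{LCS}|\le 2H$. Hence every $m_i\le 2H$ and $m=\Theta(H)$ at worst. Substituting gives drift at most $\eta\cdot 2H\cdot H=O(\eta H^2)$.

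The only point needing care is to state precisely that the corollary is an upper bound on the drift term, not on the regret itself. The claim ``regret $O(H)$'' in the main text additionally requires the fit term $H\Delta_{\max}(\tilde\varepsilon+\tilde\gamma_M)$ to be $O(H)$, which holds whenever $\tilde\varepsilon$ and $\tilde\gamma_M$ do not grow with $H$. I will flag this in one sentence rather than claim it as part of the corollary.

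A second subtlety, also flagged rather than proved, is that in (b) the $\Theta(H)$ rate is attained only if the rewrite actually has LCS of size $o(H)$. The lower side of $\Theta$ is therefore a property of the regime, not of the bound.
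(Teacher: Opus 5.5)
Your proposal is correct and takes the same route as the paper: the corollary comes from substituting $m=O(1)$ and $m=\Theta(H)$ into the drift term $\eta\,m\,H$ of \Cref{thm:single}, and your extra observation that $m\le 2H$ always (so the $O(\eta H^{2})$ ceiling holds unconditionally) is a harmless strengthening. Two small slips, both in your asides rather than the argument: $m=\Theta(H)$ requires $H-|\mathrm{LCS}|=\Omega(H)$, for which an LCS of size $o(H)$ is sufficient but not necessary; and reading the fit term as $O(H)$ also requires treating $\Delta_{\max}$ as an $H$-independent constant, as the paper does, since in this MDP it can be of order $H$.
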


The quantity being bounded is itself of order $H$, since the return lies in $[0,H]$ by
\Cref{def:patch}. Read against that scale, case (a) puts the drift at the same order as the fit
term and the bound is informative, while case (b) puts it an order above the trivial bound
$J^\star-J\le H$ and the bound says nothing. The horizon dependence is therefore linear at one end
and quadratic at the other, and what selects between them is the edit we choose rather than any
property of the environment.

\subsubsection{Scope of the analysis}
\label{app:scope}

\begin{remark}[The analysis assumes a dense per-token reward]
\label{rem:dense}
\Cref{def:patch} makes the reward token-additive with $r_{\mathrm{tok}}\in[0,1]$, so a rollout's
return is at most $H$ and the trivial bound on $J^\star-J(\pi)$ is $H$. The experiments verify only
the final answer, a single $0$--$1$ outcome, under which the trivial bound is $1$. The drift term
$\eta mH$ is therefore informative only where $\eta m$ is small: the guarantee is a statement about
the dense-reward model, and the per-token credit it assumes is not what the verifier supplies.
\end{remark}

\begin{remark}[A statement about horizon dependence, not about algorithm identity]
\label{rem:dagger}
Setting $e_t\equiv0$, which corresponds to a projector queried at every mistake on the learner's own
prefix so that no edited token enters a conditioning prefix, removes the drift and leaves
$H\Delta_{\max}(\tilde\varepsilon+\tilde\gamma_M)$, the shape of the DAgger guarantee
\citep{ross2011dagger}. The linear--quadratic dichotomy of \Cref{cor:regimes} is the same dichotomy
that separates DAgger from behavior cloning. It does not follow that projected distillation
\emph{is} either algorithm: minimal-edit projection does not query a projector at every visited
state and carries a drift term DAgger does not have, and a full rewrite is not behavior cloning for
the reason in \Cref{rem:eps-free}. The correspondence is between rates.
\end{remark}

\begin{remark}[The drift does not vanish under a perfect fit]
\label{rem:eps-free}
In \citet{ross2011dagger} the quadratic term for behavior cloning scales with the fit error, so a
learner that imitates perfectly suffers no compounding. The drift $\eta mH$ of \Cref{thm:single}
carries no factor of $\tilde\varepsilon$. A learner that matches the projector exactly at every
corrected prefix still pays it, because the mismatch between $\tilde d_i$ and $d^{\pi_i}$ is
exogenous, fixed by the edit that was chosen, rather than a consequence of imperfect fitting.
\end{remark}

\begin{remark}[The analyzed loss is not the deployed loss]
\label{rem:surrogate}
\Cref{thm:single} is proved for the $0$--$1$ disagreement of \Cref{def:regret}. Three consequences.

\emph{First}, no gradient method minimizes a $0$--$1$ loss. Following \citet{ross2011dagger} we state
the guarantee for the $0$--$1$ loss and note that any convex upper bound may be substituted, with
$\tilde\varepsilon$ and $\tilde\gamma_M$ read as the floor and regret of that surrogate; the bound
holds verbatim with a larger $\tilde\varepsilon$.

\emph{Second}, cross-entropy on the correction is such a bound. Since $\log p\le p-1$ for
$p\in(0,1]$, we have $1-\pi(w^\star\mid\tilde s)\le-\log\pi(w^\star\mid\tilde s)$, so
\Cref{thm:single} covers maximum-likelihood fine-tuning on the projection.

\emph{Third}, the deployed objective is not covered. The recipe minimizes a diff-localized contrast
on the changed spans together with a self-SFT term on the learner's own correct rollouts. A
preference contrast is not an upper bound on the disagreement with the projector: it can be reduced
by lowering the likelihood of $y^-$ without raising that of $y^+$, so a small contrast loss does not
imply a small $\tilde\varepsilon$. The analysis characterizes the \emph{target}: how far the
correction lies from the attempt, the quantity the edit-size experiment varies and the one both
objectives share. It does not characterize the contrast.
\end{remark}

\begin{corollary}[A projector that is not optimal]
\label{cor:imperfect}
Suppose the projector's token forfeits $\delta(\tilde s)\coloneqq\Delta(\tilde s,w^\star(\tilde s))\ge0$
rather than nothing, and let $\bar\delta$ be its average over the corrected prefixes. Then, under the
remaining hypotheses of \Cref{thm:single},
\[
  \min_{i\le M}\big[J^\star-J(\pi_i)\big]
  \;\le\;H\bar\delta\;+\;H\,\Delta_{\max}\,(\tilde\varepsilon+\tilde\gamma_M)\;+\;\eta\,m\,H .
\]
\end{corollary}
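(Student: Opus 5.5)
The plan is to rerun the chain used for \Cref{thm:single} and change only the single step that relied on the projector being idealized, namely \Cref{lem:reduction}. \Cref{lem:pdl}, \Cref{lem:transfer} and \Cref{lem:budget} never use $\Delta(\tilde s,w^\star)=0$. So the averaging over rounds, the identity $J^\star-J(\pi_i)=H\,L(d^{\pi_i},\pi_i)$, the transfer to corrected prefixes at cost $\eta\bar e_i$, and the bound $\bar e\le m$ all carry over unchanged. The one new ingredient is a perturbed reduction of the form
\[
  L(\tilde d_i,\pi)\;\le\;\bar\delta_i+\Delta_{\max}\,\tilde\varepsilon_i(\pi),
\]
where $\bar\delta_i=\mathbb{E}_{\tilde s\sim\tilde d_i}[\delta(\tilde s)]$.

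To prove this inequality, fix a corrected state $\tilde s$ and split $\mathbb{E}_{w\sim\pi}[\Delta(\tilde s,w)]$ into the term from the projector's token and the remainder, exactly as in \Cref{lem:reduction}. The projector's term is $\pi(w^\star\mid\tilde s)\,\delta(\tilde s)$, which is at most $\delta(\tilde s)$. The remainder is at most $\Delta_{\max}\,(1-\pi(w^\star\mid\tilde s))$ by \Cref{def:gap}. Taking the expectation over $\tilde d_i$ then gives the displayed bound.

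Next, substitute this into the chain in place of \Cref{lem:reduction}. Averaging over $i$ gives
\[
  \tfrac HM\sum_i L(\tilde d_i,\pi_i)\le H\,\tfrac1M\sum_i\bar\delta_i+H\Delta_{\max}(\tilde\varepsilon+\tilde\gamma_M).
\]
I would read $\bar\delta$ in the statement as $\tfrac1M\sum_i\bar\delta_i$, the average of $\delta$ over the corrected-prefix occupancies across rounds, and say so explicitly. The drift term $\eta mH$ then follows word for word from \Cref{lem:transfer} and \Cref{lem:budget}.

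The main obstacle is interpretive rather than technical. Once $\delta\ne0$, $\tilde\varepsilon$ still measures disagreement with the projector's token, as \Cref{def:regret} notes, so nothing else in the chain needs to change. I would only check that \Cref{asmp:eta} and the transfer lemma are applied to $\pi_i$, not to the projector, so the projector's suboptimality cannot enter through the drift term.

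A slightly sharper bound, $\mathbb{E}[\pi(w^\star\mid\tilde s)\delta(\tilde s)]$ in place of $\bar\delta$, is available. I would drop it in favour of the simpler form stated in \Cref{cor:imperfect}.
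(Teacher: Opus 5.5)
Your proposal is correct and follows the paper's own proof. The only change to the chain of \Cref{thm:single} is the perturbed reduction $L(\tilde d_i,\pi)\le\bar\delta_i+\Delta_{\max}\tilde\varepsilon_i(\pi)$, obtained by keeping the term $\pi(w^\star\mid\tilde s)\,\delta(\tilde s)\le\delta(\tilde s)$ that \Cref{lem:reduction} dropped, and everything else in the chain is reused unchanged. Your per-round $\bar\delta_i$, together with the reading $\bar\delta=\tfrac1M\sum_i\bar\delta_i$, makes precise a point the paper's one-line proof leaves implicit, since it writes a round-independent $\bar\delta$ inside the per-round lemma.
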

\begin{proof}
The term $\pi(w^\star\mid\tilde s)\,\Delta(\tilde s,w^\star)$ dropped in \Cref{lem:reduction} no
longer vanishes, so that lemma reads
$L(\tilde d_i,\pi)\le\bar\delta+\Delta_{\max}\tilde\varepsilon_i(\pi)$. Substituting it at the
\Cref{lem:reduction} step of \Cref{thm:single} adds $H\bar\delta$, every term there already carrying
the factor $H$ from \Cref{lem:pdl}, and leaves the rest of the chain unchanged.
\end{proof}
The two error sources stay separate: $\bar\delta$ measures the projector and shrinks only with a
better one, while $\tilde\varepsilon$ measures the policy class and shrinks with a larger one.

\end{document}